\newcolumntype{M}[1]{>{\centering\arraybackslash}m{#1}}
\newcolumntype{N}{@{}m{0pt}@{}}
\newcommand{\independent}{\perp\mkern-9.5mu\perp}
\newcommand{\notindependent}{\centernot{\independent}}
\newcommand{\Mb}{\textit{Mb}}
\DeclarePairedDelimiter{\floor}{\lfloor}{\rfloor}
\begin{document}

\title{A Recursive Markov Boundary-Based Approach to Causal Structure Learning}

\author{\name Ehsan Mokhtarian\thanks{Equal contribution.} \email ehsan.mokhtarian@epfl.ch \\
       \addr Department of Computer and Communication Science\\
       EPFL, Lausanne, Switzerland
       \AND
       \name Sina Akbari\footnotemark[1] \email sina.akbari@epfl.ch \\
       \addr Department of Computer and Communication Science\\
       EPFL, Lausanne, Switzerland
       \AND
       \name AmirEmad Ghassami \email aghassa1@jhu.edu \\
       \addr Department of Computer Science\\
       Johns Hopkins University, Baltimore, USA
       \AND
       \name Negar Kiyavash \email negar.kiyavash@epfl.ch \\
       \addr College of Management of Technology \\
       EPFL, Lausanne, Switzerland}
\maketitle

\begin{abstract}
    Constraint-based methods are one of the main approaches for causal structure learning that are particularly valued as they are asymptotically guaranteed to find a structure that is Markov equivalent to the causal graph of the system.
    On the other hand, they may require an exponentially large number of conditional independence (CI) tests in the number of variables of the system. 
    In this paper, we propose a novel recursive constraint-based method for causal structure learning that significantly reduces the required number of CI tests compared to the existing literature. 
    The idea of the proposed approach is to use Markov boundary information to identify a specific variable that can be removed from the set of variables without affecting the statistical dependencies among the other variables. Having identified such a variable, we discover its neighborhood, remove that variable from the set of variables, and recursively learn the causal structure over the remaining variables. 
    We further provide a lower bound on the number of CI tests required by any constraint-based method. Comparing this lower bound to our achievable bound demonstrates the efficiency of the proposed approach. 
    Our experimental results show that the proposed algorithm outperforms state-of-the-art both on synthetic and real-world structures.
\end{abstract}

\begin{keywords}
    Causal Discovery, Recursive Causal Structure Learning, Bayesian Networks
\end{keywords}

\section{Introduction}\label{sec: intro}
    Learning the causal structure among the variables of the system under study is one of the main goals in many fields of science. 
    This task has also gained significant attention in the recent three decades in artificial intelligence because it has become evident that the knowledge of causal structure can significantly improve the prediction power and remove systematic biases in inference \citep{pearl2009causality,spirtes2000causation}.
    
    One of the main assumptions in causal structure learning is that the ground truth structure is a directed acyclic graph (DAG). 
    There are two main classes of methods in the literature for learning the causal DAG, namely constraint-based methods and score-based methods \citep{zhang2017learning}. 
    The idea in constraint-based methods is to find the most consistent structure with the conditional independence relations in the data. 
    The most well-known constraint-based method is the PC algorithm \citep{spirtes2000causation}. 
    In score-based causal structure learning, the idea is to search for a structure that maximizes a score function, commonly chosen to be a regularized likelihood function. 
    The search for the optimum structure is usually performed via a greedy search \citep{heckerman1995learning,chickering2002optimal,teyssier2012ordering, solus2017consistency}. 
    There are also hybrid methods that combine constraint-based and score-based methods \citep{tsamardinos2006max}, as well as methods that require specific assumptions on the data generating modules, such as requiring linearity and non-Gaussianity of the noises \citep{shimizu2006linear} or additivity of the noise with specific types of non-linearity \citep{hoyer2009nonlinear}.
    
    Constraint-based methods are particularly valued as they do not require any assumptions on the functional form of the causal modules and recover a structure that is Markov equivalent to the causal graph of the system.
    One of the caveats to these methods is that in the worst case, the number of conditional independence (CI) tests required can be exponentially large in the number of variables. 
    Several efforts in the literature have tried to reduce the number of required CI tests and improve the performance of constraint-based methods \citep{margaritis1999bayesian,kalisch2007estimating, pellet2008using,xie2008recursive, zhang2019recursively}.
    
    \citep{spirtes2000causation} proposed the so-called PC algorithm with complexity $\mathcal{O}(p^\Delta)$, where $p$ and $\Delta$ denote the number of variables and the maximum degree of the underlying graph, respectively. 
    \citep{margaritis1999bayesian} and \citep{pellet2008using} proposed using Markov boundary information to reduce the number of required CI tests. 
    The former proposed the GS method with complexity $\mathcal{O}(p^2+ p \alpha^2 2^\alpha)$, and the latter proposed the CS method with complexity $\mathcal{O}(p^22^\alpha)$, where $\alpha$ is the maximum size of the Markov boundary among the variables. 
    To the best of our knowledge, these are the state-of-the-art achievable bounds in the literature.
    
    In this paper, we propose a novel recursive constraint-based method for causal structure learning, which we call \emph{MARVEL}.
    Our method is non-parametric and does not posit any assumptions on the functional relationships among the variables, while it significantly reduces the number of required CI tests.
    In each iteration of our recursive approach, we use the Markov boundary information to find a \emph{removable} variable (see Definition \ref{def: removable}). 
    We then orient the edges incident to this variable and remove it from the set of variables. 
    Finally, we update the Markov boundary information for the next iteration.
    
    Our main contributions are as follows.
    \begin{itemize}
        \item 
            We introduce the notion of a \emph{removable} variable, which is a variable that can be removed from a DAG without changing the d-separation relations (Definition \ref{def: removable}).
            Moreover, we provide a graphical characterization of removable variables (Theorem \ref{thm: removablity}). 
        \item 
            Harnessing the notion of removability, we propose a novel recursive Markov boundary-based causal structure learning method, called MARVEL (Section \ref{sec: MARVEL}). 
            Given the Markov boundary information, MARVEL requires $\mathcal{O}(p \Delta_\text{in}^2 2^{\Delta_\text{in}})$  CI tests in the worst case to find the Markov equivalence class of the causal graph, where $p$ and $\Delta_\text{in}$ are the number of variables and the maximum in-degree of the causal DAG, respectively (Proposition \ref{prp: complexity}). 
            We show that this upper bound significantly improves over the state of the art.
        \item 
            We provide a lower bound on the required number of CI tests for any constraint-based method. 
            Specifically, we show that any constraint-based algorithm requires at least $\Omega(p^2 + p \Delta_\text{in} 2^{\Delta_\text{in}})$ CI tests in the worst case (Theorem \ref{thm: lwrBound}). 
            Comparing this lower bound with our achievable bound demonstrates the efficiency of our proposed method.
    \end{itemize}
    In Section \ref{sec: Markov boundary} we discuss well-known Markov boundary discovery algorithms which require $\mathcal{O}(p^2)$ CI tests. By utilizing one of these algorithms, our method discovers the causal graph by performing $\mathcal{O}(p^2 + p \Delta_\text{in}^2 2^{\Delta_\text{in}})$ CI tests in the worst case.
    It is noteworthy that our upper bound is based on $\Delta_\text{in}$ as opposed to $\Delta$ or $\alpha$. 
    We achieve this by the virtue of recursive variable elimination. 
    In general $\Delta_\text{in} \leq \Delta \leq \alpha$. 
    Additionally, in a DAG with a constant $\Delta_\text{in}$, the values of $\Delta $ and $\alpha$ can grow linearly with the number of variables. 
    Figure \ref{fig: example} depicts one such graph where $\Delta_\text{in}=1$, but $\Delta = \alpha = p-1$. 
    Therefore, in some cases PC, GS, and CS requires an exponential number of CI tests while our approach has merely quadratic complexity. Our experiments on both synthetic and real-world structures show that MARVEL requires substantially fewer CI tests with smaller average size of conditioning sets while obtaining superior accuracy, compared to state-of-the-art constraint-based methods. 
    
	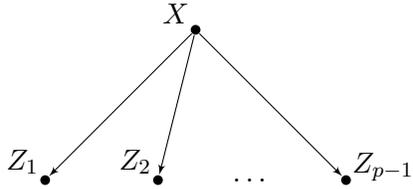
\begin{figure}[t] 
	    \centering
		\tikzstyle{block} = [circle, inner sep=1.3pt, fill=black]
		\tikzstyle{input} = [coordinate]
		\tikzstyle{output} = [coordinate]
        \begin{tikzpicture}
            \tikzset{edge/.style = {->,> = latex'}}
            \node[block] (x) at  (2,2) {};
            \node[] ()[above left=-0.1cm and -0.1cm of x]{$X$};
            \node[block] (z1) at  (0,0) {};
            \node[] ()[above left=-0.1cm and -0.1cm of z1]{$Z_1$};
            \node[block] (z2) at  (1.5,0) {};
            \node[] ()[above left=-0.1cm and -0.1cm of z2]{$Z_2$};
            \node[block] (zp) at  (4,0) {};
            \node[] ()[above right=-0.2cm and -0.1cm of zp]{$Z_{p-1}$};
            \draw[edge] (x) to (z1);
            \draw[edge] (x) to (z2);
            \draw[edge] (x) to (zp);
            \path (z2) to node {\dots} (zp);
        \end{tikzpicture}
        \caption{A DAG with $\Delta_\text{in}=1$ and $\Delta = \alpha = p-1$.}
    \end{figure} \label{fig: example}
    
    We start the exposition by introducing the notations, reviewing the terminology and describing the problem in Section \ref{sec: pre}. MARVEL method is described in Section \ref{sec: MARVEL}, and its computational complexity is discussed in Section \ref{sec: complexity}. Section \ref{sec: experiments} is dedicated to evaluating MARVEL on both synthetic and real-world structures.
    
\section{Preliminaries and Problem Description}\label{sec: pre}
    We consider a system with $p$ variables denoted by the set $\mathbf{V}$. 
    Let $\mathcal{G}=(\mathbf{V},\mathbf{E})$ be the directed acyclic graph (DAG) over $\mathbf{V}$ which represents the causal relationships among the variables, where $\mathbf{E}$ is the set of directed edges. 
    A directed edge from variable $X$ to $Y$, denoted by $(X,Y)$, represents that $X$ is a direct cause of $Y$ with respect to $\mathbf{V}$\footnote{Through out the paper, we use the terms variable and vertex interchangeably.}. 
    This model is referred to as causal DAG or causal Bayesian network in the literature \citep{pearl2009causality, spirtes2000causation, neapolitan2004learning}. 
    The \emph{skeleton} of $\mathcal{G}$ is defined as the undirected graph obtained by removing the directions of the edges of $\mathcal{G}$. 
    If $(X,Y)\in E$, $X$ and $Y$ are called \emph{neighbors} of each other, $X$ is a \emph{parent} of $Y$ and $Y$ is a \emph{child} of $X$. 
    The set of all neighbors, parents, and children of $X$ are denoted by $N_X$, $\text{Pa}_X$, and $\text{Ch}_X$, respectively. 
    \begin{definition}[v-structure]
        Three vertices form a \emph{v-structure} if two of them are parents of the third vertex while they are not neighbors themselves. Additionally, $\mathcal{V}_X^{\text{Pa}}$ denotes the set of v-structures in which $X$ is a parent. 
    \end{definition}
	\begin{definition}[co-parent]
		For $X,Y\in \mathbf{V}$, $Y$ is a co-parent of $X$ if it shares at least one child with $X$ and $Y\not\in N_X$. The set of co-parents of $X$ is denoted by $\Lambda_X$.
	\end{definition} 
	A distribution $P_\mathbf{V}$ on variables $\mathbf{V}$ satisfies \emph{Markov property} with respect to $\mathcal{G}$ if d-separation\footnote{See \citep{pearl1988probabilistic} for the definition of d-separation.} in $\mathcal{G}$ implies conditional independence (CI) in $P_V$. 
	That is, $X$ is d-separated from $Y$ by $\mathbf{S}\subseteq\mathbf{V}$, denoted by $X\perp_{\mathcal{G}} Y\vert \mathbf{S}$, implies $X\independent_{P_\mathbf{V}} Y\vert \mathbf{S}$.
	Conversely, $P_\mathbf{V}$ satisfies \emph{faithfulness} with respect to $\mathcal{G}$, if CI in $P_\mathbf{V}$ implies d-separation in $\mathcal{G}$. 
	That is, $X\independent_{P_\mathbf{V}} Y\vert \mathbf{S}$ implies $X\perp_{\mathcal{G}} Y\vert \mathbf{S}$ \citep{spirtes2000causation, glymour1999computation}. 
	We often drop the subscripts $\mathcal{G}$ and $P_\mathbf{V}$ when there is no ambiguity.
	
	In this paper we study the problem of causal structure learning under the Markov condition and faithfulness assumption. 
	Additionally, we assume causal sufficiency, that is, we assume that the variables do not have any latent common causes. 
	Under these assumptions, the underlying causal DAG can be learned up to Markov equivalence class, i.e., the set of DAGs representing the same conditional independence relationships \citep{spirtes2000causation, pearl2009causality}. 
	\citep{verma1991equivalence} showed that two DAGs are Markov equivalent if and only if they have the same skeleton and v-structures. 
	The Markov equivalence class of a DAG can be graphically represented by a partially directed graph called the \emph{essential graph}. 
	Our goal is to obtain the essential graph corresponding to the causal DAG from observational data. 
	
	Before proceeding to our proposed approach, we briefly review a few definitions and results on Markov boundaries.
	
	\subsection{Markov Boundary}\label{sec: Markov boundary}
	    For $X \in \mathbf{V}$, Markov boundary of $X$ is a minimal set $\mathbf{S} \subseteq \mathbf{V}\setminus\{X\}$ such that $X\independent \mathbf{V}\setminus(\mathbf{S}\cup\{X\}) \vert \mathbf{S}$. 
	    Under Markov and faithfulness assumptions, Markov boundary of each vertex $X$, denoted by $\Mb_X$, is unique and consists of its parents, children and co-parents \citep{pearl1988probabilistic, pearl2009causality}:
    	\begin{equation}
    	    \Mb_X= \text{Pa}_X \cup \text{Ch}_X \cup \Lambda_X = N_X \cup \Lambda_X.
    	\end{equation}
    	Many algorithms have been proposed in the literature for discovering the Markov boundaries \citep{fu2010markov, margaritis1999bayesian,guyon2002gene, tsamardinos2003towards, tsamardinos2003algorithms,yaramakala2005speculative}. One straightforward method is \emph{total conditioning} (TC)  \citep{pellet2008using}, which states that under faithfulness, $X$ and $Y$ are in each other's Markov boundary if and only if
		\begin{equation}
		    X \notindependent Y \vert \mathbf{V} \setminus \{X,Y\}.
		\end{equation}
	    Using total conditioning, $\binom{|\mathbf{V}|}{2}$ CI tests suffice for identifying the Markov boundaries of all of the vertices. 
	    However, the drawback is that each CI test requires conditioning on a large set of variables. 
	    
	    This issue is addressed in several algorithms including Grow-Shrink (GS) \citep{margaritis1999bayesian}, IAMB \citep{tsamardinos2003algorithms} and its several variants which propose a method that leads to performing more CI tests, but with smaller conditioning sets. Thus, the choice of which algorithm to use for computing the Markov boundaries must be made according to the data at hand.
        Note that these algorithms perform at most $\mathcal{O}(p^2)$ CI tests to discover the Markov boundaries. The Markov boundary information is required to initialize our proposed approach, and any of these methods can be utilized for this purpose.

\section{MARVEL Method}\label{sec: MARVEL}
    In this section, we present our recursive method for learning the causal structure. 
    The idea in this approach is as follows. 
    
    We first identify a variable with certain properties, which we call \emph{removable} using Markov boundary information. A removable variable is a variable that can be omitted from the causal graph such that the remaining graph still satisfies Markov property and faithfulness with respect to the marginal distribution of the remaining variables.
    We then identify the neighbors of this variable and orient the edges incident to it. 
    Finally, we remove this variable from the set of variables and update the Markov boundaries for the next iteration.
    This procedure is repeated until all of the variables are removed.
    
    We show that this approach leads to finding a graph with the same skeleton and v-structures as the true causal graph.
    Hence, the corresponding essential graph can be identified using this graph.
    We call our method, \emph{Markov boundary-based Recursive Variable Elimination (MARVEL)}.
    
    We introduce removable variables in Subsection \ref{sec: removable variables}, propose a method for testing removability in Subsection \ref{sec: testing removability}, and present our recursive algorithm in Subsection \ref{sec: algo}. 
    In Subsection \ref{sec: save} we show how to avoid performing duplicate CI tests during our algorithm. 
    
    \subsection{Removable Variables} \label{sec: removable variables}
        A removable variable is defined formally as follows.
        \begin{definition}[Removable] \label{def: removable}
        	$X$ is a removable vertex in a DAG $\mathcal{G}$ if 
        	the d-separation relations in $\mathcal{G}$ and $\mathcal{H} := \mathcal{G} \setminus \{X\}$ are equivalent over the vertices of $\mathcal{H}$.
        	That is for any vertices $Y,Z\in\mathbf{V}\setminus\{X\}$ and $\mathbf{S}\subseteq\mathbf{V}\setminus\{X,Y,Z\}$,
            \begin{equation}\label{eq: d-sepEquivalence}
    		    Y \perp_{\mathcal{G}} Z \vert \mathbf{S}
    		    \iff
    		    Y \perp_{\mathcal{H}} Z \vert \mathbf{S}.
    		\end{equation}
        \end{definition}
        \begin{remark} \label{remark: removable}
            Suppose $P_\mathbf{V}$ is Markov and faithful with respect to a DAG $\mathcal{G}$. 
            For any vertex $X\in \mathbf{V}$, $P_{\mathbf{V}\setminus\{X\}}$ is Markov and faithful with respect to $\mathcal{G}\setminus\{X\}$ if and only if $X$ is a removable vertex in $\mathcal{G}$. 
        \end{remark}
        All the proofs are available in Appendix \ref{sec: appendix}. Next, we propose a graphical characterization of removable variables.
        \begin{theorem}[Removability] \label{thm: removablity}
    		$X$ is removable in $\mathcal{G}$ if and only if the following two conditions are satisfied for every $Z\in\text{Ch}_{X}.$
    		\begin{description}
    			\item 
    			    Condition 1: $N_X\subset N_Z\cup\{Z\}.$
    			\item 
    			    Condition 2: $\text{Pa}_Y \subset \text{Pa}_Z$ for any $Y\in\text{Ch}_X\cap \text{Pa}_Z$.
    		\end{description}
    	\end{theorem}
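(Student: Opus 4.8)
The plan is to first reduce removability to a one-sided statement and then prove each direction by path surgery. Observe that $\mathcal{H} := \mathcal{G}\setminus\{X\}$ is a subgraph of $\mathcal{G}$ and that deleting $X$ can only shrink the descendant sets of the remaining vertices. Consequently, any path that lives in $\mathcal{H}$ and is active given $\mathbf{S}$ is still active in $\mathcal{G}$: non-colliders are unaffected, and a collider active in $\mathcal{H}$ has an $\mathcal{H}$-descendant, hence a $\mathcal{G}$-descendant, in $\mathbf{S}$. Thus $A \perp_{\mathcal{G}} B \mid \mathbf{S} \Rightarrow A \perp_{\mathcal{H}} B \mid \mathbf{S}$ holds for free, and removability is equivalent to its converse: no d-connection present in $\mathcal{G}$ is destroyed by deleting $X$. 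Since $\mathbf{S} \subseteq \mathbf{V}\setminus\{X\}$ throughout, a connection can be destroyed in only two ways, which I treat separately: (a) the connecting path itself passes through $X$, or (b) the path avoids $X$ but one of its colliders $W$ is activated solely through a directed path to $\mathbf{S}$ that runs through $X$.

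For the \emph{if} direction I assume Conditions 1 and 2 and rule out both mechanisms. Mechanism (b) is handled by Condition 1 alone: if every directed path from $W$ to $\mathbf{S}$ uses $X$, writing one as $W \to \cdots \to U \to X \to Z \to \cdots$ reaching $\mathbf{S}$, with $U \in \text{Pa}_X$ and $Z \in \text{Ch}_X$, Condition 1 makes $U$ and $Z$ adjacent and acyclicity (because $U \to X \to Z$) orients the edge as $U \to Z$, yielding a directed path $W \to \cdots \to U \to Z \to \cdots$ to $\mathbf{S}$ that avoids $X$; hence every collider active in $\mathcal{G}$ stays active in $\mathcal{H}$. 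For mechanism (a) I operate locally on the segment $P - X - Q$ of the active path, where $P,Q$ are the path-neighbours of $X$. When $X$ is a non-collider of chain type ($P \to X \to Q$ or its mirror), Condition 1 and acyclicity give the shortcut edge $P \to Q$, and the arrowheads at $P$ and $Q$ are unchanged, so activeness is preserved. When $X$ is a collider ($P \to X \leftarrow Q$) it is active via a child $Z$ on a directed path to $\mathbf{S}$; Condition 1 makes $P,Q$ adjacent to $Z$, acyclicity forces $P \to Z$ and $Q \to Z$, and replacing $X$ by the collider $Z$ (which reaches $\mathbf{S}$ without revisiting $X$, hence is active in $\mathcal{H}$) reconnects the path. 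The remaining and most delicate case is the fork $P \leftarrow X \to Q$ with $P,Q \in \text{Ch}_X$: Condition 1 makes them adjacent, but if the edge is $P \to Q$ and $P$ is a conditioned collider on the path, the naive contraction turns $P$ into a blocked non-collider. This is exactly where Condition 2 enters: since $P \in \text{Ch}_X \cap \text{Pa}_Q$, we get $\text{Pa}_P \subset \text{Pa}_Q$, so the predecessor of $P$ on the path is itself a parent of $Q$, letting me reroute one step earlier directly into $Q$ while keeping $Q$'s incoming arrowhead intact. Stitching the rerouted segments into an active walk in $\mathcal{H}$ and passing from walk to path yields $A \not\perp_{\mathcal{H}} B \mid \mathbf{S}$.

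For the \emph{only if} direction I argue by contraposition, building an explicit witness $(A,B,\mathbf{S})$ with $A \not\perp_{\mathcal{G}} B \mid \mathbf{S}$ but $A \perp_{\mathcal{H}} B \mid \mathbf{S}$. The main tool is the standard fact that two non-adjacent vertices of a DAG are d-separated by the set of all their other ancestors. If Condition 1 fails there are a child $Z$ and a neighbour $W$ of $X$ with $W$ non-adjacent to $Z$; I take $\mathbf{S} = \text{An}_{\mathcal{H}}(\{W,Z\})\setminus\{W,Z\}$, which separates $W$ and $Z$ in $\mathcal{H}$, whereas in $\mathcal{G}$ the length-two path through $X$ ($W \to X \to Z$ or $W \leftarrow X \to Z$, active because $X \notin \mathbf{S}$) keeps them connected. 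If Condition 2 fails there are $Z \in \text{Ch}_X$, $Y \in \text{Ch}_X \cap \text{Pa}_Z$, and $R \in \text{Pa}_Y \setminus \text{Pa}_Z$; acyclicity shows $R$ is non-adjacent to $Z$, so $\mathbf{S} = \text{An}_{\mathcal{H}}(\{R,Z\})\setminus\{R,Z\}$ separates them in $\mathcal{H}$ and contains $Y$ (because $Y \to Z$), while in $\mathcal{G}$ the path $R \to Y \leftarrow X \to Z$ is active, since its collider $Y$ lies in $\mathbf{S}$ and its fork $X$ does not. Either violation contradicts removability.

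I expect the main obstacle to be the \emph{if} direction's path surgery, specifically verifying that contracting the segment through $X$ never silently flips a node's collider status in a way that blocks the path; the fork case with a conditioned collider is the genuinely non-trivial one and is precisely the configuration that forces Condition 2. Some care is also needed to argue that the individually rerouted segments compose into a single active structure, which I handle by passing from walks to paths.
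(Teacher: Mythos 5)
Your proof is correct and takes essentially the same approach as the paper's: the same case analysis on the role of $X$ in an active path (with Condition 2 entering exactly where a conditioned collider in $\text{Ch}_X$ would become a blocking non-collider after contraction), the same rerouting of directed descendant paths through $X$ via Condition 1, and the same two witness paths $W - X \to Z$ and $R \to Y \gets X \to Z$ for the converse. The differences are only presentational — you preserve active walks where the paper exhibits blockers for every path, and you use explicit ancestral separating sets where the paper argues that no separator can exist — and your explicit walk-to-path reduction addresses a detail the paper leaves implicit.
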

    	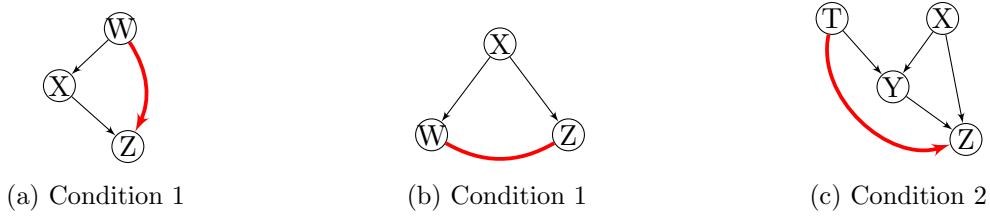
\begin{figure}[h] 
    	    \centering
    		\tikzstyle{block} = [draw, fill=white, circle, text centered,inner sep=0.15cm]
    		\tikzstyle{input} = [coordinate]
    		\tikzstyle{output} = [coordinate]
    		\begin{subfigure}[b]{0.3\textwidth}
    			\centering
    			\begin{tikzpicture}[->, auto, node distance=1.3cm,>=latex']
    			\node [block, label=center:X](X) {};
    			\node [block, below right= 0.5cm and 0.6cm of X, label=center:Z](Z) {};
    			\node [block, above right= 0.45cm and 0.5cm of X, label=center:W](W) {};
    			\draw (X) to (Z);
    			\draw (W) to (X);
    			\draw [line width=0.5mm, red] (W) to [bend left] (Z);
    			\end{tikzpicture}
    			\caption{Condition 1}
    			\label{fig:R, a}
    		\end{subfigure}\hfill
    		\begin{subfigure}[b]{0.3\textwidth}
    			\centering
    			\begin{tikzpicture}[->, auto, node distance=1.3cm,>=latex', every node/.style={inner sep=0.12cm}]
    			\node [block, label=center:X](X) {};
    			\node [block, below right= 0.9cm and 0.6cm of X, label=center:Z](Z) {};
    			\node [block, below left= 0.9cm and 0.6cm of X, label=center:W](W) {};
    			\draw (X) to (Z);
    			\draw (X) to (W);
    			\draw [-, line width=0.5mm, red] (W) to [bend right] (Z);
    			\end{tikzpicture}
    			\caption{Condition 1}
    			\label{fig:R, b}
    		\end{subfigure}\hfill
    		\begin{subfigure}[b]{0.3\textwidth}
    			\centering
    			\begin{tikzpicture}[->, auto, node distance=1.3cm,>=latex', every node/.style={inner sep=0.12cm}]
    			\node [block, label=center:X](X) {};
    			\node [block, below right= 1.3cm and 0cm of X, label=center:Z](Z) {};
    			\node [block, below left= 0.6cm and 0.35cm of X, label=center:Y](Y) {};
    			\node [block, above left= 0.6cm and 0.5 cm of Y, label=center:T](U){};
    			\draw (X) to (Y);
    			\draw (X) to (Z);
    			\draw (Y) to (Z);
    			\draw (U) to (Y);
    			\draw [line width=0.5mm, red] (U) to [bend right=60] (Z);
    			\end{tikzpicture}
    			\caption{Condition 2}
    			\label{fig:R, c}
    		\end{subfigure}
    		\caption{Conditions of Theorem \ref{thm: removablity}.}
    		\label{fig:R}
    	\end{figure}
        Figure \ref{fig:R} depicts the two Conditions of Theorem \ref{thm: removablity}.
        In Figures \ref{fig:R, a} and \ref{fig:R, b}, $W\in N_X$. Hence, Condition 1 implies the red edge between $W$ and $Z$.
        Note that $W$ must be a parent of $Z$ in Figure \ref{fig:R, a} since $\mathcal{G}$ is a DAG.
        In Figure \ref{fig:R, c}, $T\in \text{Pa}_Y$. Hence, Condition 2 implies the red edge between $T$ and $Z$.
    	\begin{remark} \label{rem: removables exist}
    	    Variables with no children satisfy the two conditions of Theorem \ref{thm: removablity}. Therefore, there always exists a removable vertex in every DAG.
    	\end{remark}
	\subsection{Testing for removability} \label{sec: testing removability}
	    In this section, we present an approach for testing the removability of a variable given the Markov boundary information of all of the variables when Markov and faithful assumptions hold, i.e., when d-separation is equivalent to CI. 
	    Our method is based on evaluating the two conditions of Theorem \ref{thm: removablity}. 
	    In Subsections \ref{subsub: neighbor} and \ref{subsub: vs} we discuss how to identify the neighbors, co-parents, and $\mathcal{V}_X^{\text{pa}}$ efficiently.
	    Subsequently, we use this information to test Conditions 1 and 2 of Theorem \ref{thm: removablity} in Subsections \ref{subsub: condition1} and \ref{subsub: condition2}. 
	    \subsubsection{Finding Neighbors and co-parents of a variable} \label{subsub: neighbor}
	        Given the Markov boundary of a variable, we can use the following lemma to tell the neighbors and co-parents apart \citep{margaritis1999bayesian}.
         	\begin{lemma} \label{lem: neighbor}
         	    Suppose $X\in \mathbf{V}$ and $Y\in\Mb_X$. 
                $Y$ is a neighbor of $X$ if and only if 
                \begin{equation} \label{eq: neighbor}
                    X\notindependent Y \vert \mathbf{S}, \hspace{ 0.5cm} \forall \mathbf{S}\subsetneq \Mb_X \setminus \{Y\}.
                \end{equation}
         	\end{lemma}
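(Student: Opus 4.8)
The plan is to recast the statement graphically and then treat the two directions separately. Since $P_\mathbf{V}$ is Markov and faithful with respect to $\mathcal{G}$, conditional independence coincides with d-separation, so it suffices to prove that $Y\in N_X$ if and only if no proper subset $\mathbf{S}\subsetneq\Mb_X\setminus\{Y\}$ satisfies $Y\perp_{\mathcal{G}} X\mid\mathbf{S}$. The forward direction is immediate: if $Y\in N_X$ then $X$ and $Y$ are adjacent in $\mathcal{G}$, and adjacent vertices cannot be d-separated by any conditioning set whatsoever; in particular $X\not\perp_{\mathcal{G}} Y\mid\mathbf{S}$ for every $\mathbf{S}$, which by faithfulness yields \eqref{eq: neighbor}.

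The substance is the reverse direction, which I would prove by contraposition: assuming $Y\in\Mb_X\setminus N_X$, I must exhibit a proper subset $\mathbf{S}\subsetneq\Mb_X\setminus\{Y\}$ with $X\perp_{\mathcal{G}} Y\mid\mathbf{S}$. Because $\Mb_X=N_X\cup\Lambda_X$, the hypothesis forces $Y\in\Lambda_X$, so $X$ and $Y$ are non-adjacent yet share at least one common child $C$. It is worth noting first why the full set $\Mb_X\setminus\{Y\}$ does not work, so that a strict subset is genuinely needed: it contains $C$, and conditioning on the common child opens the collider $X\to C\leftarrow Y$, creating dependence. My candidate is the set of neighbors of $X$ in the moralized graph of the ancestral set $\mathrm{An}(\{X,Y\})$, i.e. $\mathbf{S}:=N_{\mathcal{M}}(X)$ where $\mathcal{M}:=(\mathcal{G}_{\mathrm{An}(\{X,Y\})})^m$.

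With this choice, three claims complete the argument. First, $X$ and $Y$ are non-adjacent in $\mathcal{M}$: the only way moralization could join them would be a child shared by $X$ and $Y$ that lies in $\mathrm{An}(\{X,Y\})$, but any common child is a descendant of both $X$ and $Y$, hence by acyclicity cannot be an ancestor of either and so is absent from $\mathrm{An}(\{X,Y\})$. Consequently $N_{\mathcal{M}}(X)$ separates $X$ from $Y$ in $\mathcal{M}$, and since $\mathbf{S}\subseteq\mathrm{An}(\{X,Y\})$ the moralization criterion for d-separation gives $X\perp_{\mathcal{G}} Y\mid\mathbf{S}$. Second, $\mathbf{S}\subseteq\Mb_X\setminus\{Y\}$: a neighbor of $X$ in $\mathcal{M}$ is either a parent of $X$, a child of $X$ belonging to $\mathrm{An}(Y)$, or a vertex sharing such a child with $X$; the first two lie in $N_X$ and the third in $\Lambda_X$, so all lie in $\Mb_X$, while $Y$ is excluded by the non-adjacency just established. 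Third, the inclusion is strict: the common child $C$ belongs to $\Mb_X\setminus\{Y\}$ but, lying outside $\mathrm{An}(\{X,Y\})$, is not in $\mathbf{S}$.

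I expect the main obstacle to be the first claim, namely invoking the moralization criterion correctly and pinning down exactly which vertices are neighbors of $X$ in $\mathcal{M}$. The delicate point is the acyclicity observation that every common child of $X$ and $Y$ is excluded from $\mathrm{An}(\{X,Y\})$; this is simultaneously what makes $X$ and $Y$ separable inside their ancestral moral graph and what guarantees the separating set is a strict subset of $\Mb_X\setminus\{Y\}$. If one prefers to avoid the moralization machinery, the same $\mathbf{S}$ can be validated by a direct path analysis, tracing an arbitrary path out of $X$ and using acyclicity to show that any collider sitting at a common child has no descendant in $\mathbf{S}$; I would fall back on this only if a fully self-contained treatment is desired.
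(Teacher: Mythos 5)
Your proof is correct, but it takes a genuinely different route from the paper's. For the nontrivial direction (exhibiting a proper separating subset when $Y\in\Lambda_X$), the paper splits into two cases according to whether $Y$ is a descendant of $X$: if not, it invokes the local Markov property with $\mathbf{S}=\text{Pa}_X$; if so, it takes $\mathbf{S}$ to be the vertices of $\Mb_X$ preceding $Y$ in a topological order and verifies by a hands-on path analysis (casing on the latest vertex of each path in the causal order) that every path is blocked. You instead give a uniform construction, $\mathbf{S}=N_{\mathcal{M}}(X)$ with $\mathcal{M}=(\mathcal{G}_{\text{An}(\{X,Y\})})^m$, and outsource the blocking argument to the Lauritzen moralization criterion; the separation in $\mathcal{M}$ is then immediate because a vertex's neighborhood separates it from every non-neighbor in an undirected graph, and your acyclicity observation (no common child of $X$ and $Y$ lies in $\text{An}(\{X,Y\})$) does double duty, establishing both non-adjacency in $\mathcal{M}$ and strictness of the inclusion $\mathbf{S}\subsetneq\Mb_X\setminus\{Y\}$. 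Your key inclusion $N_{\mathcal{M}}(X)\subseteq\Mb_X$ is verified correctly (moral neighbors of $X$ are parents, children in the ancestral set, or spouses through such children), and the observation that $\text{An}(\{X,Y\}\cup\mathbf{S})=\text{An}(\{X,Y\})$ is exactly what licenses the criterion. What the paper's approach buys is self-containedness — it uses only the raw definition of d-separation and the local Markov property; what yours buys is brevity and the elimination of the case split, at the cost of importing the ancestral-moral-graph equivalence as a black box. One cosmetic remark: a vertex sharing a child with $X$ in $\mathcal{M}$ need not lie in $\Lambda_X$ (it could also be a neighbor of $X$), but either way it lies in $\Mb_X$, so nothing breaks.
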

            Given $\Mb_X$, for each $Y\in \Mb_X$ we perform the CI tests of Equation \ref{eq: neighbor}.
            If all of theses tests yield dependence, then $X$ and $Y$ are neighbors.
            Otherwise, there exists $\mathbf{S}_{XY}\subsetneq \Mb_X \setminus \{Y\} $ that d-separates $X$ and $Y$. 
            Hence, we can identify $N_X$ and $\Lambda_X$ (along with a separating set for each co-parent) with at most $|\Mb_X| 2^{|\Mb_X|-1}$ CI tests.
        \subsubsection{Finding $\mathcal{V}_X^{\text{pa}}$} \label{subsub: vs}
            Recall that $\mathcal{V}_X^{\text{pa}}$ denotes the set of v-structures in which $X$ is a parent.
            \begin{lemma}\label{lem: v-structures}
                Suppose $T \in \Lambda_X$ with a separating set $\mathbf{S}_{XT}$ for $X$ and $T$, and let $Y\in N_X$. 
                $Y$ is a common child of $X$ and $T$ (i.e., $X\to Y\gets T$ is in $\mathcal{V}_X^{\text{pa}}$)  if and only if $Y\notin \mathbf{S}_{XT}$ and
                \begin{equation}
                    Y\notindependent T\vert \mathbf{S}, \hspace{0.5cm} \forall \mathbf{S}\subseteq \Mb_X\cup\{X\}\setminus\{Y,T\}.
                \end{equation}
            \end{lemma}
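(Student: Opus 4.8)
The plan is to reason throughout with d-separation and invoke faithfulness to pass between d-separation and (in)dependence, so that ``$\mathbf{S}_{XT}$ separates $X$ and $T$'' means $X\perp_{\mathcal{G}}T\mid\mathbf{S}_{XT}$ and ``$Y\notindependent T\mid\mathbf{S}$'' means $Y$ and $T$ are d-connected given $\mathbf{S}$. For the forward direction, suppose $X\to Y\gets T$ is a v-structure. Since $T\in\Lambda_X$ forces $X$ and $T$ to be non-adjacent, $X\to Y\gets T$ is a genuine path on which $Y$ is a collider. If $Y\in\mathbf{S}_{XT}$, this collider is open and the path d-connects $X$ and $T$ given $\mathbf{S}_{XT}$, contradicting $X\perp_{\mathcal{G}}T\mid\mathbf{S}_{XT}$; hence $Y\notin\mathbf{S}_{XT}$. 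The edge $T\to Y$ makes $Y$ and $T$ adjacent, and adjacent vertices are never d-separated, so $Y\notindependent T\mid\mathbf{S}$ for every $\mathbf{S}$. This disposes of the ``only if'' direction.

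For the converse I would prove the contrapositive in two stages. First I claim the second condition forces $Y$ and $T$ to be adjacent. Suppose not; I must then produce a d-separating set inside $\Mb_X\cup\{X\}\setminus\{Y,T\}$ to contradict the hypothesis. Recall from the discussion following Lemma \ref{lem: neighbor} that the separating set may be taken with $\mathbf{S}_{XT}\subsetneq\Mb_X\setminus\{T\}$, and by the first condition $Y\notin\mathbf{S}_{XT}$, so $\mathbf{S}_{XT}\cup\{X\}\subseteq(\Mb_X\cup\{X\})\setminus\{Y,T\}$ is an admissible conditioning set. The goal is to show $Y\perp_{\mathcal{G}}T\mid\mathbf{S}_{XT}\cup\{X\}$: any active $Y$--$T$ path can be attached to $X$ through the $X$--$Y$ edge (prepending $X\to Y$ when $X\to Y$, or rerouting through the edge $Y\to X$) to yield an $X$--$T$ path, which must already be blocked by $\mathbf{S}_{XT}$ since $X\perp_{\mathcal{G}}T\mid\mathbf{S}_{XT}$.

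The delicate point, and what I expect to be the main obstacle, is that inserting $X$ into the conditioning set can open colliders whose descendants include $X$, so blocking of the $X$--$T$ path does not transfer verbatim to the $Y$--$T$ path. I would handle this by explicit path surgery, arguing that any $Y$--$T$ path left active by $\mathbf{S}_{XT}\cup\{X\}$ can be converted, after attaching the neighbor edge and excising a possible revisit of $X$, into an $X$--$T$ path active given $\mathbf{S}_{XT}$, contradicting the separation of $X$ and $T$; this forces $Y$ and $T$ to be adjacent. Once adjacency is established, the orientation is immediate: the length-two path $X$--$Y$--$T$ is an $X$--$T$ path and is therefore blocked by $\mathbf{S}_{XT}$; its only interior vertex is $Y$, and since $Y\notin\mathbf{S}_{XT}$ a non-collider there could not block the path, so $Y$ must be a collider, i.e.\ $X\to Y\gets T$. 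Together with the non-adjacency of $X$ and $T$ this certifies the claimed v-structure. I anticipate the forward direction and the final orientation step to be routine, with essentially all the work concentrated in the collider-opening analysis of the adjacency step.
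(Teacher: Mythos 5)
Your forward direction and your final orientation step (length-two path $X$--$Y$--$T$ blocked by $\mathbf{S}_{XT}$ with $Y\notin\mathbf{S}_{XT}$ forces $Y$ to be a collider) are correct and match the paper. The gap is in the adjacency step of the converse: you commit to the single candidate separator $\mathbf{S}_{XT}\cup\{X\}$, and the claim $Y\perp_{\mathcal{G}} T\mid \mathbf{S}_{XT}\cup\{X\}$ for non-adjacent $Y,T$ is false. Concretely, take the DAG with edges $X\to Y$, $X\to W$, $T\to W$, $U\to Y$, $U\to T$. Then $T\in\Lambda_X$ (common child $W$), $Y\in \text{Ch}_X$, $Y$ and $T$ are non-adjacent, and $\mathbf{S}_{XT}=\varnothing$ is a legitimate separating set for $X$ and $T$ (the only $X$--$T$ paths, $X\to W\gets T$ and $X\to Y\gets U\to T$, are blocked at their colliders), with $Y\notin\mathbf{S}_{XT}$. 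Yet the path $Y\gets U\to T$ is active given $\mathbf{S}_{XT}\cup\{X\}=\{X\}$, so $Y\notindependent T\mid\mathbf{S}_{XT}\cup\{X\}$.

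Note also that the failure is not the collider-opening phenomenon you flagged as the delicate point: attaching the neighbor edge to $Y\gets U\to T$ yields the $X$--$T$ path $X\to Y\gets U\to T$, which \emph{is} blocked by $\mathbf{S}_{XT}$ — at the newly created collider $Y$, an endpoint of the original path. So the separation of $X$ and $T$ yields no contradiction, and no path surgery can rescue the step, because the target statement is false. The paper avoids this by a case split: when $Y\in\text{Pa}_X$, or $Y\in\text{Ch}_X$ with $T$ a descendant of $Y$, the set $\mathbf{S}_{XT}$ itself (without $X$) works, precisely because in those cases $Y$ cannot serve as a blocking collider on the extended path; in the remaining case, $Y\in\text{Ch}_X$ with $T$ not a descendant of $Y$ — exactly the configuration of the counterexample — it switches to $\mathbf{S}=\text{Pa}_Y\subseteq \Mb_X\cup\{X\}$ and invokes the local Markov property. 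Your argument needs this case distinction (or an equivalent device); a uniform choice of $\mathbf{S}_{XT}\cup\{X\}$ does not suffice.
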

            Once $N_X$ and $\Lambda_X$ are identified along with a separating set for each of the variables in $\Lambda_X$ using Lemma \ref{lem: neighbor}, we can find $\mathcal{V}_X^{\text{pa}}$ by applying Lemma \ref{lem: v-structures} which requires performing at most $|\Lambda_X||N_X|2^{|\Mb_X|-1}$ CI tests.
	    \subsubsection{Testing condition 1} \label{subsub: condition1}
	        To test condition 1 of Theorem \ref{thm: removablity} for a variable $X$, we apply the following lemma. 
	        \begin{lemma} \label{lem: condition1}
        		Variable $X$ satisfies Condition 1 of Theorem \ref{thm: removablity} if and only if 
        		\begin{equation} \label{eq: condition1}
        		    Z \notindependent W\vert \mathbf{S} \cup \{X\}, \hspace{0.5cm} \forall W,Z\in N_X,\, \mathbf{S} \subseteq \Mb_X \setminus \{Z, W\}.
        		\end{equation}
        	\end{lemma}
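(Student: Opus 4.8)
The plan is to prove the statement at the level of d-separation, since under the Markov and faithfulness assumptions CI is equivalent to d-separation over $\mathcal{G}$; thus I must show that Condition~1 holds if and only if no two $W,Z\in N_X$ are d-separated by any $\mathbf{S}\cup\{X\}$ with $\mathbf{S}\subseteq\Mb_X\setminus\{Z,W\}$. I will prove the two directions separately and will repeatedly use one structural fact: if $Z\in\text{Ch}_X$ then $\text{Pa}_Z\setminus\{X\}\subseteq\Mb_X$, because every parent $Y\neq X$ of $Z$ shares the child $Z$ with $X$ and is therefore either a neighbor or a co-parent of $X$, i.e.\ $Y\in N_X\cup\Lambda_X=\Mb_X$.

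For the forward direction I assume Condition~1 and take arbitrary distinct $W,Z\in N_X$ and any admissible $\mathbf{S}$. If $W$ and $Z$ are adjacent they can never be d-separated, so the corresponding test reports dependence. If they are non-adjacent, I claim neither can be a child of $X$: were $Z\in\text{Ch}_X$, Condition~1 ($N_X\subset N_Z\cup\{Z\}$) together with $W\in N_X\setminus\{Z\}$ would force $W\in N_Z$, contradicting non-adjacency. Hence both are parents of $X$ and $W\to X\gets Z$ is a v-structure; since $X$ lies in the conditioning set this collider path is active, so $W$ and $Z$ are d-connected and the test again reports dependence.

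For the converse I argue the contrapositive: if Condition~1 fails there exist $Z\in\text{Ch}_X$ and $W\in N_X$ with $W\neq Z$ and $W\notin N_Z$, and I must exhibit a separating set. By acyclicity at least one of $W,Z$ is not a descendant of the other. If $W$ is not a descendant of $Z$, the local Markov property gives $Z\perp_{\mathcal{G}} W\mid \text{Pa}_Z$, and the choice $\mathbf{S}=\text{Pa}_Z\setminus\{X\}$ works because $\text{Pa}_Z\setminus\{X\}\subseteq\Mb_X\setminus\{W,Z\}$ by the structural fact (here $W\notin\text{Pa}_Z$ since $W\notin N_Z$). If instead $W$ is a descendant of $Z$, then $W\to X$ is impossible, as it would close a cycle $Z\to\cdots\to W\to X\to Z$, so $X\to W$ and $W\in\text{Ch}_X$; applying the structural fact to $W$ gives $\text{Pa}_W\setminus\{X\}\subseteq\Mb_X\setminus\{W,Z\}$, and the local Markov property yields $W\perp_{\mathcal{G}} Z\mid \text{Pa}_W$, so $\mathbf{S}=\text{Pa}_W\setminus\{X\}$ separates the pair.

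The step I expect to be the main obstacle is this converse direction, where one must guarantee that a separating set can be found entirely inside $\Mb_X\cup\{X\}$ rather than by conditioning on arbitrary parents that might lie outside the Markov boundary. The crux that removes the obstacle is the observation that conditioning on $X$ already supplies the one guaranteed parent of the ``later'' vertex, while all remaining parents of a child of $X$ are automatically co-parents or neighbors of $X$; combined with the acyclicity case split that decides which of $W,Z$ is the later vertex, this keeps the separating set within the admissible family $\Mb_X\setminus\{W,Z\}$.
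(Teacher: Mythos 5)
Your proof is correct and follows essentially the same route as the paper's: the forward direction via adjacency plus the active collider $W\to X\gets Z$, and the converse via the local Markov property with the parent set of the later vertex, using the fact that all parents of a child of $X$ other than $X$ itself lie in $\Mb_X$. Your explicit two-case split on whether $W$ is a descendant of $Z$ is just a more careful rendering of the paper's ``without loss of generality, $Z$ appears later in the causal order'' relabeling.
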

        	Given $N_X$, Condition 1 for $X$ can be verified using Lemma \ref{lem: condition1} by performing at most $\binom{|N_X|}{2} 2^{|\Mb_X|-2}$ CI tests. 
        \subsubsection{Testing condition 2} \label{subsub: condition2}
	        \begin{lemma} \label{lem: condition2}
	            Suppose the variable $X$ satisfies Condition 1 of Theorem \ref{thm: removablity}. Then $X$ satisfies Condition 2 of Theorem \ref{thm: removablity}, and therefore, $X$ is removable, if and only if 
	            \begin{equation} \label{eq: condition2}
	                Z \notindependent T\vert \mathbf{S} \cup \{X, Y\}, \hspace{0.5cm} \forall (X\to Y\gets T) \in \mathcal{V}_X^{\text{pa}}, \, Z\in N_X \setminus \{Y\},\, \mathbf{S} \subseteq \Mb_X \setminus \{Z, Y,  T\}.
	            \end{equation}
	        \end{lemma}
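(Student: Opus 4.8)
The plan is to work throughout under the Markov and faithfulness assumptions, so that d-separation in $\mathcal{G}$ coincides with conditional independence; condition \eqref{eq: condition2} then reads: for every $(X\to Y \gets T)\in \mathcal{V}_X^{\text{pa}}$, every $Z\in N_X\setminus\{Y\}$, and every $\mathbf{S}\subseteq \Mb_X\setminus\{Z,Y,T\}$, the vertices $Z$ and $T$ are d-connected given $\mathbf{S}\cup\{X,Y\}$. I would prove that this d-connectivity statement is equivalent to Condition 2 of Theorem \ref{thm: removablity}; the final claim (removability) then follows immediately from Theorem \ref{thm: removablity}, since Condition 1 is assumed to hold.

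For the direction ``Condition 2 $\Rightarrow$ \eqref{eq: condition2}'', fix a v-structure $X\to Y\gets T$, a neighbor $Z\in N_X\setminus\{Y\}$, and an arbitrary $\mathbf{S}$. Since $Y\in\text{Ch}_X$, Condition 1 applied to the child $Y$ gives $N_X\subseteq N_Y\cup\{Y\}$, hence $Z\in N_Y$, i.e. $Z$ is adjacent to $Y$. I would then split on the orientation of this edge. If $Z\to Y$, the length-two path $Z\to Y\gets T$ has $Y$ as a collider lying in the conditioning set, so it is active regardless of $\mathbf{S}$. If instead $Y\to Z$, then $X\to Y\to Z$ forces $Z\in\text{Ch}_X$ (acyclicity rules out $Z\in\text{Pa}_X$), so $Y\in\text{Ch}_X\cap\text{Pa}_Z$ and Condition 2 yields $\text{Pa}_Y\subseteq\text{Pa}_Z$; as $T\in\text{Pa}_Y$ we obtain $T\to Z$, so $Z$ and $T$ are adjacent and trivially d-connected. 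In both cases $Z\notindependent T\mid \mathbf{S}\cup\{X,Y\}$.

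For the converse, suppose Condition 2 fails, so there are $Z\in\text{Ch}_X$, $Y\in\text{Ch}_X\cap\text{Pa}_Z$, and $T\in\text{Pa}_Y\setminus\text{Pa}_Z$. The first key step is to certify that $X\to Y\gets T$ is genuinely a member of $\mathcal{V}_X^{\text{pa}}$, i.e. that $T\in\Lambda_X$. I would argue $T\notin N_X$ by contradiction using Condition 1 on the child $Z$: if $T\in N_X$ then $T\in N_Z\cup\{Z\}$, and either $T=Z$ or (since $T\notin\text{Pa}_Z$) $Z\to T$; each of these, together with $T\to Y\to Z$, yields a directed cycle. The second step is to exhibit a separating configuration: since $T\to Y\to Z$, the vertex $T$ is a non-descendant of $Z$, and taking $\mathbf{S}:=\text{Pa}_Z\setminus\{X,Y\}$ gives $\mathbf{S}\cup\{X,Y\}=\text{Pa}_Z$, whence the local Markov property yields $Z\perp_{\mathcal{G}} T\mid \text{Pa}_Z$, i.e. $Z\independent T\mid \mathbf{S}\cup\{X,Y\}$, contradicting \eqref{eq: condition2}. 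It then remains to check admissibility, $\mathbf{S}\subseteq\Mb_X\setminus\{Z,Y,T\}$: every parent of $Z$ other than $X$ shares the child $Z$ with $X$ and hence lies in $N_X\cup\Lambda_X=\Mb_X$, while $Z\notin\text{Pa}_Z$, $Y$ is removed by construction, and $T\notin\text{Pa}_Z$ by assumption.

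The main obstacle I anticipate is the converse direction, specifically the bookkeeping that ties the violation of the purely graphical Condition 2 to an actually-tested configuration: one must verify both that $T$ qualifies as a co-parent, so that $(X\to Y\gets T)\in\mathcal{V}_X^{\text{pa}}$ is a legitimate choice in \eqref{eq: condition2}, and that the natural separating set $\text{Pa}_Z\setminus\{X,Y\}$ indeed lands inside $\Mb_X\setminus\{Z,Y,T\}$. Both hinge on combining Condition 1 with acyclicity, which is the crux of the argument; once these containments are in place, the local Markov property supplies the required separation with no further computation.
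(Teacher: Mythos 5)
Your proof is correct and follows essentially the same route as the paper's: the forward direction uses the active collider path $Z\to Y\gets T$ (or adjacency via Condition 2) and the converse uses the local Markov property with $\mathbf{S}\cup\{X,Y\}=\text{Pa}_Z$. In fact you are slightly more careful than the paper in the converse, where you explicitly verify that $T\notin N_X$ so that $X\to Y\gets T$ genuinely lies in $\mathcal{V}_X^{\text{pa}}$ and the exhibited CI test is among those quantified in Equation \ref{eq: condition2} — a step the paper's proof glosses over.
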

	        Having identified $N_X$ and $\mathcal{V}_X^{\text{pa}}$, Lemma \ref{lem: condition2} allows us to verify if Condition 2 holds for $X$ using at most $\vert\Lambda_X\vert\vert N_X\vert 2^{|\Mb_X|-2}$ unique CI tests, since $Z\in N_X$, $T\in \Lambda_X$, and $\mathbf{S} \cup \{Y\} \subseteq \Mb_X \setminus \{Z,  T\}$. 
	        In Section \ref{sec: save} we show how to perform these CI tests without performing duplicate tests. 
	        The following proposition summarizes the results of this section. 
	        \begin{proposition} \label{prop: removability}
	            Knowledge of $\Mb_X$ suffices to identify $N_X, \Lambda_X$, $\mathcal{V}_X^{\text{pa}}$, and determine whether $X$ is removable by performing at most $\mathcal{O}(|\Mb_X|^2 2^{|\Mb_X|})$ unique CI tests. 
	        \end{proposition}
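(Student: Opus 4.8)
The plan is to establish the proposition in two stages: first verifying that the procedure assembled from the lemmas of this section is correct, and then aggregating the CI-test counts already stated alongside each lemma. I expect the correctness part to be a direct chaining of the earlier results, while the count is a matter of careful bookkeeping rather than any genuine difficulty.

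For correctness, I would chain the four lemmas in the order they were presented. Starting from $\Mb_X$, Lemma \ref{lem: neighbor} partitions $\Mb_X$ into $N_X$ and $\Lambda_X$ and, whenever a vertex $Y\in\Mb_X$ fails the neighbor test, records a separating set $\mathbf{S}_{XY}\subsetneq\Mb_X\setminus\{Y\}$. With $N_X$, $\Lambda_X$, and these separating sets in hand, Lemma \ref{lem: v-structures} identifies $\mathcal{V}_X^{\text{pa}}$. Lemma \ref{lem: condition1} then tests Condition 1 of Theorem \ref{thm: removablity}, and, assuming Condition 1 holds, Lemma \ref{lem: condition2} tests Condition 2. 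Since Theorem \ref{thm: removablity} states that Conditions 1 and 2 together are equivalent to removability, the conjunction of these two tests decides whether $X$ is removable. As every input required at a given step is produced by the preceding steps, knowledge of $\Mb_X$ indeed suffices to identify $N_X$, $\Lambda_X$, $\mathcal{V}_X^{\text{pa}}$, and to determine removability.

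For the complexity, I would write $m := |\Mb_X|$, $n := |N_X|$, and $\lambda := |\Lambda_X|$, and note that $n + \lambda = m$ because $\Mb_X = N_X \cup \Lambda_X$ is a disjoint union (by the definition of co-parent, $\Lambda_X \cap N_X = \emptyset$). The four steps cost at most $m\,2^{m-1}$, $\lambda n\,2^{m-1}$, $\binom{n}{2} 2^{m-2}$, and $\lambda n\,2^{m-2}$ CI tests, respectively, as established with each lemma. Bounding $n,\lambda \le m$ and $\lambda n, \binom{n}{2} \le m^2$, every one of these terms is $\mathcal{O}(m^2 2^m)$, and summing the four contributions preserves this order. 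This yields the stated bound $\mathcal{O}(|\Mb_X|^2 2^{|\Mb_X|})$ on the total number of CI tests performed, and since the number of distinct tests never exceeds the total, the same bound holds for the count of unique CI tests.

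The step that demands the most attention, rather than any real obstacle, is the aggregation: one must confirm that no single step contributes a term exceeding the claimed order---in particular that the factor $\lambda n$ appearing in the v-structure and Condition 2 counts is at most $m^2$, and that the exponential factor never grows beyond $2^m$. The qualifier \emph{unique} is handled trivially for the purpose of the upper bound; the machinery of Section \ref{sec: save} for actually avoiding duplicate tests is an efficiency refinement that is not needed to establish the bound itself.
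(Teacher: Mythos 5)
Your proposal is correct and follows essentially the same route as the paper: the proposition is a summary of Lemmas \ref{lem: neighbor}--\ref{lem: condition2}, and the paper's own accounting (made explicit in the proof of Proposition \ref{prp: complexity}) sums exactly the same four terms $|\Mb_X|2^{|\Mb_X|-1}$, $|\Lambda_X||N_X|2^{|\Mb_X|-1}$, $\binom{|N_X|}{2}2^{|\Mb_X|-2}$, and $|\Lambda_X||N_X|2^{|\Mb_X|-2}$, using $|\Mb_X|=|N_X|+|\Lambda_X|$ just as you do. Your bookkeeping and the chaining of the lemmas match the intended argument.
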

    \subsection{Algorithm} \label{sec: algo}
    	In order to identify a removable vertex efficiently (i.e. in terms of the number of CI tests), we sort the variables based on the cardinality of their Markov boundary in ascending order. 
    	Let $\mathcal{I} =(X_{(1)},X_{(2)},...,X_{(p)})$ be this ordering. Starting with $X_{(1)}$, we look for the first variable $X_{(i)}$ that is removable. 
    	Testing the removability of each variable is performed through the CI tests described in the previous section, and we stop when we identify the first removable variable in this order.
    	The following lemma guarantees that the Markov boundary of every variable for which we perform the removability tests until we reach the first one to prove removable is at most of size $\Delta_{\text{in}}$. 
    	\begin{lemma}\label{lem: MbboundForremovable}
    		If $X\in\mathbf{V}$ is a removable vertex in $\mathcal{G}$, then $\left\vert \Mb_X\right\vert\leq \Delta_{\text{in}}$, where $\Delta_{\text{in}}$ is the maximum in-degree of $\mathcal{G}$.
    	\end{lemma}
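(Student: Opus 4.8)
The plan is to reduce the bound on $|\Mb_X|$ to a bound on the in-degree of a single, carefully chosen child of $X$. Recall that $\Mb_X = \text{Pa}_X \cup \text{Ch}_X \cup \Lambda_X$ and $X \notin \Mb_X$. If $\text{Ch}_X = \emptyset$, then $\Lambda_X = \emptyset$ as well (a co-parent needs a common child), so $\Mb_X = \text{Pa}_X$ and $|\Mb_X| = |\text{Pa}_X| \leq \Delta_{\text{in}}$ trivially. So I assume $\text{Ch}_X \neq \emptyset$, and aim to exhibit a child $Z^* \in \text{Ch}_X$ with $(\Mb_X \setminus \{Z^*\}) \cup \{X\} \subseteq \text{Pa}_{Z^*}$. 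Since $Z^* \in \Mb_X$ while $X \notin \Mb_X$, this union is disjoint and has cardinality exactly $|\Mb_X|$, which yields $|\Mb_X| \leq |\text{Pa}_{Z^*}| \leq \Delta_{\text{in}}$.

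First I would use Condition 1 of Theorem \ref{thm: removablity} to show that $\text{Ch}_X$ forms a clique in the skeleton: for distinct $Y, Z \in \text{Ch}_X$, applying Condition 1 with this $Z$ gives $Y \in N_X \subset N_Z \cup \{Z\}$, hence $Y \in N_Z$. Because the subgraph induced on a clique of a DAG is a transitive tournament, it has a unique sink, and I take $Z^*$ to be that sink; then every other child of $X$ is a parent of $Z^*$, giving $\text{Ch}_X \setminus \{Z^*\} \subseteq \text{Pa}_{Z^*}$ directly. Note that $X$ is removable, so Conditions 1 and 2 hold for every child of $X$, in particular for $Z^*$.

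Next I would handle the two remaining pieces of $\Mb_X$. For $W \in \text{Pa}_X$, Condition 1 for $Z^*$ gives $W \in N_{Z^*}$ (with $W \neq Z^*$ by acyclicity), and the orientation must be $W \to Z^*$, since $W \to X \to Z^*$ together with $Z^* \to W$ would create a cycle; hence $\text{Pa}_X \subseteq \text{Pa}_{Z^*}$. For a co-parent $T \in \Lambda_X$, fix a common child $Y \in \text{Ch}_X$ of $X$ and $T$; note $T \neq Z^*$ since $Z^* \in N_X$ but $T \notin N_X$. If $Y = Z^*$, then $T \in \text{Pa}_{Z^*}$ immediately; otherwise $Y \in \text{Ch}_X \setminus \{Z^*\} \subseteq \text{Pa}_{Z^*}$, so $Y \in \text{Ch}_X \cap \text{Pa}_{Z^*}$, and Condition 2 for $Z^*$ gives $\text{Pa}_Y \subset \text{Pa}_{Z^*}$, whence $T \in \text{Pa}_Y \subseteq \text{Pa}_{Z^*}$. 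Combining, $\Mb_X \setminus \{Z^*\} = \text{Pa}_X \cup (\text{Ch}_X \setminus \{Z^*\}) \cup \Lambda_X \subseteq \text{Pa}_{Z^*}$, and adjoining $X \in \text{Pa}_{Z^*}$ completes the counting argument.

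The main obstacle I anticipate is the co-parent case: a co-parent $T$ is only guaranteed to be a parent of \emph{some} common child $Y$, not of $Z^*$ itself, so the argument must route through $Y$ and invoke Condition 2. This is precisely why choosing $Z^*$ as the sink is essential --- it forces $Y$ to be a parent of $Z^*$, making $Y$ eligible for Condition 2. The clique structure of $\text{Ch}_X$ imposed by Condition 1 is exactly what guarantees such a sink exists.
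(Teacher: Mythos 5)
Your proposal is correct and follows essentially the same route as the paper's proof: the paper also picks the "last" child of $X$ (the latest in the causal order, which is exactly the sink of the clique that Condition 1 induces on $\text{Ch}_X$) and shows that $\{X\}\cup(\Mb_X\setminus\{Z\})\subseteq \text{Pa}_Z$, routing co-parents through an intermediate common child via Condition 2 just as you do. Your write-up is more detailed (it handles $\text{Ch}_X=\emptyset$ explicitly and spells out the acyclicity arguments the paper leaves implicit), but the underlying idea is identical.
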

    	As shown in Section \ref{sec: testing removability}, the CI tests required for both learning the neighbors of a variable and testing its removability are conditioned on a subset of its Markov boundary. Since vertices are processed in the order $\mathcal{I}$ up to a removable variable, by Lemma \ref{lem: MbboundForremovable} we only process the variables with a Markov boundary of size at most $\Delta_{\text{in}}$. This ensures a maximum size of conditioning sets, which in turn results in more powerful conditional independence tests, and also results in performing substantially fewer CI tests to learn the structure.
    	Note that if we recursively eliminate the first removable variable in $\mathcal{I}$ at each iteration, the succeeding variables in $\mathcal{I}$, which are currently not removable, will eventually become removable at some point. 
    	Therefore, during the process, we never need to learn the neighborhood of a variable with Markov boundary size larger than $\Delta_{\text{in}}$. 
    	We discuss our complexity upper bound in Section \ref{sec: complexity}.
    	The pseudo code of MARVEL is outlined in  Algorithm \ref{MARVEL}.
    	
        \begin{algorithm}[tb]
           \caption{MARVEL}
           \label{MARVEL}
        \begin{algorithmic}[1]
            \STATE {\bfseries Input:} $\mathbf{V},\, {P}_\mathbf{V},\, (\Mb_X\!:\:X\in \mathbf{V})$
            \STATE Create $\hat{\mathcal{G}} = (\mathbf{V},\,\mathbf{E}=\varnothing)$ with vertex set $\mathbf{V}$ and no edges.
            \STATE $\overline{\mathbf{V}} \gets \mathbf{V}$ \hfill \% $\overline{V}$ is the set of remaining variables.
		    \FOR{1 {\bfseries to} $|\mathbf{V}|$}
		        \STATE $\mathcal{I} = (X_{(1)},X_{(2)},...,X_{(|\overline{\mathbf{V}}|)}) \gets$ Sort $\overline{\mathbf{V}}$ in ascending order based on their Markov boundary size.
		        \FOR{$i=1$ {\bfseries to} $|\overline{\mathbf{V}}|$}
		            \STATE $(N_{X_{(i)}},\,  \Lambda_{X_{(i)}},\, \mathcal{S}_{X_{(i)}}) \gets$ Find neighbors and co-parents of  $X_{(i)}$ along with a set of separating sets $\mathcal{S}_{X_{(i)}}= (\mathbf{S}_{X_{(i)}Y}\!:\: Y\in \Lambda_{X_{(i)}})$
		            using Lemma \ref{lem: neighbor}. 
		            \STATE Add undirected edges between $X_{(i)}$ and $N_{X_{(i)}}$ to $\hat{\mathcal{G}}$ if an edge is not already there.
		            \IF{Equation \ref{eq: condition1} holds for $X=X_{(i)}$}
		                \STATE Find $\mathcal{V}_{X_{(i)}}^{\text{pa}}$ using Lemma \ref{lem: v-structures}.
		                \STATE Orient the edges of $\mathcal{V}_{X_{(i)}}^{\text{pa}}$ accordingly in $\hat{\mathcal{G}}$.
                        \IF{Equation \ref{eq: condition2} holds for $X=X_{(i)}$}
                            \STATE Orient the remaining undirected edges incident to $X_{(i)}$ in $\hat{\mathcal{G}}$ as in-going towards $X_{(i)}$.
                            \STATE $\overline{\mathbf{V}} \gets \overline{\mathbf{V}}\setminus\{X_{(i)}\}$
                            \STATE Update $\Mb_Y$ for all $Y\in\Mb_{X_{(i)}}$. \hfill \% See Subsection \ref{sec: update Mb}.
                            \STATE \textbf{Break} the for loop of line 6.
		                \ENDIF
		            \ENDIF
		        \ENDFOR
		    \ENDFOR
		    \STATE $\Tilde{\mathcal{G}} \gets$ The partially directed graph with the skeleton and the v-structures of $\hat{\mathcal{G}}$.
		    \STATE Apply the Meek rules \citep{meek1995causal} to $\Tilde{\mathcal{G}}$.
		    \RETURN $\Tilde{\mathcal{G}}$
        \end{algorithmic}
        \end{algorithm}
        
        Initially, the output of a Markov boundary discovery algorithm (e.g., one of the algorithms mentioned in Section \ref{sec: Markov boundary}) is input to the MARVEL method.
    	The procedure is then initialized with an empty graph $\hat{\mathcal{G}}$ (line 2). 
    	$\overline{\mathbf{V}}$ indicates the set of remaining variables.
    	Each iteration of the recursive part consists of three main phases: 
        \begin{enumerate}
        	\item Identify the first removable vertex $X_{(i)}$ in $\mathcal{I}$.
        	\item Discover $N_{X_{(i)}}$ and orient the edges incident to $X_{(i)}$.
        	\item Remove $X_{(i)}$ and update the Markov boundary of the remaining variables.
        \end{enumerate} 
        In phase 1, we first obtain $\mathcal{I}$ in line 5. 
        We then check the two condition of removability using Lemmas \ref{lem: condition1} and \ref{lem: condition2} in lines 9 and 12.
        Phase 2 is performed in the following steps. 
        First, we learn $N_{X_{(i)}}$ and add the corresponding undirected edges to $\hat{\mathcal{G}}$ if the edge is not already added (lines 7, 8). 
        We then orient the v-structure edges of $\mathcal{V}_X^{\text{pa}}$ (line 10). 
        Finally, we orient the remaining undirected edges incident to $X_{(i)}$ as in-going towards $X_{(i)}$ (line 13). 
        We will show in Theorem \ref{thm: correctness} that 
        with this orientation, $\hat{\mathcal{G}}$ will have the same v-structures as the causal graph $\mathcal{G}$. 
        In phase 3, $X_{(i)}$ is removed from the graph, and we update the Markov boundaries of the remaining variables in the absence of $X_{(i)}$ (lines 14, 15). In Subsection \ref{sec: update Mb} we will present an efficient method for the latter task. Note that $X_{(i)}$ is particularly chosen such that the marginal distribution of the remaining variables satisfies faithfulness with respect to the induced subgraph of $\mathcal{G}$ over these variables.

        At the end of the algorithm, we keep the direction of the edges involved in v-structures in $\hat{\mathcal{G}}$ and apply the Meek rules \citep{meek1995causal} to this partially directed graph.
        Note that initially, we assumed Markov property and faithfulness.
        Although we remove a variable at each iteration, Remark \ref{remark: removable} implies that Markov and faithfulness assumptions hold in all iterations since we remove removable variables. 
        Hence, throughout the execution of the algorithm, the CI tests are equivalent to the d-separation relations in the remaining graph.
        
        The theorem below provides the correctness of MARVEL method.
    	\begin{theorem}[Correctness of MARVEL]\label{thm: correctness}
    	    Suppose $\mathcal{G}$ satisfies Markov property and faithfulness with respect to $P_{\mathbf{V}}$.
    	    The learned graph $\hat{\mathcal{G}}$ in Algorithm \ref{MARVEL} has the same skeleton and v-structures as $\mathcal{G}$. 
    	    Therefore, the output of Algorithm \ref{MARVEL} is the essential graph corresponding to $\mathcal{G}$.
    	\end{theorem}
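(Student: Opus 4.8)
The plan is to reduce the statement to two structural claims—that $\hat{\mathcal{G}}$ and $\mathcal{G}$ share the same skeleton and the same set of v-structures—and then invoke the result of \citep{verma1991equivalence} that these two invariants determine the Markov equivalence class, so that recording the v-structures of $\hat{\mathcal{G}}$ and closing under the Meek rules \citep{meek1995causal} yields exactly the essential graph of $\mathcal{G}$. I would argue by induction on $p=|\mathbf{V}|$. The engine of the induction is Remark \ref{remark: removable}: the algorithm eliminates a vertex $X$ only after certifying, via Lemmas \ref{lem: condition1} and \ref{lem: condition2} (the two conditions of Theorem \ref{thm: removablity}), that $X$ is removable in the current graph, so the marginal law of the surviving variables stays Markov and faithful to the induced subgraph. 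Consequently every CI test performed throughout the run is equivalent to a d-separation statement in the current graph, which is precisely what Lemmas \ref{lem: neighbor}, \ref{lem: v-structures}, \ref{lem: condition1} and \ref{lem: condition2} require in order to be applied at each iteration.

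For the skeleton, I would first note that deleting a vertex never alters adjacencies among the remaining vertices, so the neighbor set that Lemma \ref{lem: neighbor} returns for $X$ in the current graph coincides with $N_X$ restricted to the surviving variables. Since exactly one vertex is removed per outer iteration, any edge $(A,B)$ of $\mathcal{G}$ has both endpoints present at the moment the first of $A,B$ is eliminated, so it is discovered and added to $\hat{\mathcal{G}}$ at that step; conversely the algorithm only ever inserts edges between a vertex and its genuine neighbors. Hence $\hat{\mathcal{G}}$ and $\mathcal{G}$ have the same skeleton.

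For the v-structures I would partition those of $\mathcal{G}$ into the ones that involve the first eliminated vertex $X$ and the ones lying entirely within $\mathcal{H}:=\mathcal{G}\setminus\{X\}$. Because deleting $X$ preserves both the edges and the non-adjacencies among the remaining vertices, the latter are exactly the v-structures of $\mathcal{H}$ and are recovered by the inductive hypothesis applied to the recursive run on $\mathcal{H}$. The v-structures involving $X$ split into those with $X$ as a parent and those with $X$ as the collider. The former are exactly $\mathcal{V}_X^{\text{pa}}$, which Lemma \ref{lem: v-structures} identifies and the algorithm orients directly; the latter are retained because the algorithm orients every remaining neighbor of $X$ as in-going, so a genuine collider $P_1\to X\gets P_2$ with $P_1,P_2$ non-adjacent survives. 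The essential point is that Condition 1 of Theorem \ref{thm: removablity} forbids any \emph{spurious} collider at $X$: it forces every neighbor of $X$ to be adjacent to each child of $X$, so no non-adjacent pair in which one endpoint is a child can meet at $X$. Condition 2 is what keeps the orientations assigned among $X$'s children and their parents consistent with those forced by $\mathcal{V}_X^{\text{pa}}$.

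The step I expect to be the main obstacle is controlling the \emph{interaction across iterations}. Edges oriented in earlier iterations are frozen in $\hat{\mathcal{G}}$, and when a later removable vertex has its remaining edges set as in-going, one must rule out that such a (possibly reversed) edge forms a collider with an already-frozen edge whose other endpoint is non-adjacent, thereby manufacturing a v-structure absent from $\mathcal{G}$—or, dually, erasing a genuine one. I would handle this by establishing a global invariant on orientations: an edge $A\to B$ of $\mathcal{G}$ is reversed in $\hat{\mathcal{G}}$ only when it belongs to no v-structure of $\mathcal{G}$, and then showing, using Condition 1 of Theorem \ref{thm: removablity} together with the ascending Markov-boundary order in which removable vertices are processed, that a reversed edge can never meet a non-adjacent vertex at a common head. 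Making this rigorous—verifying that the elimination order never exposes a vertex to removal while a previously frozen, non-adjacent parent would clash with one of its reoriented children—is the delicate heart of the argument. Once the invariant is in place, the skeleton and v-structure claims together give that $\hat{\mathcal{G}}$ is Markov equivalent to $\mathcal{G}$, and the final extraction of skeleton and v-structures followed by Meek closure returns the essential graph.
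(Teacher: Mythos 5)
Your decomposition (skeleton plus v-structures, then Verma--Pearl and Meek), your skeleton argument, and your use of Remark \ref{remark: removable} to keep every CI test equivalent to a d-separation in the current graph all match the paper's proof. The problem is that you explicitly leave unproven the one step you yourself call ``the delicate heart of the argument'': the global invariant that a reversed edge never belongs to a v-structure of $\mathcal{G}$ and never meets a non-adjacent vertex at a common head. Asserting that you ``would handle this by establishing a global invariant'' and that making it rigorous is delicate is not a proof of that invariant, and without it neither direction of the v-structure claim is established. This is a genuine gap, not a presentational one, because line 13 really does reverse edges (every child of the removed vertex that is not caught by a v-structure in $\mathcal{V}^{\text{pa}}$ gets its edge flipped), so both failure modes you worry about must actually be excluded by an argument.

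The paper closes this gap without any induction or global invariant, by a case analysis on \emph{where} each edge of $\hat{\mathcal{G}}$ gets oriented: line 11 or line 13. For the ``no genuine v-structure is missed'' direction, the key observation is about the order of operations within a single iteration: if $X\to Y\gets Z$ is a v-structure of $\mathcal{G}$ and the vertex being removed is $X$, then this v-structure lies in $\mathcal{V}_X^{\text{pa}}$ and line 11 orients the edge $X\to Y$ correctly \emph{before} line 13 could ever flip it; hence an edge of a genuine v-structure reaching line 13 can only do so when its head $Y$ is the vertex being removed, in which case line 13 orients it correctly. This one-line observation is precisely the first half of your invariant. For the ``no spurious v-structure'' direction, if $X\to Y\gets Z$ appears in $\hat{\mathcal{G}}$ with at least one edge oriented in line 13, then $Y$ was removable with $X$ and $Z$ present, and Condition 1 of Theorem \ref{thm: removablity} forces any child of $Y$ among $\{X,Z\}$ to be adjacent to every other neighbor of $Y$, contradicting the non-adjacency of $X$ and $Z$; so both must be parents of $Y$ in $\mathcal{G}$. (Edges oriented in line 11 are correct outright by Lemma \ref{lem: v-structures}.) A secondary inaccuracy: Condition 2 plays no direct role in orientation consistency as you suggest; its function is to certify removability so that, via Remark \ref{remark: removable}, faithfulness persists and the CI-test lemmas remain valid in later iterations.
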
 

        \subsubsection{Updating Markov boundaries} \label{sec: update Mb}
        	Suppose $X$ is removed in an iteration. 
        	At the end of this iteration, we need to update the Markov boundaries of the remaining vertices. 
        	The removal of $X$ affects the remaining vertices in two ways:
        	\begin{enumerate}
        		\item $X$ should be removed from Markov boundaries of every vertex it appeared in.
        		\item If two variables are not adjacent and have only $X$ as a common child, then they should be removed from each other's Markov boundary. 
        	\end{enumerate}
        	Consequently, it is sufficient to remove $X$ from the Markov boundary of the vertices in $\Mb_X$, and then update the Markov boundary only for the vertices in $N_X$, i.e., the only vertices that can potentially be parents of $X$.
        	Therefore, it suffices that for all pairs $\{Y,Z\}$ in $N_X$ we check whether $Y$ and $Z$ remain in each other's Markov boundary after removing $X$. 
        	This is equivalent to testing for the dependency $Y\notindependent Z\vert \Mb_Z \setminus \{X,Y,Z\}$, or alternatively $Y\notindependent Z\vert \Mb_Y \setminus \{X,Y,Z\}$. 
        	We perform the test using the smaller of these two conditioning sets. 
        	Formally, define $W\coloneqq\arg\min_{U\in\{Y,Z\}}|\Mb_U|$. 
        	The test will check whether
        	\begin{equation} \label{eq: update Mb}
        	    Y\independent Z\vert \Mb_W \setminus \{X,Y,Z\}.
        	\end{equation}
        	If Equation \ref{eq: update Mb} holds, we remove $Y$ from $\Mb_Z$ and $Z$ from $\Mb_Y$. 
    \subsection{Avoiding duplicate CI tests} \label{sec: save}
        In an iteration of MARVEL, if a variable does not pass the removability tests, it will be tested for removability in the subsequent iterations. 
        We discuss an approach to use the information from previous iterations to avoid performing duplicate CI tests while testing the removability of such variables.
        \subsubsection{CI tests for finding neighbors, co-parents, and v-structures}
            We identify neighbors and co-parents of a variable in line 7 of Algorithm \ref{MARVEL} as described in \ref{subsub: neighbor}. 
            It suffices to do this procedure once for each variable $X$. 
            More precisely, if we learn $N_X$ in an iteration, the neighbors of $X$ in the next iterations would be $N_X$ excluding the deleted variables. 
            Hence, we do not need any further CI tests for finding neighbors and co-parents of $X$ in the following iterations. 
            
            The same applies to v-structures. 
            If we find $\mathcal{V}_X^{\text{pa}}$ in an iteration, we can save it and delete a v-structure from it when one of the three variables of the v-structure is removed.
        \subsubsection{CI tests for condition 1}
            For an arbitrary variable $X$, suppose $Z,W\in N_X$. 
            If in an iteration, there does not exist any $\mathbf{S}\subseteq \Mb_X \setminus \{Z,W\}$ such that $Z \independent W \vert \mathbf{S} \cup \{X\}$, then either $Z,W$ are neighbors or they are both parents of $X$.
            Since removing a vertex does not alter such relationships, no $\mathbf{S} \cup \{X\}$ can separate $W,Z$ in the following iterations either. 
            We can use this information to skip performing duplicate CI tests for pairs $W,Z\in N_X$ in the following iterations.
        \subsubsection{CI tests for condition 2}    
            The same idea applies to condition 2. 
            Suppose $Z\in N_X$ and $X\to Y \gets T$ is in $\mathcal{V}_X^{\text{pa}}$.
            If in an iteration for all $\mathbf{S} \subseteq \Mb_X \setminus \{Z, W\}$, $Z \notindependent T\vert \mathbf{S} \cup \{X, Y\}$, then $Z,T$ do not have any separating set including $X,Y$. 
            Hence, we can save this information and avoid performing CI tests for the pair $Z,T$ when the separating set includes both $X,Y$. 

\section{Complexity Analysis} \label{sec: complexity}
    The bottleneck in the complexity of constraint-based causal structure learning methods is the number of CI tests they perform. 
    In this section, we provide an upper bound on the worst-case complexity of the MARVEL algorithm in terms of the number of CI tests and compare it with PC \citep{spirtes2000causation}, GS \citep{margaritis1999bayesian}, CS \citep{pellet2008using}, and MMPC \citep{tsamardinos2003time} algorithms. 
    We also provide a worst-case lower bound for any constraint-based algorithm to demonstrate the efficiency of our approach.
	\begin{proposition} \label{prp: complexity}
		Given the initial Markov boundaries, the number of CI tests required by Algorithm \ref{MARVEL} on a graph of order $p$ and maximum in-degree $\Delta_{in}$ is upper bounded by
		\begin{equation}\label{eq: complexity}
		    p \binom{\Delta_{\text{in}}}{2} + \frac{p}{2}\Delta_{\text{in}}(1+ 0.45\Delta_{\text{in}} )2^{\Delta_{\text{in}}}  = \mathcal{O}(p\Delta_{in}^22^{\Delta_{in}}).
		\end{equation} 
	\end{proposition}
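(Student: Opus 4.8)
The plan is to charge every CI test to the variable that acts as the \emph{pivot} (the candidate for removal) when the test is performed, and to bound the total charge to each variable by a quantity depending only on the size of its Markov boundary at the moment it is first processed. The structural fact enabling this is Lemma \ref{lem: MbboundForremovable}: since in each iteration we sort the remaining variables in ascending order of Markov boundary size and stop at the first removable one, every pivot examined in that iteration precedes a removable vertex in the order, hence has a Markov boundary of size at most $\Delta_{\text{in}}$. Consequently, no test is ever performed with a conditioning set drawn from a Markov boundary of size exceeding $\Delta_{\text{in}}$.

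First I would use the bookkeeping of Section \ref{sec: save} to argue that, over the whole execution, the work for each variable $X$ is done essentially once. Finding $N_X,\Lambda_X$ (Lemma \ref{lem: neighbor}) and $\mathcal{V}_X^{\text{pa}}$ (Lemma \ref{lem: v-structures}) is performed only the first time $X$ is a pivot, costing at most $|\Mb_X|2^{|\Mb_X|-1}$ and $|\Lambda_X||N_X|2^{|\Mb_X|-1}$ tests respectively. For Conditions 1 and 2, the saving rules guarantee that no (pair, conditioning-set) tuple is tested twice; since $\Mb_X$ only shrinks across iterations, every tuple ever tested is a tuple relative to the first (largest) boundary, so the total unique tests are bounded by the first-iteration counts $\binom{|N_X|}{2}2^{|\Mb_X|-2}$ (Lemma \ref{lem: condition1}) and $|\Lambda_X||N_X|2^{|\Mb_X|-2}$ (Lemma \ref{lem: condition2}). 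Because all of these are evaluated when $|\Mb_X|\le\Delta_{\text{in}}$, I can substitute $m := |\Mb_X|\le\Delta_{\text{in}}$, $n := |N_X|$, $\lambda := |\Lambda_X|$ with $n+\lambda=m$ throughout.

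Next I would collect the exponential terms for a single variable. Writing the common factor $2^m$, the neighbor cost is $\tfrac{m}{2}2^m$, and the v-structure, Condition 1, and Condition 2 costs sum to
\[
2^m\!\left[\frac{\lambda n}{2}+\frac{n(n-1)}{8}+\frac{\lambda n}{4}\right]=\frac{2^m n}{8}\,(6m-5n-1),
\]
after substituting $\lambda=m-n$. The key step is to maximize $n\mapsto n(6m-5n-1)$ over $0\le n\le m$; its maximum is $(6m-1)^2/20\le 1.8\,m^2$, so the per-variable excess is at most $0.225\,m^2 2^m$. Adding the neighbor term gives a per-variable exponential cost of at most $\tfrac{m}{2}(1+0.45m)2^m\le \tfrac{\Delta_{\text{in}}}{2}(1+0.45\Delta_{\text{in}})2^{\Delta_{\text{in}}}$. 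Summing over the $p$ variables yields the exponential part of the claimed bound, and the Markov-boundary updates of Subsection \ref{sec: update Mb} contribute $\binom{|N_X|}{2}\le\binom{\Delta_{\text{in}}}{2}$ per removed vertex, i.e.\ $p\binom{\Delta_{\text{in}}}{2}$ in total, giving the first summand.

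I expect the main obstacle to be the careful amortization rather than the arithmetic: one must verify that the saving rules genuinely prevent any tuple from being recounted and that the monotone shrinkage of the Markov boundaries lets us upper bound every later iteration by the first, so that each variable is charged only once with parameters at most $\Delta_{\text{in}}$. The only nontrivial calculation is the quadratic optimization producing the constant $0.45$ (from $1.8/8=0.225$ and $0.225\times 2$); the remaining bounds ($n,\lambda\le m\le\Delta_{\text{in}}$ and $\binom{n}{2}\le\binom{\Delta_{\text{in}}}{2}$) are routine and collapse everything into $\mathcal{O}(p\Delta_{\text{in}}^2 2^{\Delta_{\text{in}}})$.
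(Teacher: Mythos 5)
Your proposal is correct and follows essentially the same route as the paper: split the count into Markov-boundary updates ($p\binom{\Delta_{\text{in}}}{2}$) and per-variable removability testing, use the sorting together with Lemma \ref{lem: MbboundForremovable} to guarantee every processed pivot has $|\Mb_X|\le\Delta_{\text{in}}$, invoke the saving rules of Subsection \ref{sec: save} so each variable is charged only once, and optimize the quadratic in $|N_X|$ to obtain the constant $0.45$. The only cosmetic difference is that you maximize $n(6m-5n-1)$ directly whereas the paper first bounds $\binom{N}{2}\le N^2/2$ and maximizes $3(\Delta_{\text{in}}-N)N+N^2/2\le 0.9\Delta_{\text{in}}^2$; both yield the identical per-variable bound $\tfrac{1}{2}\Delta_{\text{in}}(1+0.45\Delta_{\text{in}})2^{\Delta_{\text{in}}}$.
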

    \begin{corollary}
        If $\Delta_\text{in} \leq c \log p$, MARVEL uses at most $\mathcal{O}(p^{c+1}\log^2 p)$ CI tests in the worst case, which is polynomial in the number of variables. 
    \end{corollary}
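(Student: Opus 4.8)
The plan is to derive the corollary as an immediate consequence of Proposition \ref{prp: complexity}, by substituting the hypothesis $\Delta_\text{in} \leq c \log p$ into the achievable bound $\mathcal{O}(p \Delta_\text{in}^2 2^{\Delta_\text{in}})$ and simplifying. The starting observation is that the expression $p\,\Delta_\text{in}^2\,2^{\Delta_\text{in}}$ is monotonically increasing in $\Delta_\text{in}$, since both the polynomial factor $\Delta_\text{in}^2$ and the exponential factor $2^{\Delta_\text{in}}$ grow with $\Delta_\text{in}$. Consequently, it suffices to evaluate the bound at the largest admissible value $\Delta_\text{in} = c \log p$ to obtain the worst-case count in terms of $p$ alone.

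Next I would handle the three factors separately. The linear factor $p$ is untouched. For the polynomial factor, the hypothesis gives $\Delta_\text{in}^2 \leq c^2 \log^2 p$. For the exponential factor, reading $\log$ as the base-$2$ logarithm (the standard convention in this setting), I would invoke the identity $2^{c \log_2 p} = \left(2^{\log_2 p}\right)^c = p^c$, so that $2^{\Delta_\text{in}} \leq p^c$. Multiplying the three bounds then yields
\begin{equation*}
    p\,\Delta_\text{in}^2\,2^{\Delta_\text{in}} \leq p \cdot c^2 \log^2 p \cdot p^c = c^2\,p^{c+1}\log^2 p = \mathcal{O}(p^{c+1}\log^2 p),
\end{equation*}
which is exactly the claimed worst-case bound.

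Finally I would justify that this count is polynomial in $p$. Although $p^{c+1}\log^2 p$ carries a polylogarithmic factor and so is not literally a polynomial, the inequality $\log^2 p \leq p$ holds for all sufficiently large $p$, so the bound is dominated by $p^{c+2}$, a genuine polynomial; hence MARVEL runs in polynomial time in this regime. The only real subtlety is bookkeeping rather than depth: one must fix the logarithm base to be $2$ for the exponential term to collapse to a clean power of $p$. With any other base $b$ the same computation gives $2^{\Delta_\text{in}} \leq p^{c/\log_2 b}$, which merely rescales the constant $c$ in the exponent and leaves the qualitative polynomial-time conclusion unchanged. I therefore expect no genuine obstacle here beyond making this base convention explicit.
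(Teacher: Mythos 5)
Your proposal is correct and is essentially the same argument the paper intends (the paper states the corollary without an explicit proof, as an immediate substitution of $\Delta_\text{in} \leq c\log p$ into the bound $\mathcal{O}(p\Delta_\text{in}^2 2^{\Delta_\text{in}})$ of Proposition \ref{prp: complexity}, using $2^{c\log_2 p} = p^c$). Your additional bookkeeping—fixing the logarithm base to $2$, noting that any other base merely rescales $c$, and observing that $\log^2 p \leq p$ makes the count genuinely polynomial—is sound and fills in exactly what the paper leaves implicit.
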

    If we use any of the algorithms mentioned in Section \ref{sec: Markov boundary} that compute the Markov boundaries with $\mathcal{O}(p^2)$ CI tests, the overall upper bound of MARVEL on the number of CI tests will be
    \begin{equation} \label{eq: upper bound}
        \mathcal{O}(p^2 + p\Delta_{in}^22^{\Delta_{in}}).
    \end{equation}
    \begin{theorem} \label{thm: lwrBound}
		The number of conditional independence tests of the form $X \independent Y\vert \mathbf{S}$ required by any constraint-based algorithm on a graph of order $p$ and maximum in-degree $\Delta_{in}$ in the worst case is lower bounded by
		\begin{equation} \label{eq: lwrbound}
		  \Omega(p^2+p\Delta_{in}2^{\Delta_{in}}).
		\end{equation}
	\end{theorem}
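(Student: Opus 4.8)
The plan is to prove both terms of the bound \eqref{eq: lwrbound} via an \emph{adversarial indistinguishability} argument. The key principle is that a constraint-based algorithm sees only the outcomes of the CI tests it performs: if two DAGs $\mathcal{G}_1,\mathcal{G}_2$ have different essential graphs but agree on the outcome of every test the algorithm performs, the algorithm returns the same answer on both and is therefore wrong on at least one. Hence, to lower bound the number of tests, I would fix a worst-case DAG $\mathcal{G}$ and exhibit a large collection of \emph{critical} tests, where a test $q=(X\independent Y\vert \mathbf{S})$ is critical if there is an alternative DAG $\mathcal{G}_q$ with a different essential graph that agrees with $\mathcal{G}$ on the outcome of \emph{every} test except $q$. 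Any correct algorithm must perform every critical test; counting them yields the bound.

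For the $\Omega(p^2)$ term I would take $\mathcal{G}$ to be the empty graph, where $X\perp_{\mathcal{G}} Y\vert \mathbf{S}$ for all $X,Y,\mathbf{S}$. For each pair $\{X,Y\}$ let $\mathcal{G}_{\{X,Y\}}$ be the DAG consisting of the single edge between $X$ and $Y$. In $\mathcal{G}_{\{X,Y\}}$ the only dependencies are between $X$ and $Y$, so $\mathcal{G}$ and $\mathcal{G}_{\{X,Y\}}$ disagree \emph{only} on tests whose two endpoints are exactly $X$ and $Y$, while having different skeletons and hence different essential graphs. Consequently the algorithm must, for each of the $\binom{p}{2}$ pairs, perform at least one test with that pair as its endpoints, giving $\binom{p}{2}=\Omega(p^2)$ forced tests.

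For the $\Omega(p\Delta_{\text{in}}2^{\Delta_{\text{in}}})$ term I would build a ``needle-in-a-haystack'' gadget. Fix a pair $X,Y$ together with an auxiliary set $\mathbf{W}$ of size $\Delta_{\text{in}}$, and let $X,Y$ be adjacent in the base graph (so every test $X\independent Y\vert\mathbf{S}$, $\mathbf{S}\subseteq\mathbf{W}$, returns dependence). For each subset $\mathbf{S}\subseteq\mathbf{W}$ I would construct a sibling $\mathcal{G}_{\mathbf{S}}$ in which $X,Y$ are non-adjacent and $\mathbf{S}$ is the \emph{unique} subset of $\mathbf{W}$ that d-separates them (e.g.\ by tuning which $W_i$ sit as colliders versus non-colliders on the $X$--$Y$ paths, so that $X\perp_{\mathcal{G}_{\mathbf{S}}} Y\vert \mathbf{S}'$ holds exactly when $\mathbf{S}'=\mathbf{S}$), while arranging the rest of the structure so that $\mathcal{G}_{\mathbf{S}}$ agrees with the base graph on every test other than $X\independent Y\vert\mathbf{S}$. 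Then each such $\mathbf{S}$-test is critical: as long as some subset $\mathbf{S}_0$ has not been queried, the adversary can answer ``dependent'' consistently with both the adjacent base graph and with $\mathcal{G}_{\mathbf{S}_0}$, which have different skeletons. This forces all $2^{\Delta_{\text{in}}}$ subset-tests on $(X,Y)$. Note that this is precisely the exhaustive subset search that Lemma~\ref{lem: neighbor} performs to \emph{confirm} a neighbor, so the construction is tight against the upper bound. Finally I would pack $\Theta(p/\Delta_{\text{in}})$ vertex-disjoint copies of such a gadget into a graph of order $p$ and maximum in-degree $\Delta_{\text{in}}$, forcing $\Theta(\Delta_{\text{in}}^2 2^{\Delta_{\text{in}}})$ tests in each copy (over its $\Theta(\Delta_{\text{in}}^2)$ internal hard pairs) and hence $\Omega(p\Delta_{\text{in}}2^{\Delta_{\text{in}}})$ in total; adding the two bounds gives \eqref{eq: lwrbound}.

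The hard part will be making the sibling family genuinely single-test-distinguishable, i.e.\ guaranteeing that $\mathcal{G}$ and $\mathcal{G}_{\mathbf{S}}$ disagree \emph{only} on $X\independent Y\vert\mathbf{S}$. The danger is \emph{information leakage}: changing the collider pattern among $\mathbf{W}$ to encode an arbitrary separator $\mathbf{S}$ can alter CI relations not involving the pair $(X,Y)$ (for instance tests among the vertices of $\mathbf{W}$), which would let the algorithm recover $\mathbf{S}$ cheaply and collapse the exponential lower bound to a polynomial one. The construction must therefore be engineered so that the status of each $W_i$ is visible \emph{only} through tests on the pair $(X,Y)$, yet still combines multiplicatively across the $\Delta_{\text{in}}$ coordinates, all while keeping every graph in the family a valid faithful DAG of in-degree at most $\Delta_{\text{in}}$ and ensuring the $\Theta(p\Delta_{\text{in}})$ forced tests are distinct so that they add up. Verifying these indistinguishability and acyclicity constraints simultaneously is the main technical obstacle; the two counting steps above are then routine.
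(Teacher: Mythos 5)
Your overall strategy---an adversarial indistinguishability argument, the empty graph versus single-edge graphs for the $\Omega(p^2)$ term, and $\Theta(p/\Delta_{\text{in}})$ vertex-disjoint gadgets each hiding a unique separator for the $\Omega(p\Delta_{\text{in}}2^{\Delta_{\text{in}}})$ term---is exactly the paper's strategy, and your $\Omega(p^2)$ half is complete and matches the paper's Example 2. However, the core of the theorem is the construction of the exponential gadget, and that is precisely the step you leave open: you yourself identify ``information leakage'' as the main obstacle and do not resolve it. The worry is well-founded. The natural encoding you sketch (making $W_i\in\mathbf{S}$ common parents and $W_j\notin\mathbf{S}$ common children of $X,Y$) does leak: e.g.\ $W_i\notindependent W_j$ marginally when $W_j$ is a descendant of $W_i$ through $X$, so $O(\Delta_{\text{in}}^2)$ tests among $\mathbf{W}$ reveal the collider pattern and collapse the bound. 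So as written, the proposal contains a genuine gap rather than a complete proof.

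The missing idea in the paper is to make each gadget a \emph{complete DAG} on $\Delta_{\text{in}}+1$ vertices (clusters $C_1,\dots,C_{\floor{p/(\Delta_{\text{in}}+1)}}$ with no edges between clusters). Then every CI test internal to a cluster answers ``dependent'' regardless of the conditioning set and every cross-cluster test answers ``independent,'' so there is nothing to leak: the only information an algorithm can extract is which traces $\mathbf{S}\cap C_k$ it has tried for each within-cluster pair. If fewer than $\floor{p/(\Delta_{\text{in}}+1)}\binom{\Delta_{\text{in}}+1}{2}2^{\Delta_{\text{in}}-1}$ tests are performed, some pair $\{X_{i^*},X_{j^*}\}$ in some cluster is queried with fewer than $2^{\Delta_{\text{in}}-1}$ distinct traces, so some $\mathbf{S}^*\subseteq C_1\setminus\{X_{i^*},X_{j^*}\}$ is never a trace; the adversary then exhibits $\mathcal{H}$ identical to $\mathcal{G}$ except that the edge $(X_{i^*},X_{j^*})$ is deleted and the causal order inside $C_1$ is permuted to put $\mathbf{S}^*$ first, making $\mathbf{S}^*$ the unique within-cluster separator. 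Note also that your notion of a ``critical test'' (an alternative graph differing on exactly one test) is stronger than what is achievable or needed---even your single-edge graphs differ from the empty graph on all conditioning sets for that pair---and the correct relaxation is that the alternative is chosen \emph{after} seeing the query set and need only agree on the tests actually performed, with disagreement confined to one pair and one trace class. With that construction and reframing supplied, your counting goes through and recovers the paper's bound.
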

    For constant $\Delta_{in}$ or more generally  $\Delta_{in}=o(\log p)$, Equations \ref{eq: upper bound} and \ref{eq: lwrbound} are quadratic in $p$. 
    For larger values of $\Delta_{in}$, the upper bound differs from the lower bound by a factor of $\Delta_{in}$.
    
    \begin{table}[h]
		\centering
		\caption{Number of required CI tests in the worst case by various algorithms for causal structure learning.}
		\begin{tabular}{ N|M{3cm}||M{7cm}| }
			\hline
			 & Algorithm
			&  Number of CI tests in the worst case\\
			\hline
			\hline
			& PC\footnotemark
			& $\mathcal{O}(p^\Delta)$ \\
			\hline
			&GS 
			& $\mathcal{O}(p^2+ p \alpha^2 2^\alpha)$\\
			\hline
			&CS  
			& $\mathcal{O}(p^22^\alpha$)\\
			\hline
			&MMPC 
			& $\mathcal{O}(p^22^\alpha$)\\
			\hline
			&MARVEL  
			& $\mathcal{O}(p^2+p\Delta_{in}^22^{\Delta_{in}})$\\
			\hline
			\hline
			&Lower bound
			& $\Omega(p^2 + p\Delta_{in}2^{\Delta_{in}})$\\
			\hline
		\end{tabular}
		\label{table}
	\end{table}
	\footnotetext{If PC priory knows the exact value of $\Delta_{in}$ as side information, its upper bound  will be $\mathcal{O}(p^{\Delta_{in}})$.}
    Table \ref{table} compares the complexity of various algorithms, where $\Delta$ and $\alpha$ denote the maximum degree and the maximum Markov boundary size of the causal graph, respectively. 
    In general $\Delta_\text{in} \leq \Delta \leq \alpha$. 
    Additionally, in a DAG with a constant in-degree, $\Delta$ and $\alpha$ can grow linearly with the number of variables. 
    Therefore, not only MARVEL has a significantly smaller worst-case bound, but also the complexity bound for all other algorithms can be exponential in some regimes where MARVEL remains polynomial.

\section{Experiments} \label{sec: experiments}
	We evaluate MARVEL and compare it with other methods in two settings\footnote{All of the experiments were run in MATLAB on a MacBook Pro laptop equipped with a 1.7 GHz Quad-Core Intel Core i7 processor and a 16GB, 2133 MHz, LPDDR3 RAM.}. 
	In Subsection \ref{sec: noisless}, we assess the complexity of various causal structure learning algorithms in terms of the number of CI tests and size of conditioning sets, given oracle CI tests, i.e., when algorithms have access to true conditional independence relations among the variables.
	This is similar to assuming that the size of the observed samples is large enough to recover the conditional independence relations without any error. 
	In this case, all of the algorithms recover the essential graph corresponding to the causal graph. 
	In Subsection \ref{sec: noisy}, we evaluate the algorithms on finite sample data, where we compare both the complexity and the accuracy of the algorithms over a wide range of sample sizes.
	
    Our comparison cohort includes the modified version of PC algorithm that starts from the moralized graph\footnote{The moralized graph of a DAG is the undirected graph in which every vertex is connected to all the variables in its Markov boundary.} instead of the complete graph (to make a fair comparison with algorithms that start with Markov boundary information) \citep{pellet2008using, spirtes2000causation}, GS \citep{margaritis1999bayesian}, CS \citep{pellet2008using}, and MMPC \citep{tsamardinos2003time} algorithms. 
    We use MATLAB implementations of PC and MMPC provided in \citep{murphy2001bayes, Murphy2019bayest} and \citep{Tsirilis2018M3HC}, respectively. 
    The implementation of MARVEL is available in https://github.com/Ehsan-Mokhtarian/MARVEL.
    \subsection{Oracle Setting} \label{sec: noisless}
        \begin{figure*}[t]
            \centering
            \captionsetup{justification=centering}
            \begin{subfigure}[b]{\textwidth}
                \centering
                \includegraphics[width=0.55\textwidth]{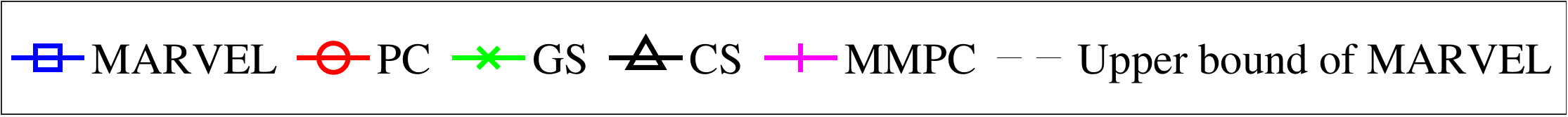}
            \end{subfigure} 
            
            \begin{subfigure}[b]{0.8\textwidth}
                \centering 
                \hspace*{-0.4cm}\includegraphics[width= 0.4 \textwidth]{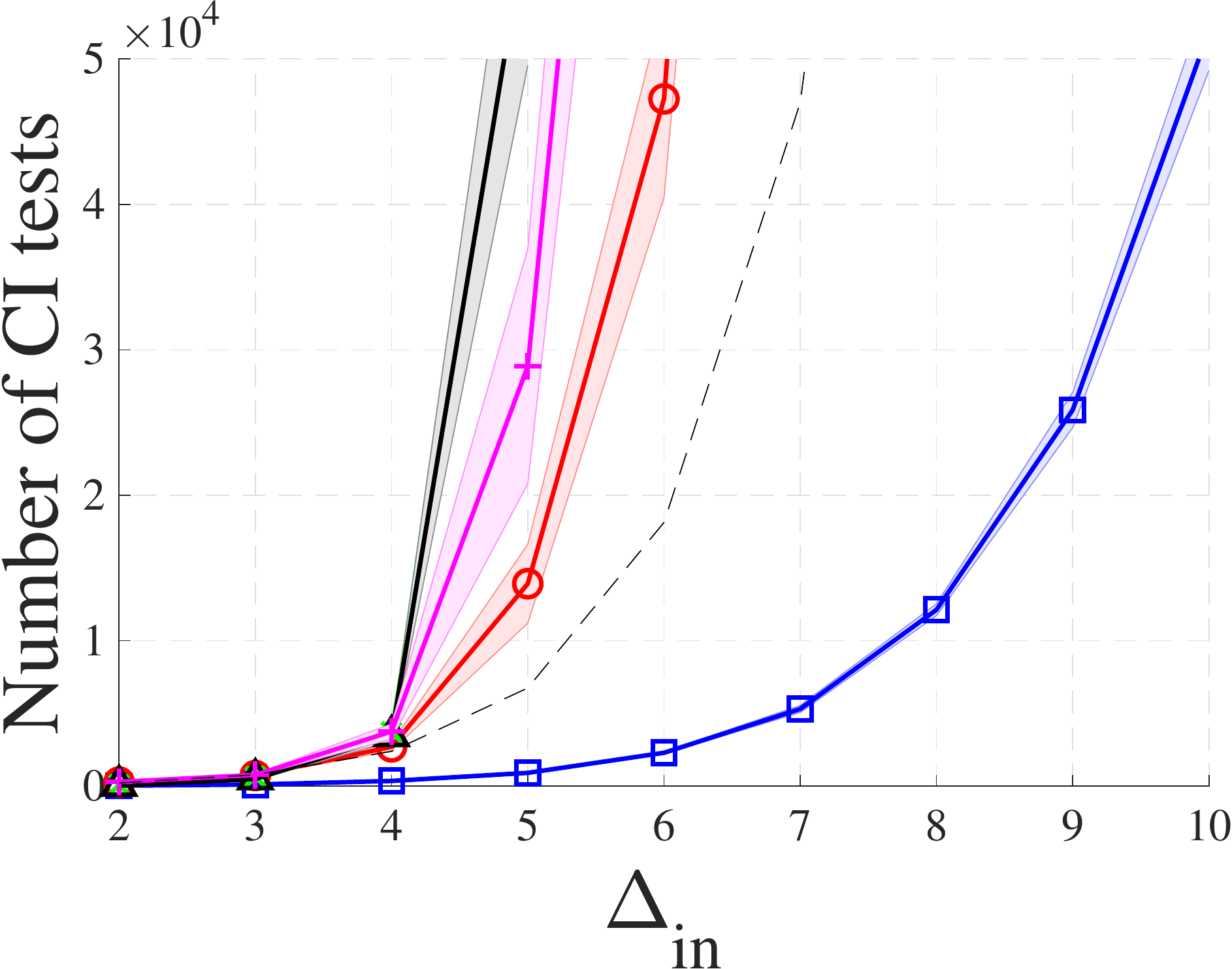}
                \hspace*{0.4cm}\includegraphics[width= 0.4 \textwidth]{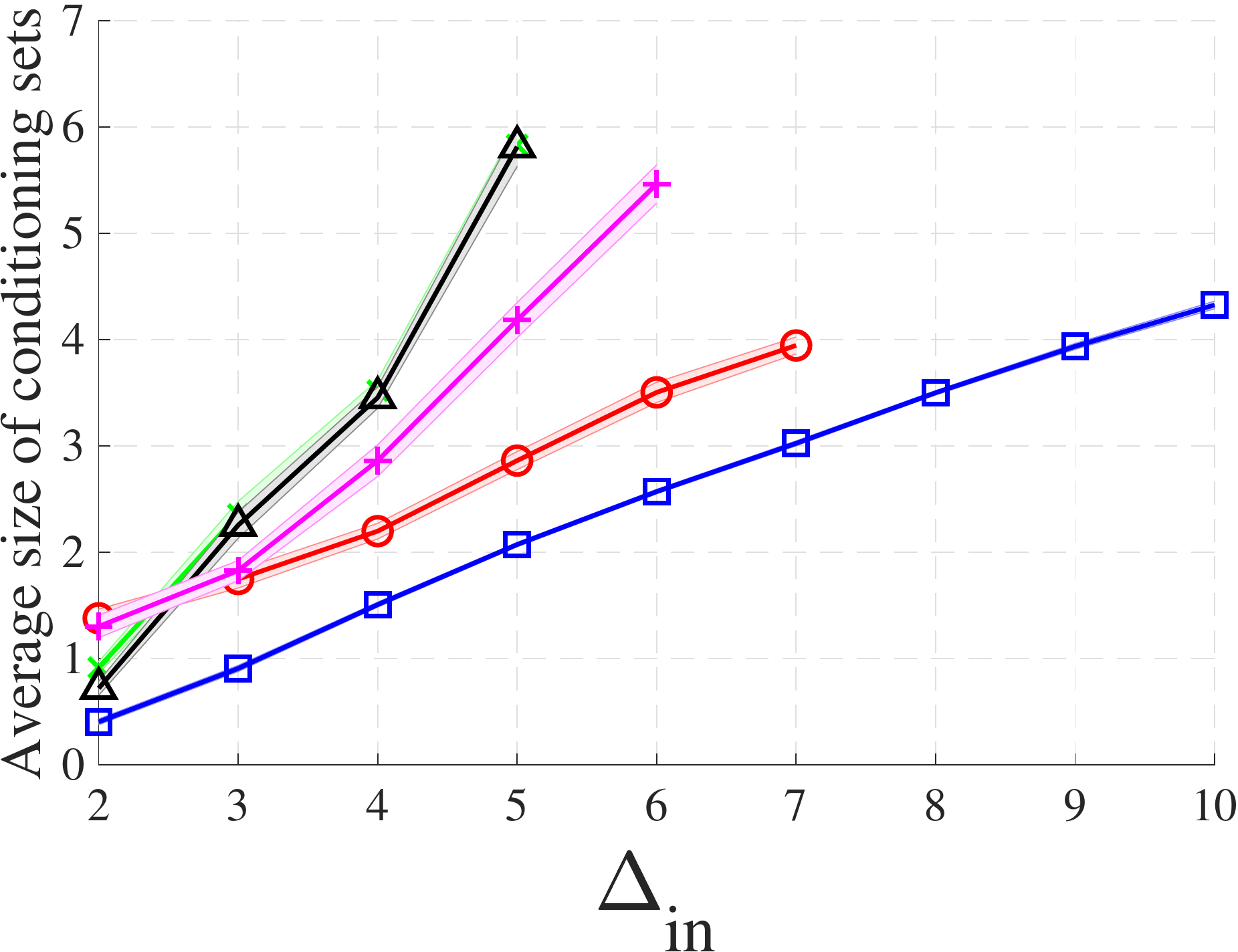}
                \caption{Fixed number of vertices ($p=25$)}
                \label{fig3, a:fixed order, varying indegree}
            \end{subfigure}
            
            \begin{subfigure}[b]{0.8\textwidth}
                \centering
                \hspace*{-0.4cm}\includegraphics[width = 0.4 \textwidth]{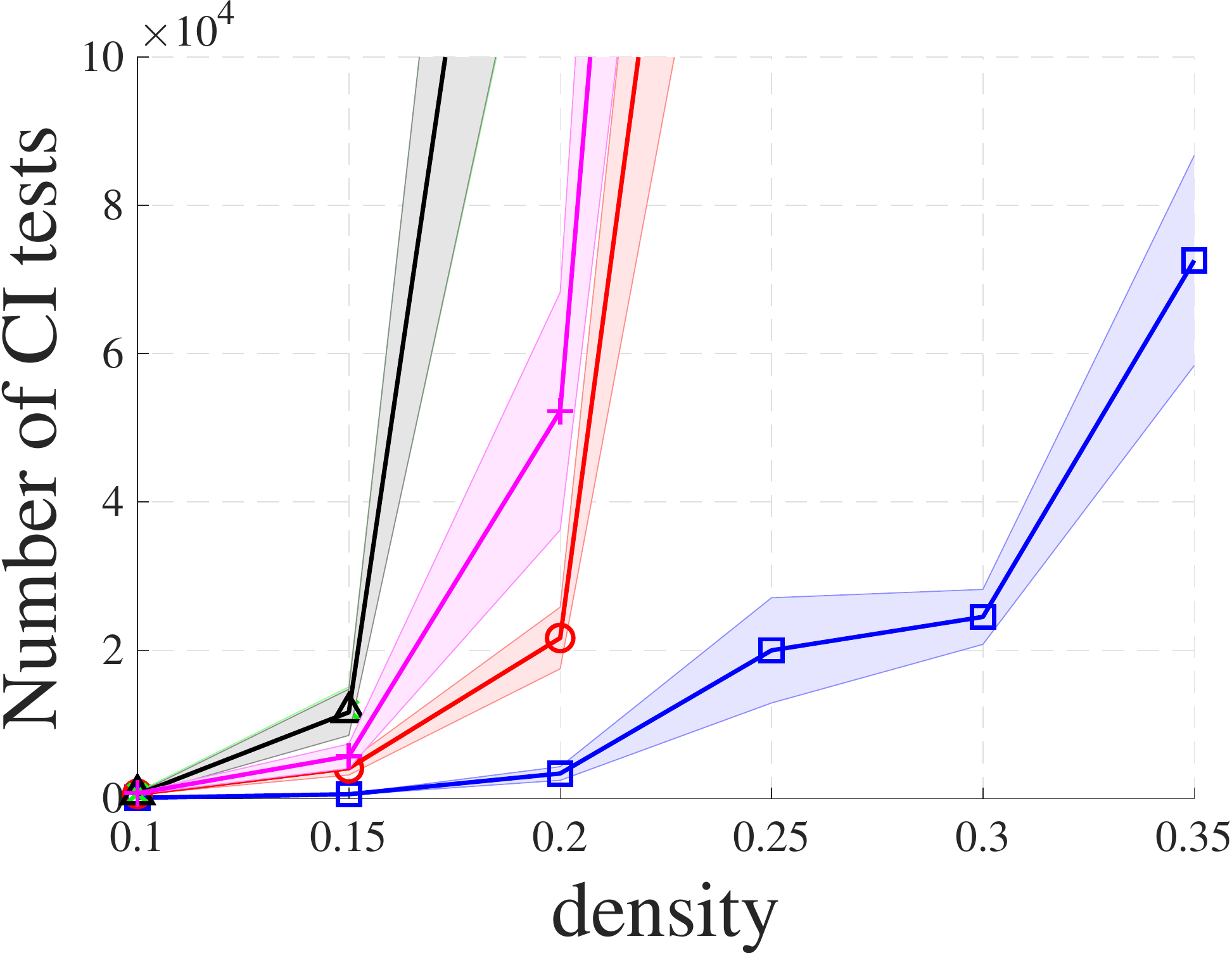}
                \hspace*{0.4cm}\includegraphics[width = 0.4 \textwidth]{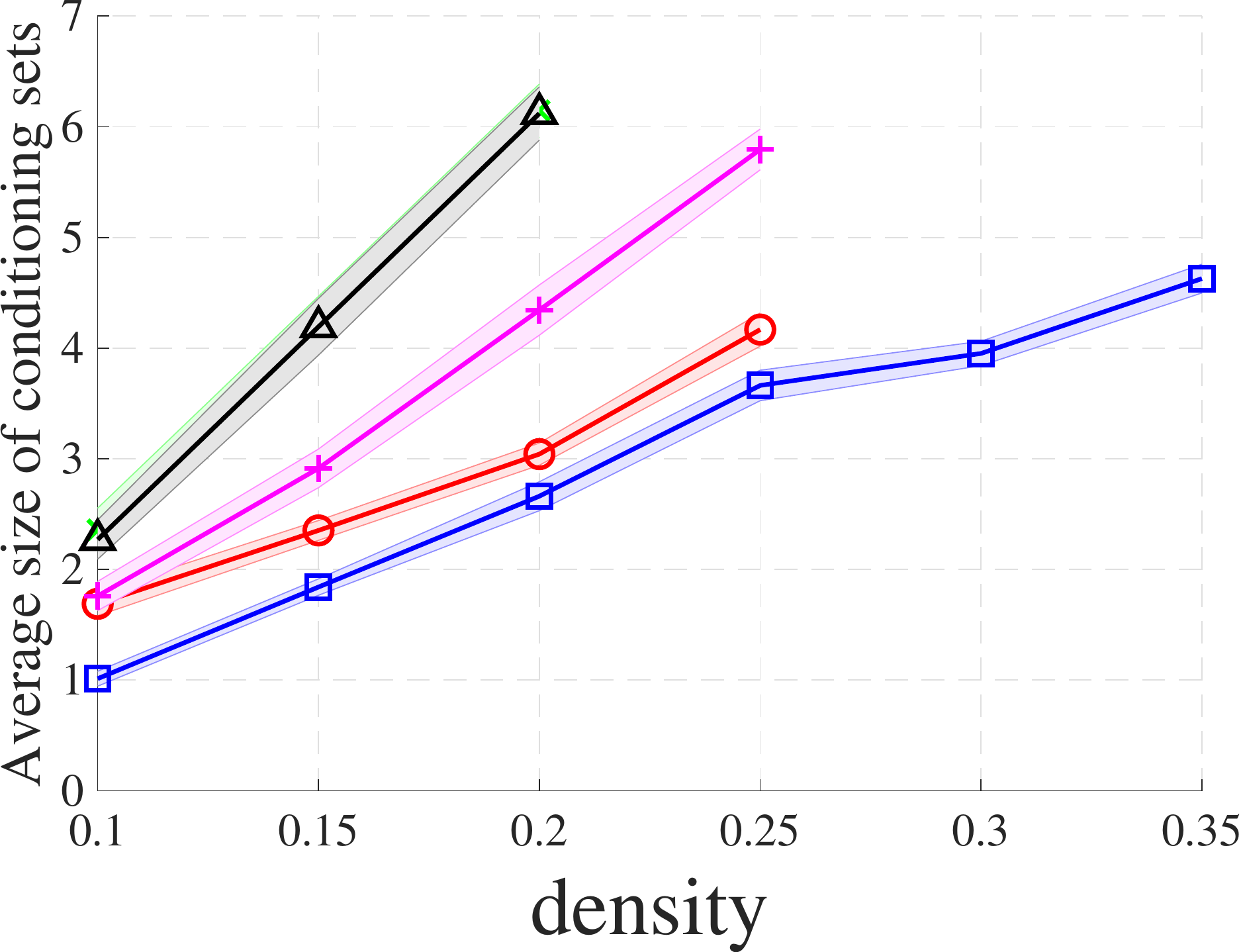}
                \caption{Fixed number of vertices ($p=25$)}
                \label{fig3, b:fixed order, varying density}
            \end{subfigure} 

            \begin{subfigure}[b]{0.8\textwidth}
                \centering
                \hspace*{-0.4cm}\includegraphics[width = 0.4 \textwidth]{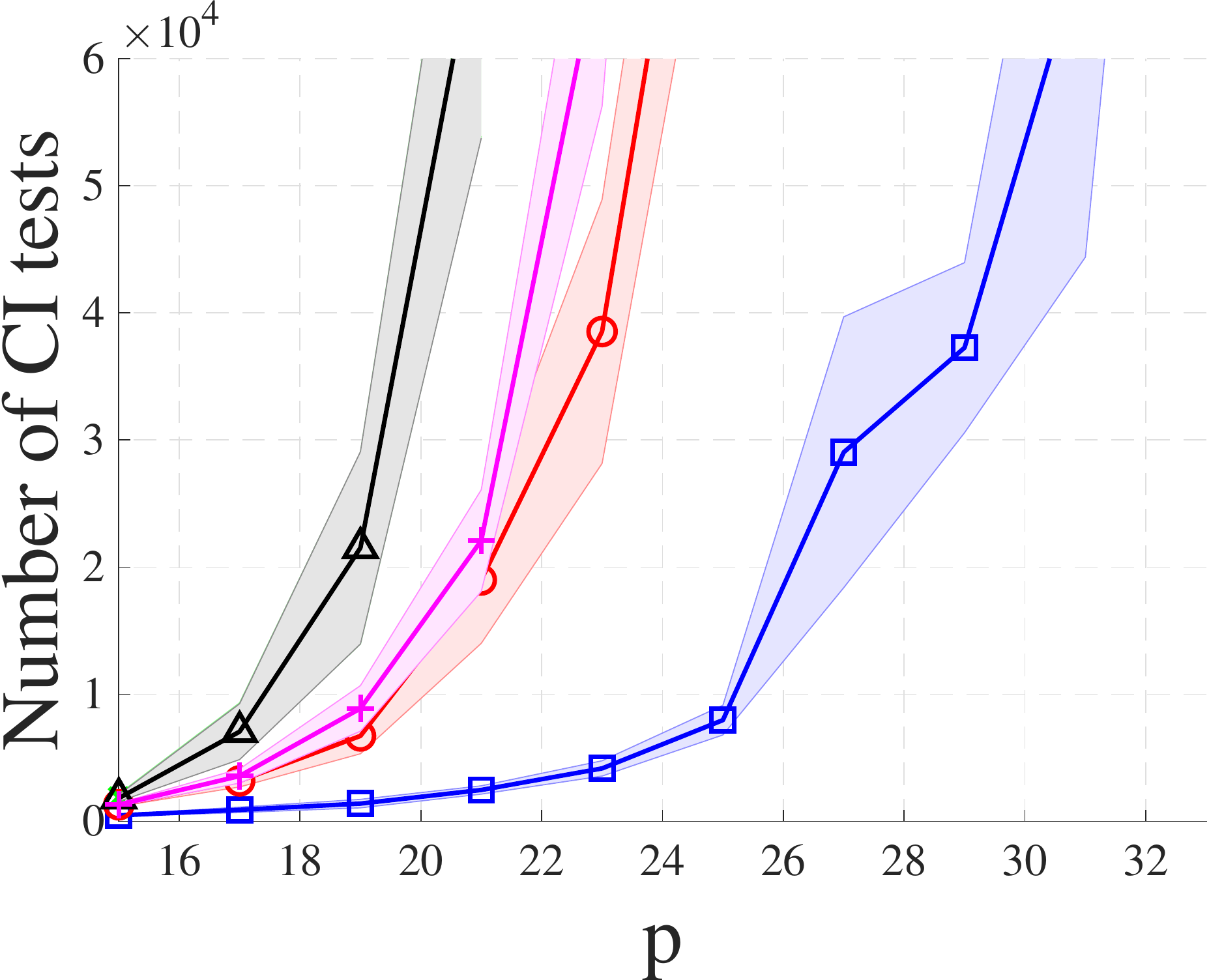}
                \hspace*{0.4cm}\includegraphics[width = 0.4 \textwidth]{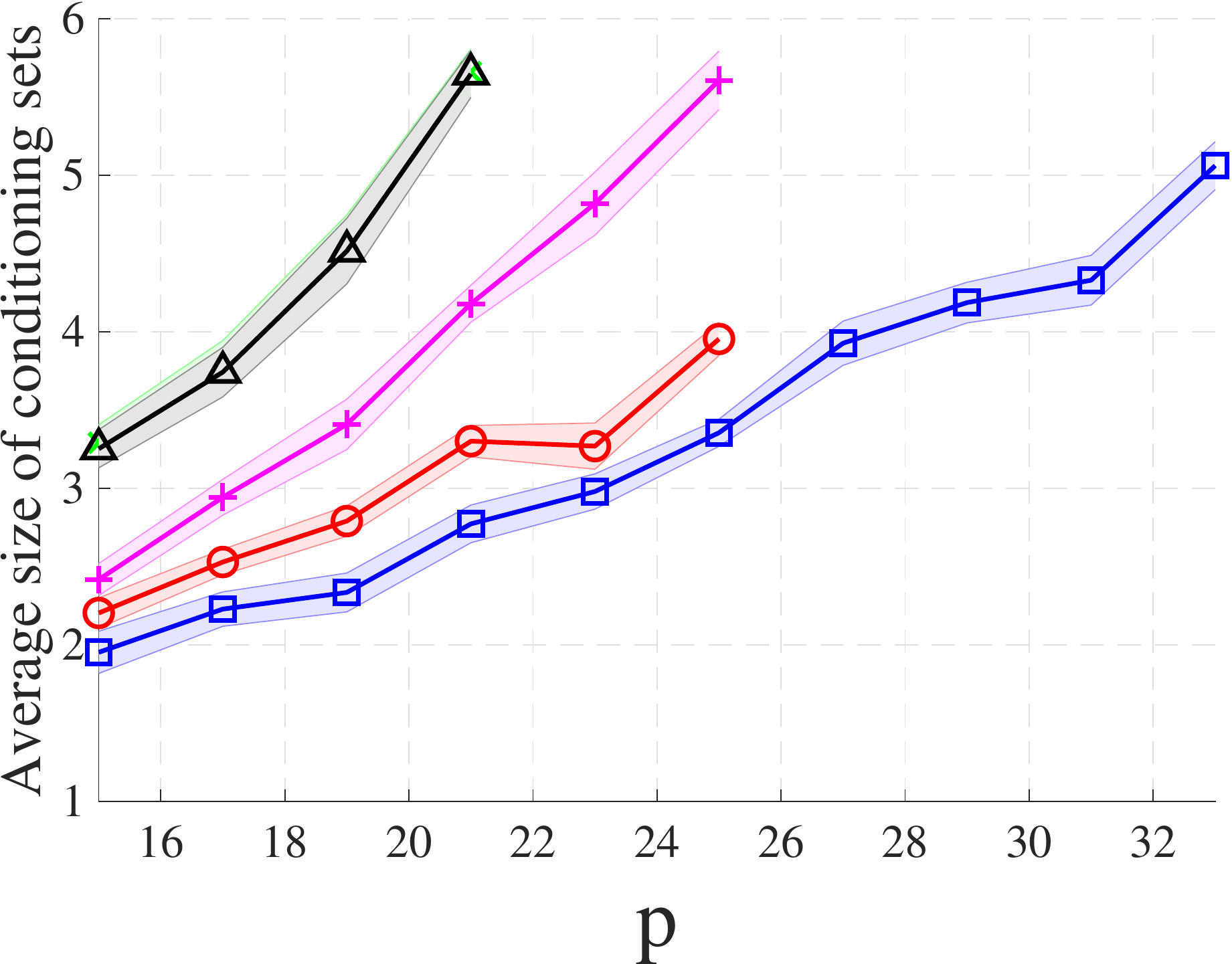}
                \caption{Fixed density  (density is 0.25)}
                \label{fig3, c:fixed density, varying order}
            \end{subfigure} 
            
            \begin{subfigure}[b]{0.8\textwidth}
                \centering
                \hspace*{-0.4cm}\includegraphics[width = 0.4 \textwidth]{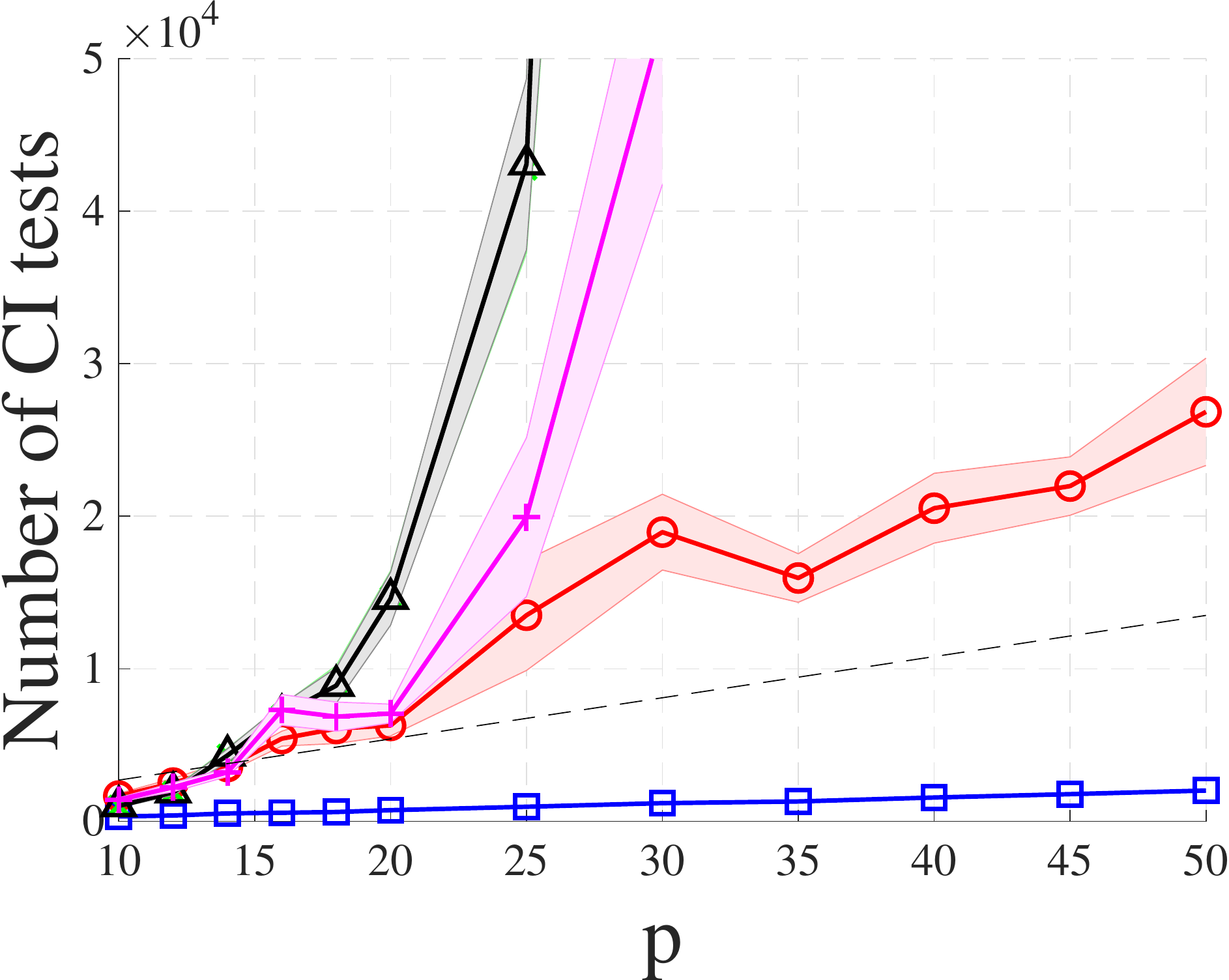}
                \hspace*{0.4cm}\includegraphics[width = 0.4 \textwidth]{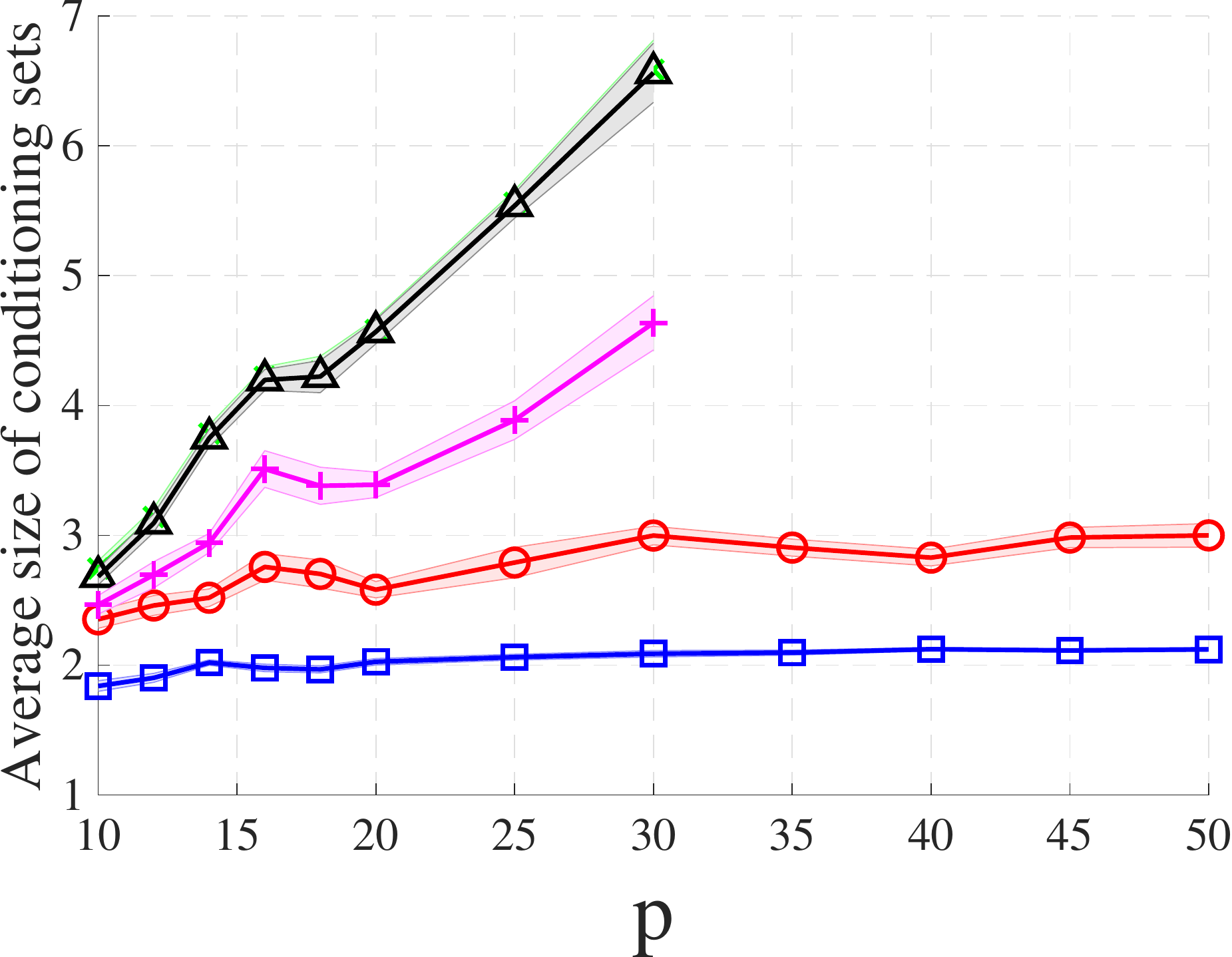}
                \caption{Fixed maximum in-degree ($\Delta_\text{in}=5$)}
                \label{fig3, d:fixed indegree, varying order}
            \end{subfigure} 
            \caption{Structure learning using oracle CI tests after Markov boundary discovery.}
            \label{fig: oracle}
        \end{figure*}
        
        We use d-separation relations in the causal DAG as the oracle answers for the CI tests in this setting.
        This is equivalent to having access to the joint distribution of the variables instead of a finite number of samples. 
        We report the number of performed CI tests and the size of conditioning sets which are the main factors determining the time complexity of the algorithms. 
        Note that CI tests with smaller conditioning sets are more reliable.
        
        We use two random graph models for generating the causal DAGs. 
        The first model is the directed Erd\H{o}s-R\`enyi model $G(p,m)$ \citep{erdHos1960evolution}, which provides uniform distribution over DAGs with $p$ vertices and $m$ edges (The skeleton is first sampled from the undirected Erd\H{o}s-R\`enyi model, and then the edges are oriented with respect to a random ordering over the variables). 
        As the second model, we use random DAGs with a fixed maximum in-degree $\Delta_{in}$, which are generated as follows. 
        We fix a random ordering of the variables. 
        For each vertex in the graph, we choose $\Delta_{in}$ potential parents among the other variables uniformly at random. 
        We then choose the parents that do not violate the ordering. 
        This procedure yields a DAG. 

        Figure \ref{fig: oracle} shows the experimental results regarding the number of CI tests. 
        Subfigures \ref{fig3, a:fixed order, varying indegree} and \ref{fig3, d:fixed indegree, varying order} depict the result for the fixed $\Delta_\text{in}$ model. 
        Subfigures \ref{fig3, b:fixed order, varying density} and \ref{fig3, c:fixed density, varying order} depict the result for the Erd\H{o}s-R\`enyi model, where the density of a graph is defined as the number of edges divided by the maximum possible number of edges, $\binom{p}{2}$. 
        Each point in the plots is obtained using 20 DAGs. 
        The shaded bars denote the $80\%$ confidence intervals \citep{shadederror}. 
        As seen in the figures, MARVEL requires substantially fewer CI tests with smaller conditioning sets, and outperforms all other algorithms, especially on the graphs with fixed max in-degree. 
        Also, as witnessed by Subfigures \ref{fig3, a:fixed order, varying indegree} and \ref{fig3, d:fixed indegree, varying order}, even the worst-case upper bound of our proposed method is below the number of CI tests required by the other algorithms.
        
    \subsection{Finite Sample Setting} \label{sec: noisy}
        In this setting, we have access to a finite number of samples from $P_\mathbf{V}$. 
        We evaluate the algorithms on various real-world structures available at Bayes Network Repository \citep{bnrepository} along with random graphs described in the last part.
        
        We compare the algorithms in three scenarios. 
        \subsubsection{Scenario 1: The effect of sample size on real-world structures}
            \begin{figure*}[t]
            \centering
            \begin{subfigure}[b]{\textwidth}
                \centering
                \includegraphics[width=0.5\textwidth]{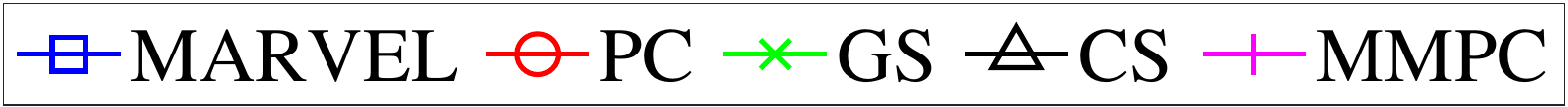}
            \end{subfigure}
            
            \begin{subfigure}[b]{1\textwidth}
                \centering
                \includegraphics[width=0.3\textwidth]{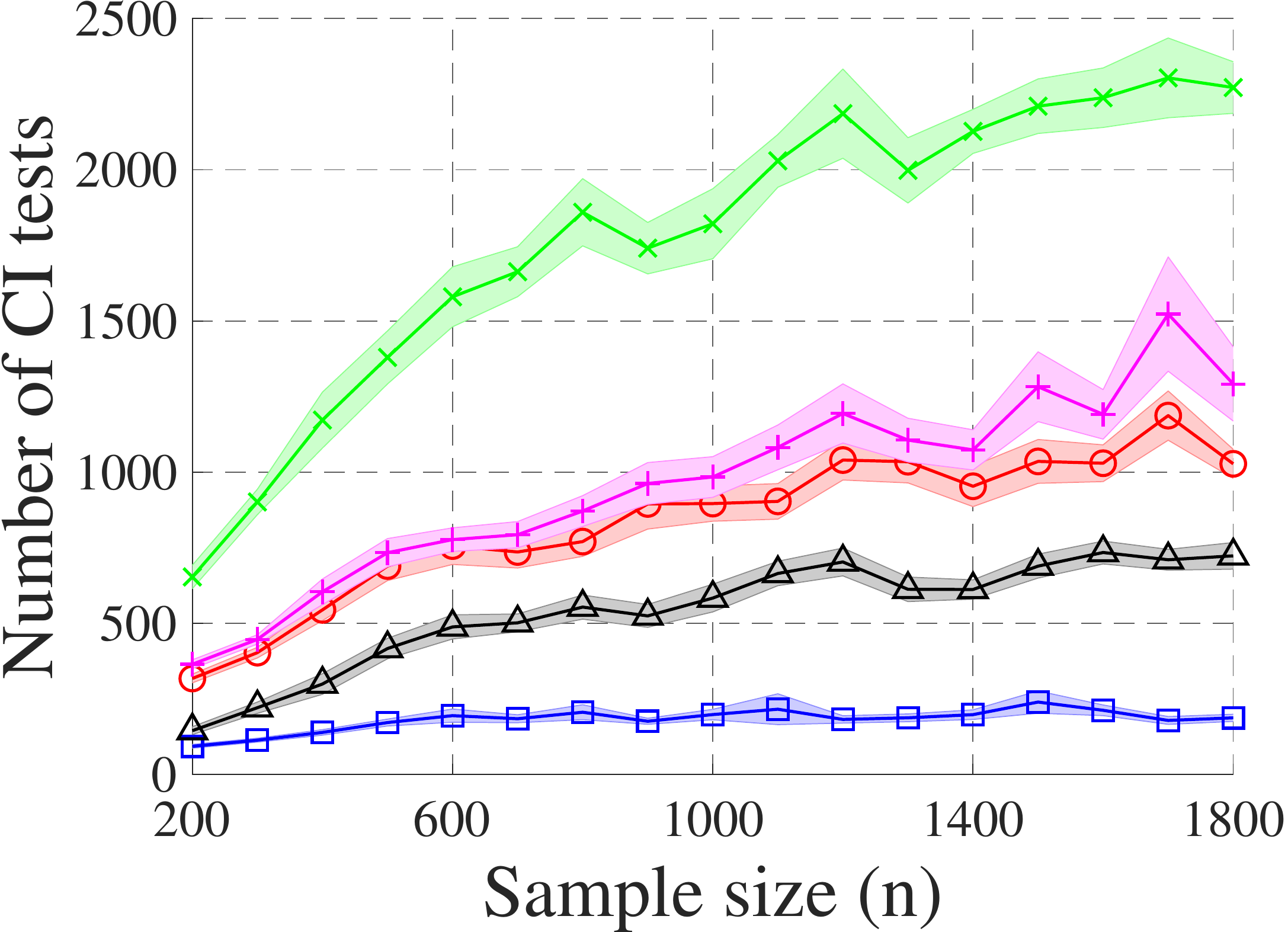}
                \hfill
                \includegraphics[width=0.3\textwidth]{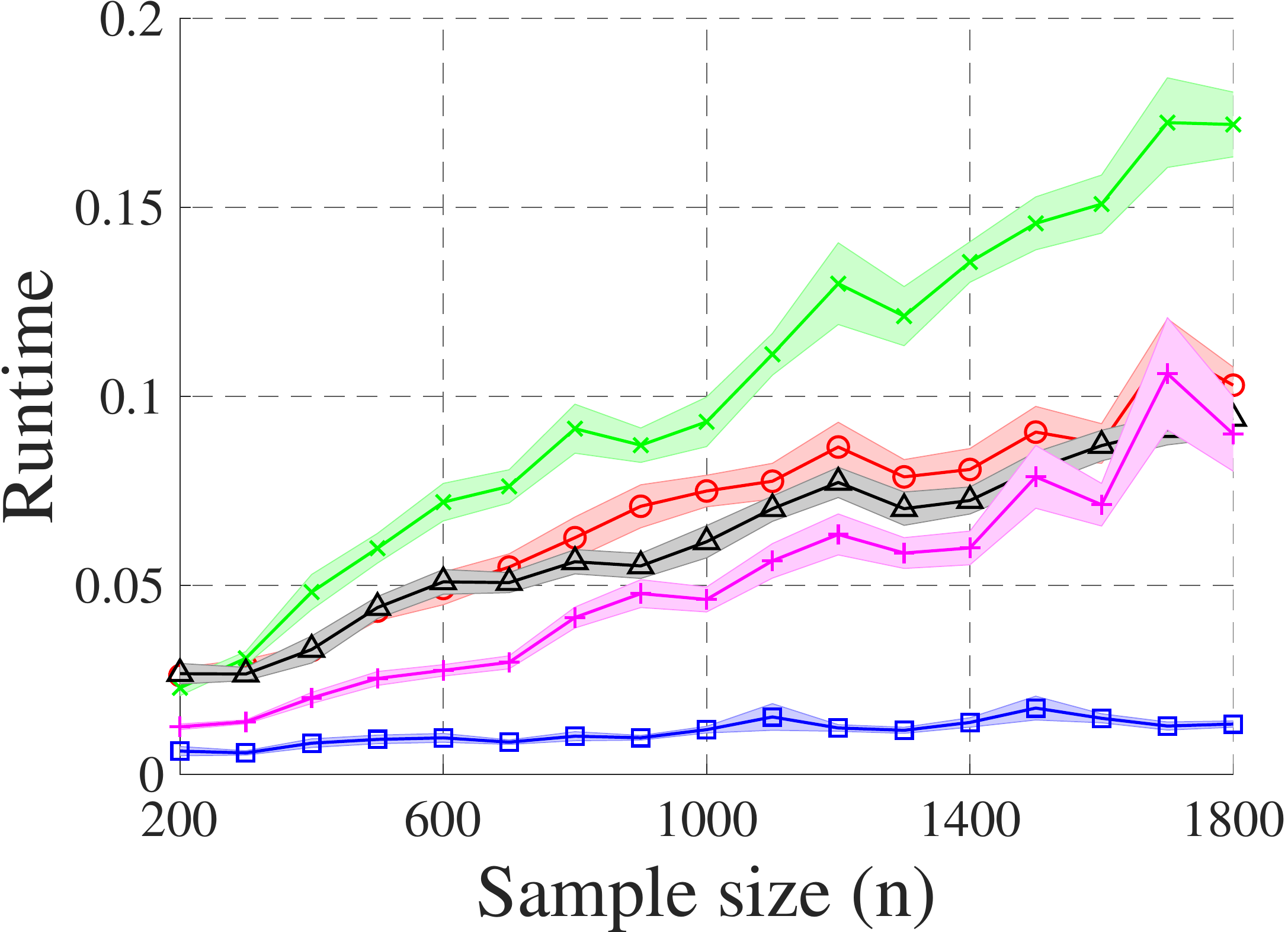}
                \hfill
                \includegraphics[width=0.3\textwidth]{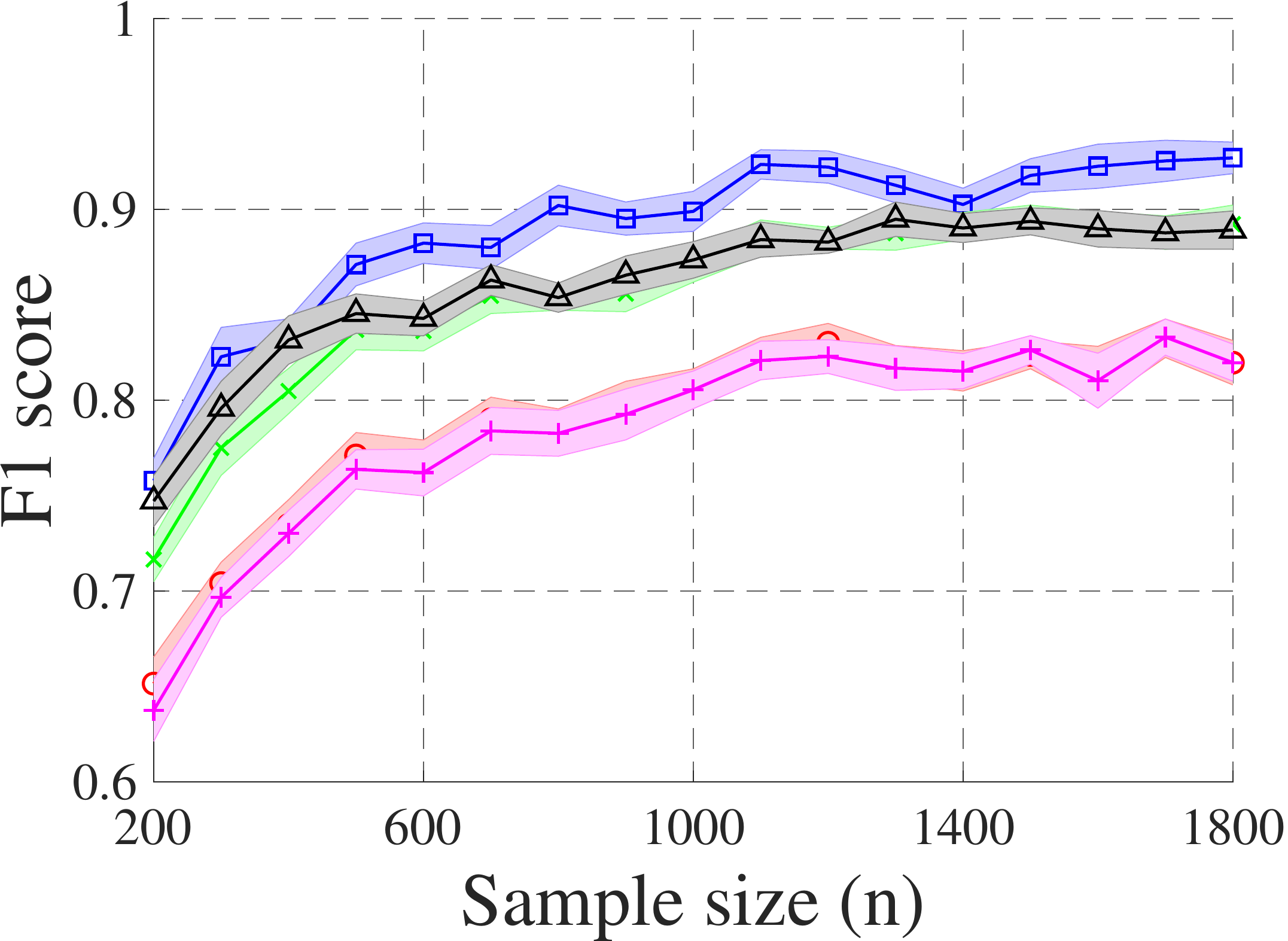}
                \caption{Insurance ($\left\vert V\right\vert=27, \left\vert E\right\vert=51$)}
            \end{subfigure}
            
            \begin{subfigure}[b]{1\textwidth}
                \centering
                \includegraphics[width=0.3\textwidth]{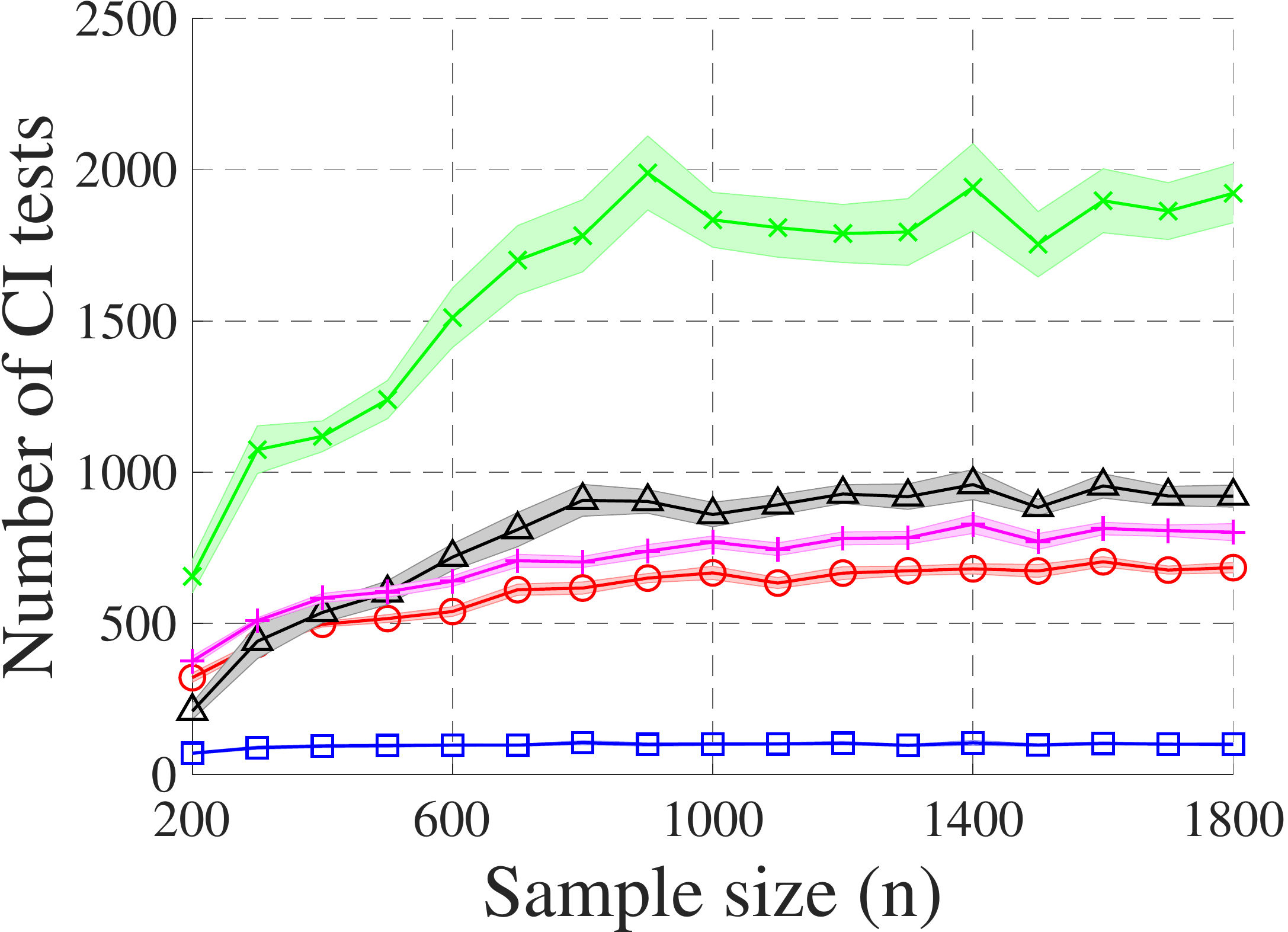}
                \hfill
                \includegraphics[width=0.3\textwidth]{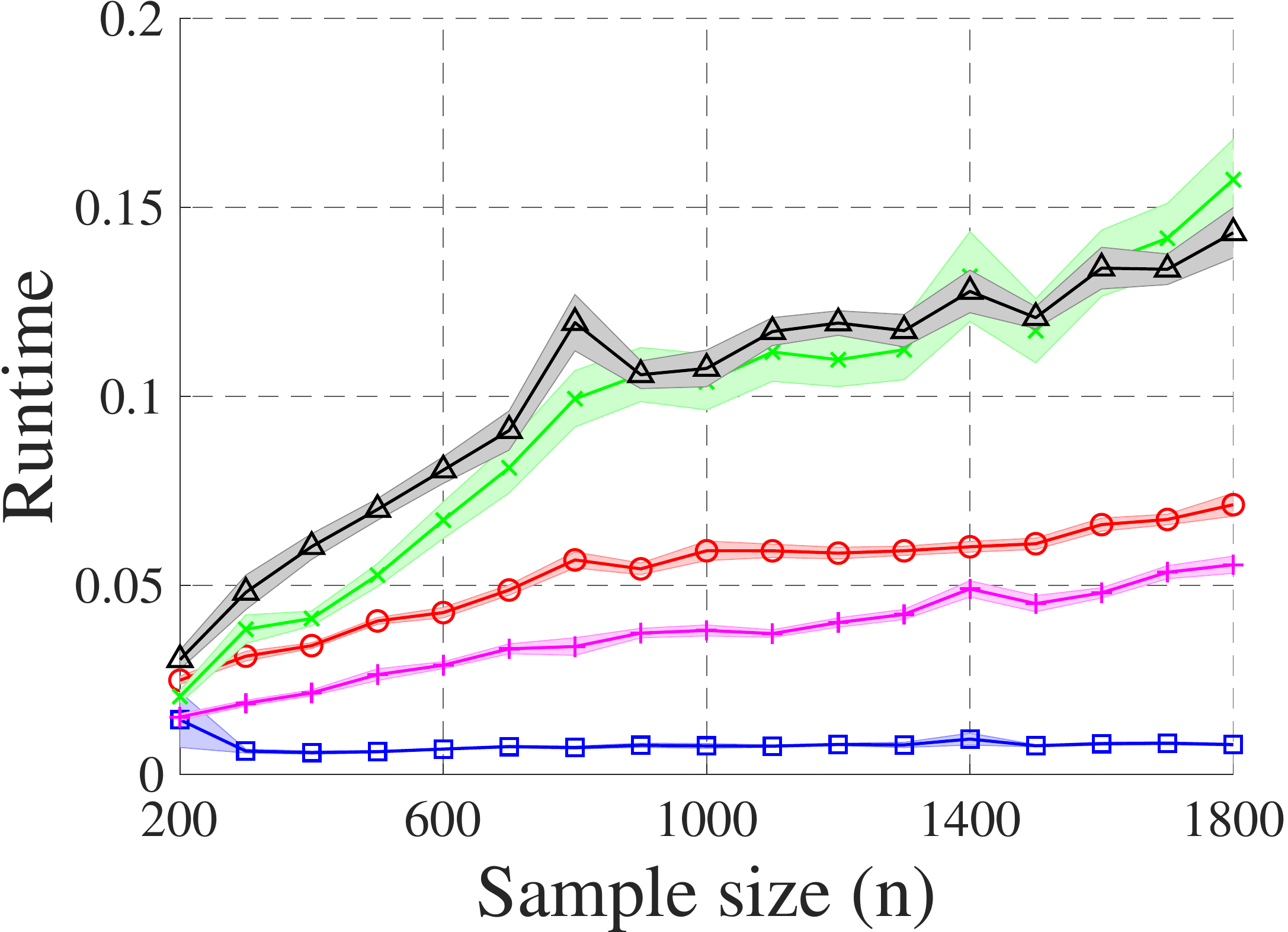}
                \hfill
                \includegraphics[width=0.3\textwidth]{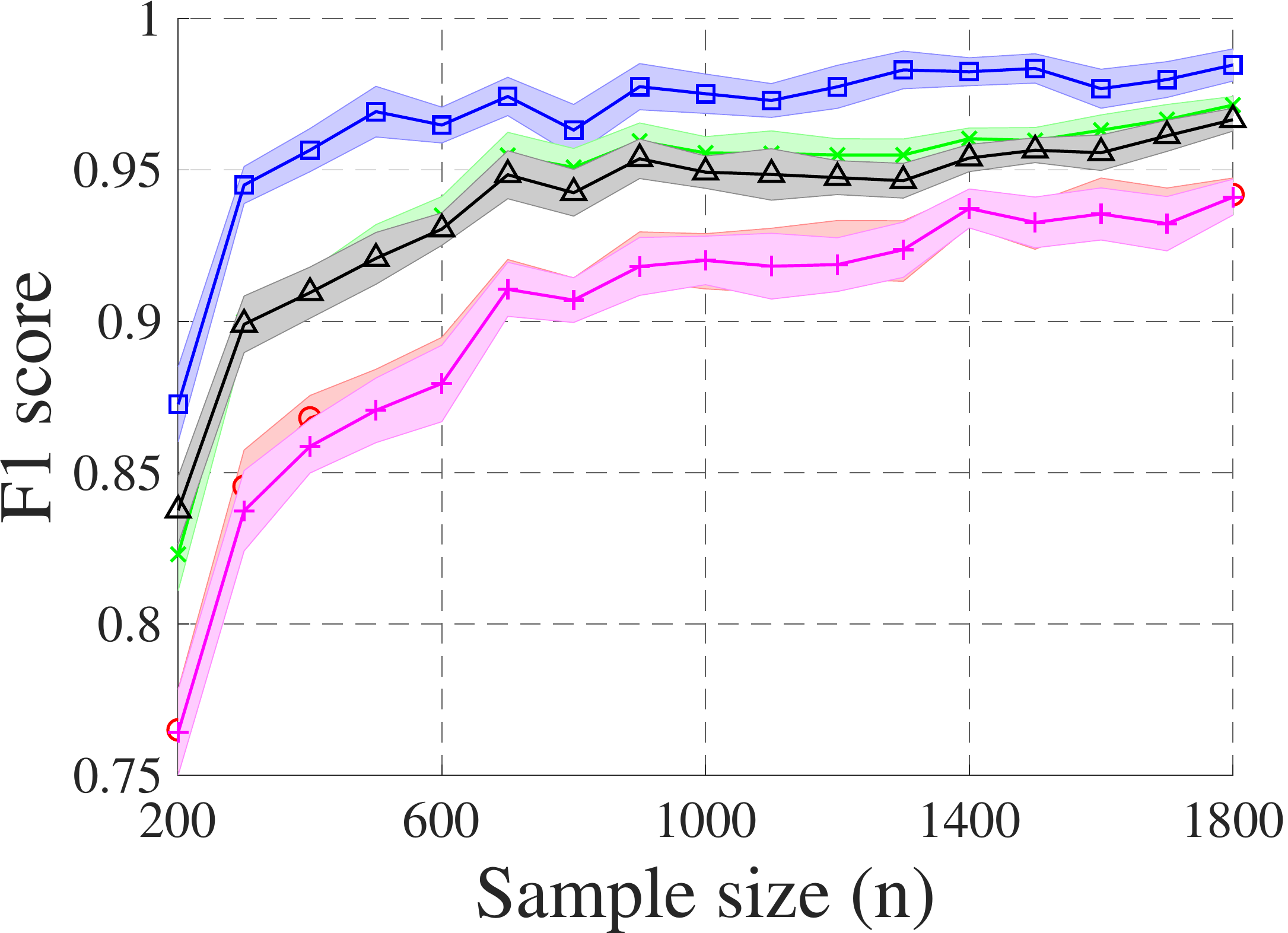}
                \caption{Mildew ($\left\vert V\right\vert=35, \left\vert E\right\vert=46$)}
            \end{subfigure}
            
            \begin{subfigure}[b]{1\textwidth}
                \centering
                \includegraphics[width=0.3\textwidth]{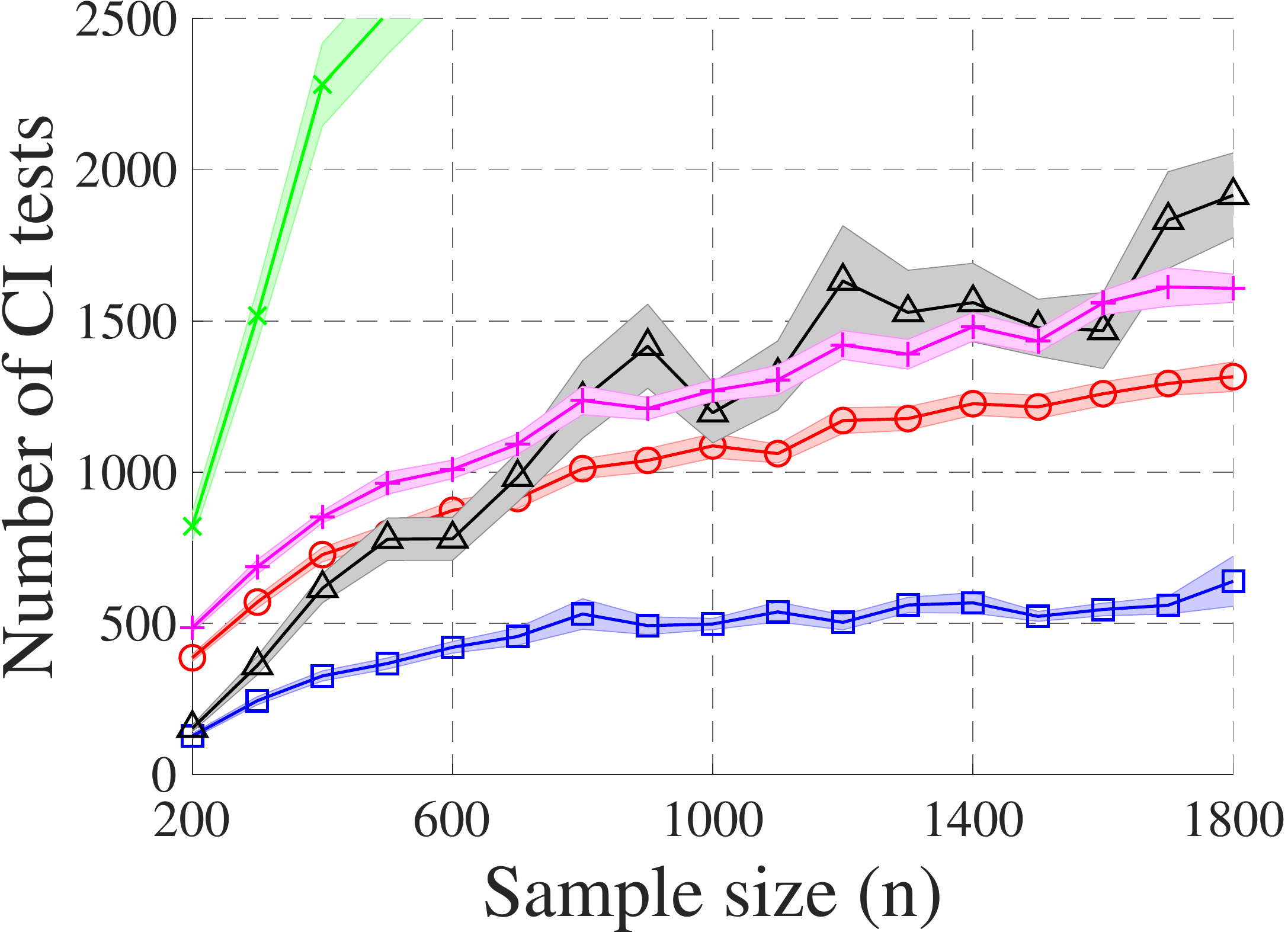}
                \hfill
                \includegraphics[width=0.3\textwidth]{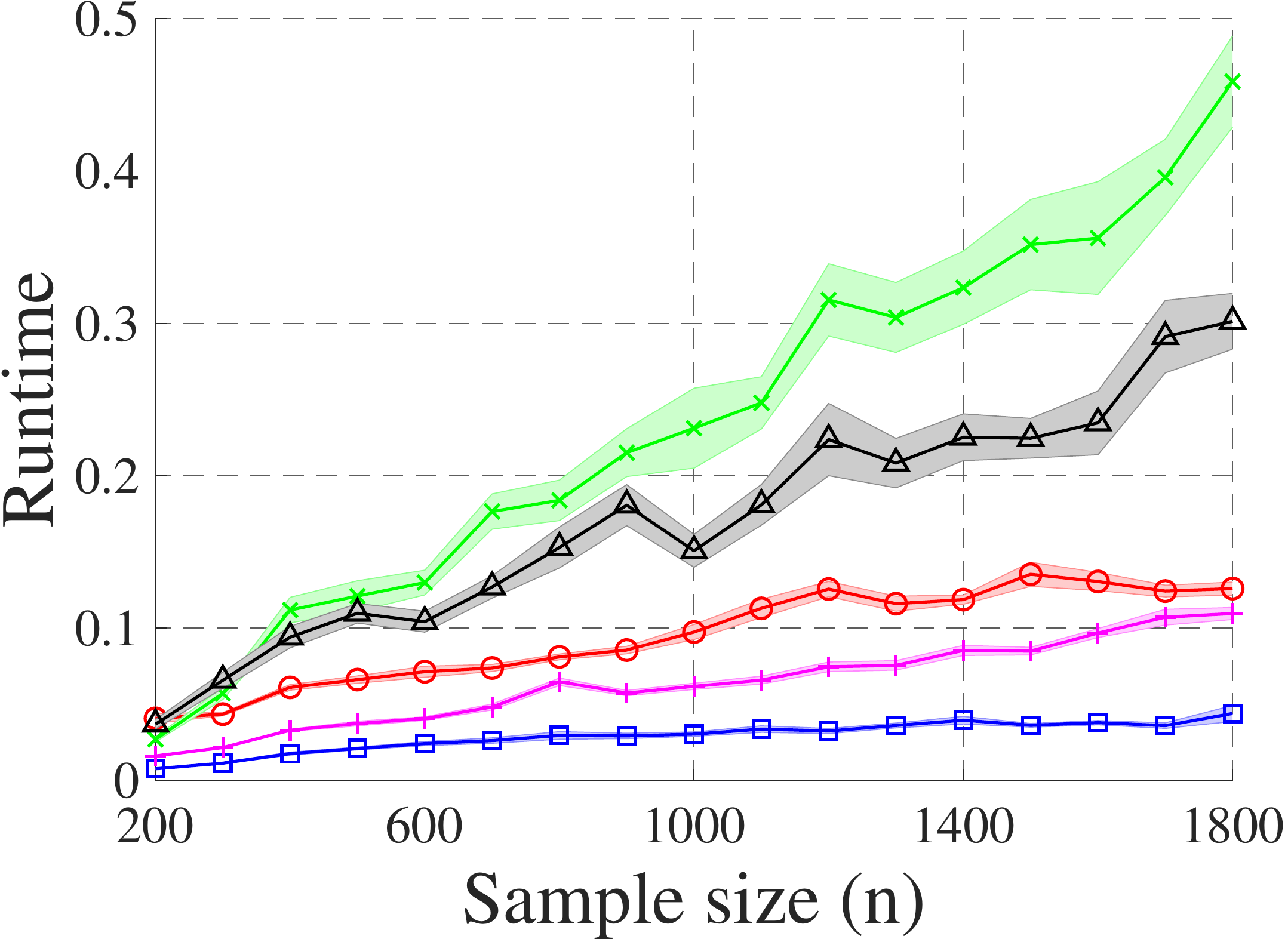}
                \hfill
                \includegraphics[width=0.3\textwidth]{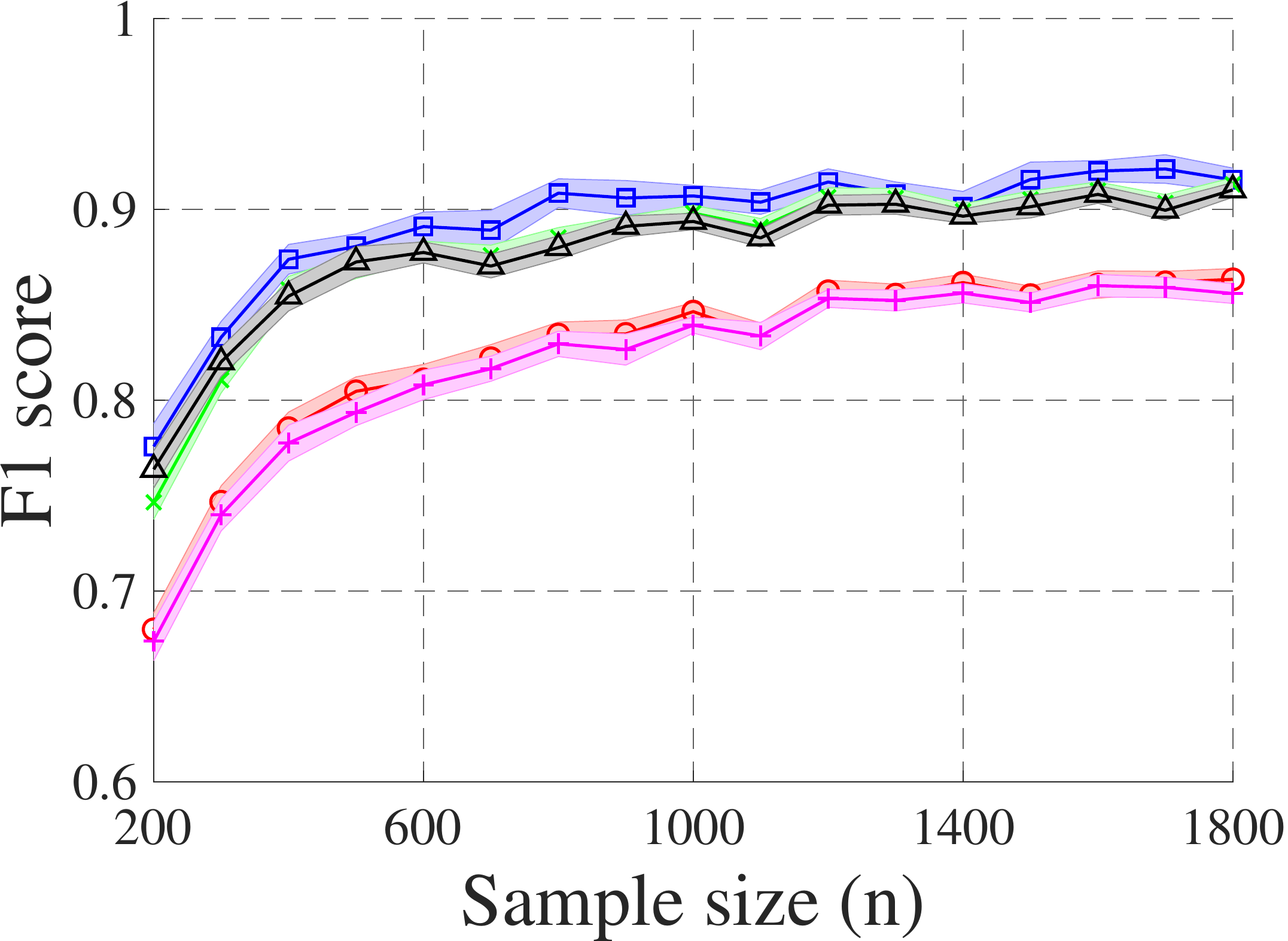}
                \caption{Barley ($\left\vert V\right\vert=48, \left\vert E\right\vert=84$)}
            \end{subfigure}
            
            \begin{subfigure}[b]{1\textwidth}
                \centering
                \includegraphics[width=0.3\textwidth]{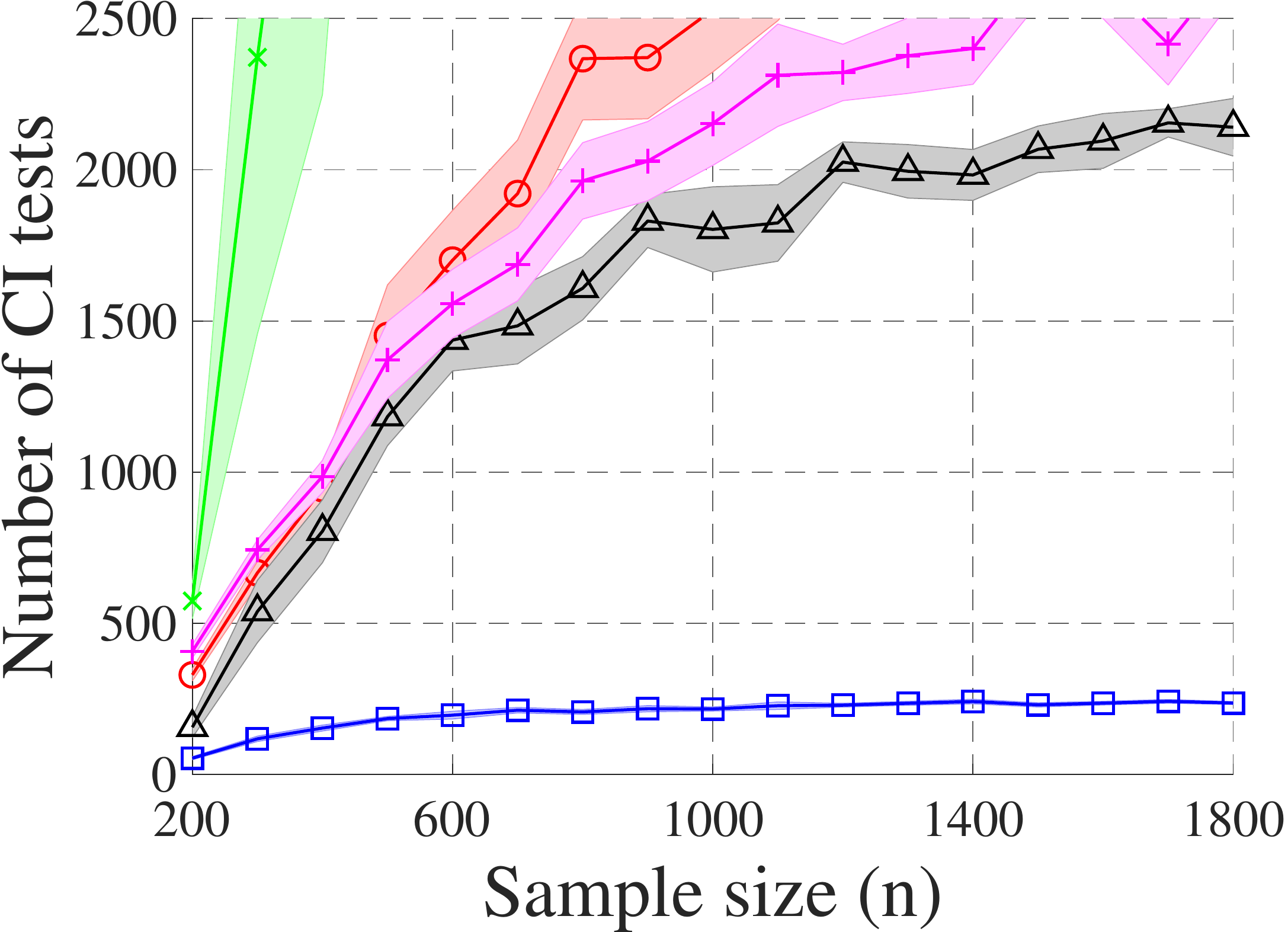}
                \hfill
                \includegraphics[width=0.3\textwidth]{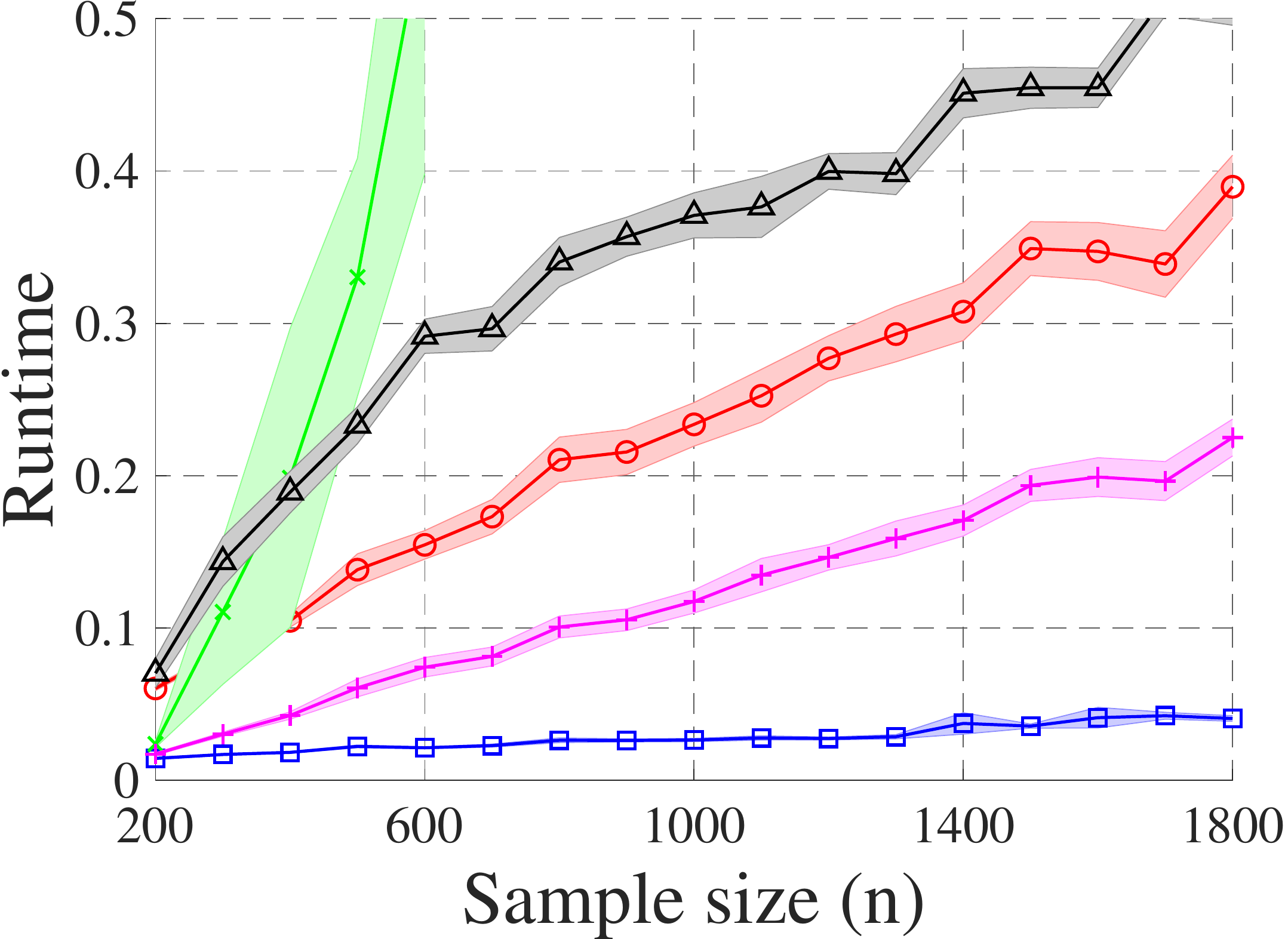}
                \hfill
                \includegraphics[width=0.3\textwidth]{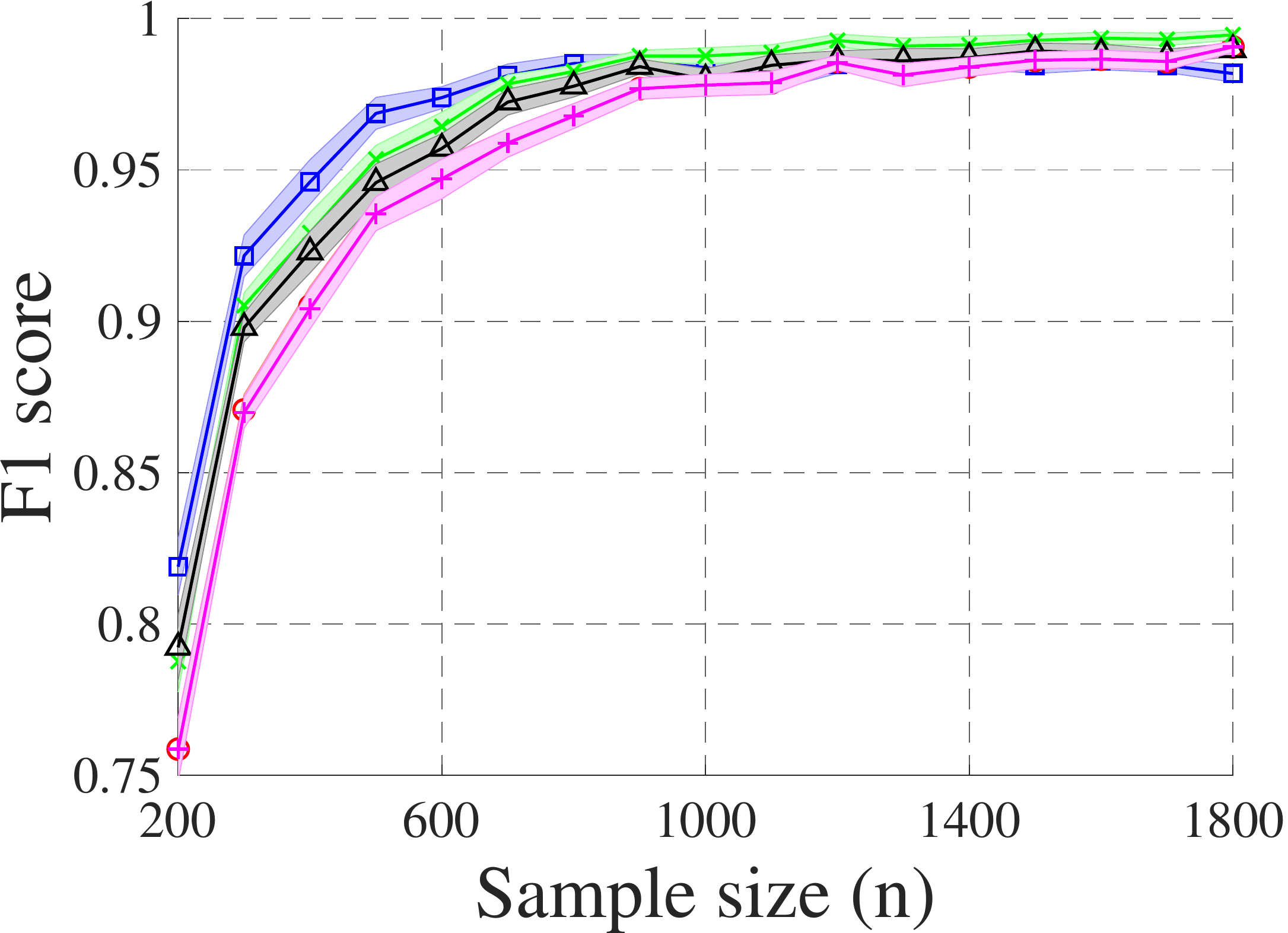}
                \caption{Win95pts ($\left\vert V\right\vert=76, \left\vert E\right\vert=70$)}
            \end{subfigure}
            \caption{Scenario 1: The effect of sample size on the performance of the structure learning algorithms for real-world structures after Markov boundary discovery.}
            \label{fig: noisy}
        \end{figure*}
            In this setting, we have picked 4 real-world structures, namely Insurance, Mildew, Barley, and Win95pts.
            After fixing the graph, the data is generated from a linear Gaussian structural causal model \citep{pearl2009causality}, where each variable is generated as a linear combination of its parents plus a Gaussian noise.
            The coefficients are chosen uniformly at random from the interval $[-1,-0.5] \cup [0.5,1]$ and the noise variables are distributed according to $\mathcal{N}(0,\sigma_X^2)$ where $\sigma_X$ is chosen uniformly at random from the interval $[1, \sqrt{3}]$. 
            The performance of the algorithms is measured by 
            \begin{enumerate*}
                \item the number of performed CI tests,
                \item runtime\footnote{Numbers are in seconds.}, and
                \item F1 score of the learned skeleton.
            \end{enumerate*}
            We have used TC algorithm for Markov boundary discovery for all of the algorithms.
            Moreover, we have used Fisher Z-transformation \citep{fisher1915frequency} to perform the CI tests with parameter $\alpha= \frac{2}{p^2}$, following the convention in \citep{pellet2008using}, which includes an analysis over the choice of this parameter.

        Figure \ref{fig: noisy} illustrates the results of this scenario.
        The reported results for the number of CI tests and runtime is after Markov boundary discovery.
        As seen in this figure, compared to the other algorithms, MARVEL is faster, requires a smaller number of CI tests, and obtains the highest accuracy in nearly all cases. 
        \subsubsection{Scenario 2: random graphs}
        Data generating process in this section is the same as the previous scenario, and the algorithms are evaluated on a set of larger graphs.
        
        \begin{table*}[h] 
            \caption{Scenario 2: Performance of the causal structure learning algorithms on random graphs after Markov boundary discovery ($n=50p$, $\Delta_\text{in}=4$).}
            \fontsize{9}{10.5}\selectfont
    	    \centering
    	    \begin{tabular}{N|M{1.3cm}|M{1.3cm}||M{1cm}|M{1cm}|M{1cm}|M{1cm}|M{1cm}|M{1cm}|M{1cm}|M{1cm}|M{1cm}|}
        		\hline
    			& \multicolumn{2}{c||}{$p$}
    			& 50
    			& 60
    			& 70
    			& 80
    			& 90
    			& 100
    			& 150
    			& 200\\
                \hline
    			\hline
    			& \multirow{6}{*}{MARVEL}
    			& CI tests
    			& \textbf{1,567}& \textbf{1,586}& \textbf{1,494}& \textbf{1,890}& \textbf{1,731}& \textbf{2,543}& \textbf{3,120}& \textbf{4,087}\\ 
    			&
    			& Runtime
    			& \textbf{0.18}& \textbf{0.22}& \textbf{0.19}& \textbf{0.27}& \textbf{0.28}& \textbf{0.47}& \textbf{1.43}& \textbf{3.93}\\
    			&
    			& ASC
    			& 2.00& \textbf{1.73}& \textbf{1.72}& \textbf{1.76}& \textbf{1.67}& \textbf{1.91}& \textbf{1.72}& \textbf{1.62}\\ 
    			&
    			& F1 score
    			& 0.93& 0.94& 0.95& 0.95& \textbf{0.96}& \textbf{0.97}& \textbf{0.98}& \textbf{0.98}\\
    			&
    			& Precision
    			& 0.90& 0.92& 0.93& 0.93& 0.94& 0.95& 0.96& 0.96\\
    			&
    			& Recall
    			& \textbf{0.96}& \textbf{0.97}& \textbf{0.97}& \textbf{0.98}& \textbf{0.98}& \textbf{0.98}& \textbf{0.99}& \textbf{0.99}\\
    			\hline
    			& \multirow{6}{*}{PC}
    			& CI tests
    			& 2,577& 3,113& 4,586& 5,247& 6,094& 7,655& 12,868& 17,643
    			\\ 
    			&
    			& Runtime
    			& 0.32& 0.43& 0.71& 0.86& 1.16& 1.51& 2.94& 5.16
    			\\
    			&
    			& ASC
    			& \textbf{1.80}& 1.88& 2.16& 2.07& 2.15& 2.29& 2.52& 2.53
    			\\ 
    			&
    			& F1 score
    			& 0.92& 0.93& 0.95& 0.95& \textbf{0.96}& 0.96& \textbf{0.98}& \textbf{0.98}
    			\\
    			&
    			& Precision
    			& \textbf{1.00}& \textbf{1.00}& \textbf{1.00}& \textbf{1.00}& \textbf{1.00}& \textbf{1.00}& \textbf{1.00}& \textbf{1.00}
    			\\
    			&
    			& Recall
    			& 0.86& 0.87& 0.90& 0.91& 0.92& 0.93& 0.96& 0.97
    			\\
    			\hline
    			& \multirow{6}{*}{GS}
    			& CI tests
    			& 61,887& 102,296& NA& NA& NA& NA& NA& NA
    			\\ 
    			&
    			& Runtime
    			& 9.26& 16.12& NA& NA& NA& NA& NA& NA
    			\\
    			&
    			& ASC
    			& 5.79& 6.19& NA& NA& NA& NA& NA& NA
    			\\ 
    			&
    			& F1 score
    			& \textbf{0.94}& \textbf{0.95}& NA& NA& NA& NA& NA& NA
    			\\
    			&
    			& Precision
    			& \textbf{1.00}& \textbf{1.00}& NA& NA& NA& NA& NA& NA
    			\\
    			&
    			& Recall
    			& 0.89& 0.90& NA& NA& NA& NA& NA& NA
    			\\
    			\hline
    			& \multirow{6}{*}{CS}
    			& CI tests
    			& 14,091& 26,254& 27,131& 51,522& NA& NA& NA& NA
    			\\ 
    			&
    			& Runtime
    			& 2.47& 4.88& 5.45& 11.85& NA& NA& NA& NA
    			\\
    			&
    			& ASC
    			& 4.61& 5.03& 4.94& 5.53& NA& NA& NA& NA
    			\\ 
    			&
    			& F1 score
    			& \textbf{0.94}& 0.94& \textbf{0.96}& \textbf{0.96}& NA& NA& NA& NA
    			\\
    			&
    			& Precision
    			& 0.99& 0.99& \textbf{1.00}& \textbf{1.00}& NA& NA& NA& NA
    			\\
    			&
    			& Recall
    			& 0.89& 0.90& 0.92& 0.93& NA& NA& NA& NA
    			\\
    			\hline
    			& \multirow{6}{*}{MMPC}
    			& CI tests
    			& 2,818& 3,467& 4,675& 5,675& 6,412& 7,693& 12,747& 19,250
    			\\ 
    			&
    			& Runtime
    			& 0.28& 0.36& 0.51& 0.73& 0.94& 1.17& 2.50& 4.96
    			\\
    			&
    			& ASC
    			& 2.08& 2.13& 2.32& 2.33& 2.37& 2.46& 2.64& 2.78
    			\\ 
    			&
    			& F1 score
    			& 0.92& 0.93& 0.94& 0.95& \textbf{0.96}& 0.96& \textbf{0.98}& \textbf{0.98}
    			\\
    			&
    			& Precision
    			& 0.99& \textbf{1.00}&\textbf{1.00}& \textbf{1.00}& \textbf{1.00}& \textbf{1.00}& \textbf{1.00}& \textbf{1.00}
    			\\
    			&
    			& Recall
    			& 0.85& 0.87& 0.90& 0.90& 0.92& 0.93& 0.96& 0.97\\
    			\hline
    	    \end{tabular}
    	    \label{table: experiment}
        \end{table*}
        
        Table \ref{table: experiment} compares various algorithms on medium to large sized random graphs with $\Delta_\text{in}=4$, where $n=50p$ samples are available. 
        The entry NA indicates that the corresponding algorithm failed to learn a graph after performing $150,000$ CI tests on average. 
        Each number in the table is obtained using 20 DAGs. 
        In this table, ASC stands for the Average Size of Conditioning sets. 
        Moreover, precision and recall of the learned skeletons along with F1-scores are reported.
        
        The accuracy of all the algorithms is close to each other since the algorithms have access to a large dataset, whereas MARVEL is faster and requires significantly fewer CI tests with smaller conditioning sets compared to the other algorithms.
        
        \subsubsection{Scenario 3: real-world structures}
        Further experimental results are provided with new sets of parameters on real-world structures. 
        Two new structures, namely Alarm and Diabetes are added to the set of structures on which the structure learning algorithms are evaluated. 
        
        \begin{table*}[h]
    	    \caption{Scenario 3: Performance of the causal structure learning algorithms on real-world graphs after Markov boundary discovery $(n= 15p)$.}
    	    \fontsize{9}{10.5}\selectfont
    	    \centering
    	    \begin{tabular}{N|M{1.4cm}|M{1.3cm}||M{1.4cm}|M{1.25cm}|M{1.25cm}|M{1.25cm}|M{1.25cm}|M{1.6cm}|}
        		\hline
    			& \multicolumn{2}{c||}{\multirow{3}{*}{Algorithm}}
    			& Insurance
    			& Mildew
    			& Alarm
    			& Barley
    			& Win95pts
    			& Diabetes
    			 \\
    			& \multicolumn{2}{c||}{}
    			& p=27 & p=35 & p=37 & p=48 & p=76 & p=104 \\
    			& \multicolumn{2}{c||}{}
    			& e=51 & e=46 & e=46 & e=84 & e=70 & e=148 \\
    			\hline
    			& \multirow{6}{*}{MARVEL}
    			& CI tests
    			& \textbf{138}& \textbf{95}& \textbf{95}& \textbf{406}& \textbf{233}& \textbf{296}
    			\\
    			& 
    			& Runtime
    			& \textbf{0.01}& \textbf{0.01}& \textbf{0.01}& \textbf{0.03}& \textbf{0.03}& \textbf{0.07}
    			\\
    			& 
    			& ASC
    			& \textbf{0.91}& \textbf{0.51}& \textbf{0.78}& 1.14& \textbf{1.19}& \textbf{0.57}
    			\\
    			& 
    			& F1 Score
    			& \textbf{0.78}& \textbf{0.86}& \textbf{0.89}& \textbf{0.81}& \textbf{0.97}& \textbf{0.90}
    			\\
    			& 
    			& Precision
    			& 0.81& 0.89& 0.91& 0.81& 0.96& 0.90
    			\\
    			& 
    			& Recall
    			& \textbf{0.75}& \textbf{0.84}& \textbf{0.86}& \textbf{0.80}& \textbf{0.97}& \textbf{0.90}
    			\\
    			\hline
    			& \multirow{6}{*}{PC}
    			& CI tests
    			& 451& 403& 316& 870& 1,590& 1,584
    			\\
    			& 
    			& Runtime
    			& 0.03& 0.03& 0.03& 0.06& 0.16& 0.25
    			\\
    			& 
    			& ASC
    			& 1.02& 0.90& 0.95& \textbf{1.08}& 2.21& 1.25
    			\\
    			& 
    			& F1 Score
    			& 0.62& 0.69& 0.81& 0.73& 0.91& 0.84
    			\\
    			& 
    			& Precision
    			& 0.89& \textbf{0.96}& \textbf{1.00}& 0.96& \textbf{1.00}& 0.96
    			\\
    			& 
    			& Recall
    			& 0.48& 0.53& 0.68& 0.59& 0.84& 0.74
    			\\
    			\hline
    			& \multirow{6}{*}{GS}
    			& CI tests
                & 931& 1,417& 632& 3,046& 32,821& 1,825
    			\\
    			& 
    			& Runtime
    			& 0.04& 0.05& 0.03& 0.15& 3.29& 0.13
    			\\
    			& 
    			& ASC
    			& 2.15& 2.69& 2.03& 3.02& 6.38& 1.62
    			\\
    			& 
    			& F1 Score
    			& 0.70& 0.76& 0.86& 0.79& 0.94& 0.88
    			\\
    			& 
    			& Precision
    			& \textbf{0.91}& \textbf{0.96}& 0.99& \textbf{0.98}& \textbf{1.00}& \textbf{0.98}
    			\\
    			& 
    			& Recall
    			& 0.57& 0.63& 0.76& 0.66& 0.88& 0.80
    			\\
    			\hline
    			& \multirow{6}{*}{CS}
    			& CI tests
                & 219& 665& 140& 734& 1418& 445
    			\\
    			& 
    			& Runtime
    			& 0.03& 0.06& 0.03& 0.10& 0.30& 0.27
    			\\
    			& 
    			& ASC
    			& 1.56& 2.26& 1.45& 2.15& 3.39& 1.11
    			\\
    			& 
    			& F1 Score
    			& 0.71& 0.75& 0.86& 0.79& 0.93& 0.87
    			\\
    			& 
    			& Precision
    			& 0.90& 0.93& 0.97& 0.95& 0.98& 0.95
    			\\
    			& 
    			& Recall
    			& 0.60& 0.63& 0.77& 0.67& 0.89& 0.80
    			\\
    			\hline
    			& \multirow{6}{*}{MMPC}
    			& CI tests
    			& 574& 563& 416& 1,114& 1,481& 2,517
    			\\
    			& 
    			& Runtime
    			& 0.02& 0.02& 0.02& 0.04& 0.09& 0.17
    			\\
    			& 
    			& ASC
                & 1.39& 1.23& 1.09& 1.48& 2.14& 1.82
    			\\
    			& 
    			& F1 Score
                & 0.61& 0.65& 0.80& 0.71& 0.91& 0.81
    			\\
    			& 
    			& Precision
                & 0.88& 0.92& 0.99& 0.96& \textbf{1.00}& 0.88
    			\\
    			& 
    			& Recall
                & 0.47& 0.50& 0.67& 0.57& 0.84& 0.75
    			\\
    			\hline
    	    \end{tabular}
    	    \label{table: SM}
        \end{table*}
        
        Table \ref{table: SM} shows the experiment results of this scenario. 
        Each entry of the table is reported as an average of 20 runs, and $n=15p$ samples are available per variable.
        The data generation process is similar to the previous sections, except for the choice of coefficients and the variance of the noise variables.
        The coefficients are chosen uniformly at random from the interval $[-2,-0.5] \cup [0.5,2]$ and the noise variables are distributed according to $\mathcal{N}(0,\sigma_X^2)$ where $\sigma_X$ is chosen uniformly at random from the interval $[1, \sqrt{2}]$. 

        The experimental results demonstrate that MARVEL significantly outperforms the other algorithms in terms of runtime and the number of the required CI tests while maintaining superior accuracy of the learned graph in most of the experiments. 
        It is worthy to note that one of the caveats of constraint-based methods is the high number of missing edges in the learned structure, whereas MARVEL obtains the highest recall score (i.e., the fewest number of missing edges) as seen in our experimental results.

\section{Conclusion}
    We proposed MARVEL, a recursive Markov boundary-based causal structure learning method for efficiently learning the essential graph corresponding to the Markov equivalence class of the causal DAG. 
    We first introduced the notion of removable variables and then designed an efficient algorithm to identify them using Markov boundary information. 
    Then we made use of these variables to learn the causal structure recursively.
    We showed that MARVEL requires substantially fewer CI tests than the state-of-the-art methods, making it scalable and suitable to be used on systems with a large number of variables. 
    We provided the correctness and complexity analyses of the proposed method. 
    We also compared MARVEL with other constraint-based causal structure learning algorithms through various experiments. 
    The results demonstrated the superiority of MARVEL both in terms of complexity and accuracy compared to the other algorithms.
    



\clearpage
\appendix
\section{Proofs} \label{sec: appendix}
    We need the following definition in the proofs.
    \begin{definition}[Descendant]
        For vertices $X,Y$ in DAG $\mathcal{G}$, $Y$ is called a \emph{descendant} of $X$ if there is a directed path from $X$ to $Y$. The set of all descendants of $X$ in $\mathcal{G}$ is denoted by $\text{De}(X,\mathcal{G})$. Note that $X\in\text{De}(X,\mathcal{G})$.
    \end{definition}
    
\subsection{Proofs of Section \ref{sec: MARVEL}}
    {\bfseries Remark \ref{remark: removable}}
        Suppose $P_\mathbf{V}$ is Markov and faithful with respect to a DAG $\mathcal{G}$. 
        For any vertex $X\in \mathbf{V}$, $P_{\mathbf{V}\setminus\{X\}}$ is Markov and faithful with respect to $\mathcal{G}\setminus\{X\}$ if and only if $X$ is a removable vertex in $\mathcal{G}$. 
    \begin{proof}
        Suppose $Y,Z \in \mathbf{V}\setminus \{X\}$ and $\mathbf{S}\subseteq \mathbf{V} \setminus \{X,Y,Z\}$.
        By definition, $P_{\mathbf{V}\setminus\{X\}}$ is Markov and faithful with respect to $\mathcal{G}\setminus\{X\}$ if and only if 
        \begin{equation} \label{eq: proof remark removable 1}
            Y\perp_{\mathcal{G} \setminus \{X\}} Z\vert \mathbf{S} 
            \iff 
            Y\independent_{P_{\mathbf{V}\setminus \{X\}}} Z\vert \mathbf{S}.
        \end{equation}
        By the definition of removability, $X$ is removable in $\mathcal{G}$ if and only if 
        \begin{equation} \label{eq: proof remark removable 2}
            Y \perp_{\mathcal{G}\setminus \{X\}} Z \vert \mathbf{S}
		    \iff
		    Y \perp_{\mathcal{G}} Z \vert \mathbf{S}.
        \end{equation}
        Since $P_{\mathbf{V}}$ is Markov and faithful with respect to $\mathcal{G}$, the right hand sides of the above equations are equivalent.
        Hence, the two equations are equivalent. 
        
    \end{proof}
    {\bfseries Theorem \ref{thm: removablity} (Removability)}
        $X$ is removable in $\mathcal{G}$ if and only if the following two conditions are satisfied for every $Z\in\text{Ch}_{X}.$
		\begin{description}
			\item 
			    Condition 1: $N_X\subset N_Z\cup\{Z\}.$
			\item 
			    Condition 2: $\text{Pa}_Y \subset \text{Pa}_Z$ for any $Y\in\text{Ch}_X\cap \text{Pa}_Z$.
		\end{description}
	\begin{proof}
    	\begin{enumerate}
    	\item     
    	    To prove the if side, we assume that $X$ is a variable in $\mathcal{G}$ that satisfies Conditions 1 and 2. Let $\mathcal{H}=\mathcal{G}\setminus\{X\}$, the graph obtained by removing $X$ from $\mathcal{G}$. We first prove the following two lemmas. 
    		\begin{lemma}\label{lem: preserveDescendants}
    		    For any vertex $Y$ of $\mathcal{H}$, \[\text{De}(Y,\mathcal{G})=\text{De}(Y,\mathcal{H}).\]
            \end{lemma}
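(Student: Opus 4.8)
The plan is to establish the two inclusions separately, since $\mathrm{De}(Y,\mathcal{H}) \subseteq \mathrm{De}(Y,\mathcal{G})$ is immediate: every directed path in $\mathcal{H} = \mathcal{G}\setminus\{X\}$ is a directed path in $\mathcal{G}$, because deleting the vertex $X$ can only destroy directed paths, never create new ones. All the content therefore lies in the reverse inclusion $\mathrm{De}(Y,\mathcal{G}) \subseteq \mathrm{De}(Y,\mathcal{H})$. So I would fix an arbitrary $W \in \mathrm{De}(Y,\mathcal{G})$ (note $W \neq X$ since $X$ is not a vertex of $\mathcal{H}$) and pick a directed path $\pi$ from $Y$ to $W$ in $\mathcal{G}$. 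If $\pi$ does not pass through $X$, then $\pi$ already lives in $\mathcal{H}$ and we are done, so the only case to handle is when $\pi$ traverses $X$.

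Since $\pi$ is simple, it passes through $X$ exactly once, and I would write the relevant fragment as $A \to X \to B$, where $A$ is the predecessor of $X$ on $\pi$ (hence $A \in \text{Pa}_X \subseteq N_X$) and $B$ is the successor of $X$ on $\pi$ (hence $B \in \text{Ch}_X$). The goal is to \emph{reroute} $\pi$ so as to bypass $X$: if I can produce a directed edge $A \to B$ in $\mathcal{G}$, then replacing the fragment $A \to X \to B$ by $A \to B$ turns $\pi$ into a directed walk from $Y$ to $W$ that avoids $X$; because all other vertices of $\pi$ remain distinct, this walk is in fact a simple directed path, and it lies entirely in $\mathcal{H}$, witnessing $W \in \mathrm{De}(Y,\mathcal{H})$.

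It remains to exhibit the edge $A \to B$, which is where Condition 1 of Theorem \ref{thm: removablity} enters. Applying Condition 1 with $Z = B \in \text{Ch}_X$ gives $N_X \subset N_B \cup \{B\}$; since $A \in N_X$ and $A \neq B$ (the two are distinct vertices on the simple path $\pi$), this forces $A \in N_B$, i.e. $A$ and $B$ are adjacent. To pin down the orientation, I would invoke acyclicity: we already have the directed edges $A \to X$ and $X \to B$, so an edge $B \to A$ would close the cycle $A \to X \to B \to A$, contradicting that $\mathcal{G}$ is a DAG; hence the edge must be $A \to B$, as required. The main (and essentially only) obstacle is this orientation step, which I expect to dispatch cleanly via the DAG property; notably, Condition 2 is not needed for this lemma, only Condition 1. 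Combining the two inclusions yields $\mathrm{De}(Y,\mathcal{G}) = \mathrm{De}(Y,\mathcal{H})$.
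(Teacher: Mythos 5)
Your proof is correct and follows essentially the same route as the paper's: the easy inclusion is immediate, and for the reverse inclusion you splice out the unique occurrence of $X$ on a directed path by producing the edge $A\to B$ from Condition 1. You are in fact slightly more explicit than the paper, which simply asserts $P_{i-1}\to P_{i+1}$, whereas you justify the orientation of that edge via acyclicity.
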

            \begin{proof}
                Suppose $Z\in \text{De}(Y,\mathcal{H})$, i.e., there exists a directed path from $Y$ to $Z$ in $\mathcal{H}$. The same path exists in $\mathcal{G}$. Hence, $Z \in \text{De}(Y,\mathcal{G})$. Therefore, $\text{De}(Y,\mathcal{H})\subseteq \text{De}(Y,\mathcal{G})$.
                
                Now suppose $Z\in \text{De}(Y,\mathcal{G})$, and let $\mathbf{P}$ be a directed path from $Y$ to $Z$ in $\mathcal{G}$. If $\mathbf{P}$ does not include $X$, it also exists in $\mathcal{H}$. Otherwise, let $\mathbf{P} = (Y=P_1,P_2,...,P_{i-1},P_i=X,P_{i+1},...,P_k=Z)$. Condition 1 implies that $P_{i-1}\to P_{i+1}$. Hence, $\mathbf{P}' = (Y=P_1,P_2,...,P_{i-1},P_{i+1},...,P_k=Z)$ is a directed path from $Y$ to $Z$ in $\mathcal{H}$, and $Z\in \text{De}(Y,\mathcal{H})$. Therefore, $\text{De}(Y,\mathcal{G})\subseteq \text{De}(Y,\mathcal{H})$, which concludes the proof.
            \end{proof}
            \begin{lemma} \label{lem: path block}
                Let $\mathbf{P}$ be a path in $\mathcal{H}$, and $\mathbf{S}$ be a subset of vertices of $\mathcal{H}$. $\mathbf{S}$ blocks $\mathbf{P}$ in $\mathcal{G}$ if and only if $\mathbf{S}$ blocks $\mathbf{P}$ in $\mathcal{H}$. Moreover, if a vertex blocks $\mathbf{P}$ in one of the graphs, it also blocks $\mathbf{P}$ in the other one.
            \end{lemma}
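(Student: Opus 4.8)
The plan is to reduce the equivalence of blocking to two facts: that the graphical role of each vertex along $\mathbf{P}$ is unchanged by deleting $X$, and that the relevant descendant relations are preserved by Lemma \ref{lem: preserveDescendants}. The starting point is that $\mathbf{P}$ is a path in $\mathcal{H}$, so $X$ does not lie on $\mathbf{P}$; consequently every edge of $\mathbf{P}$ is present with the same orientation in both $\mathcal{G}$ and $\mathcal{H}$, since $\mathcal{H}$ differs from $\mathcal{G}$ only by the deletion of $X$ and its incident edges.

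First I would fix an arbitrary non-endpoint vertex $W$ on $\mathbf{P}$ and observe that whether $W$ is a collider or a non-collider on $\mathbf{P}$ is determined solely by the orientations of the two edges of $\mathbf{P}$ incident to $W$. As these two edges are identical in $\mathcal{G}$ and $\mathcal{H}$, the collider/non-collider status of $W$ coincides in the two graphs. Recalling that $W$ blocks $\mathbf{P}$ given $\mathbf{S}$ precisely when either (i) $W$ is a non-collider and $W\in\mathbf{S}$, or (ii) $W$ is a collider and no vertex of $\text{De}(W,\cdot)$ lies in $\mathbf{S}$, it remains to check that each of these two conditions is graph-invariant.

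Case (i) is immediate: the non-collider status is preserved, and membership of $W$ in $\mathbf{S}$ does not depend on the graph. For case (ii), the only possible discrepancy between $\text{De}(W,\mathcal{G})$ and $\text{De}(W,\mathcal{H})$ concerns the vertex $X$; but $\mathbf{S}$ is a set of vertices of $\mathcal{H}$, so $X\notin\mathbf{S}$, and hence $\text{De}(W,\mathcal{G})\cap\mathbf{S}=\text{De}(W,\mathcal{H})\cap\mathbf{S}$ by Lemma \ref{lem: preserveDescendants}. Thus condition (ii) holds in $\mathcal{G}$ if and only if it holds in $\mathcal{H}$. Combining the two cases shows that $W$ blocks $\mathbf{P}$ in $\mathcal{G}$ if and only if it blocks $\mathbf{P}$ in $\mathcal{H}$, which is exactly the ``moreover'' claim; taking the disjunction over all non-endpoint vertices $W$ on $\mathbf{P}$ then yields that $\mathbf{S}$ blocks $\mathbf{P}$ in $\mathcal{G}$ if and only if $\mathbf{S}$ blocks $\mathbf{P}$ in $\mathcal{H}$.

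The routine part is the bookkeeping of the blocking definition; the one genuine subtlety, and the main obstacle, is case (ii), where I must rule out that deleting $X$ changes whether some descendant of a collider belongs to the conditioning set. This is handled cleanly by Lemma \ref{lem: preserveDescendants} together with the observation $X\notin\mathbf{S}$, so that no separate case analysis on the local structure around $X$ is required.
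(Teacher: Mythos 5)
Your proof is correct and follows essentially the same route as the paper's: both reduce blocking to the per-vertex collider/non-collider dichotomy, note that the collider status of each vertex on $\mathbf{P}$ is unchanged since $X$ does not lie on $\mathbf{P}$, and invoke Lemma \ref{lem: preserveDescendants} to handle the collider case. Your explicit remark that any discrepancy in descendant sets could only involve $X$ itself, which cannot lie in $\mathbf{S}$, is if anything slightly more careful than the paper's appeal to the descendant lemma.
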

            \begin{proof}
                The proof of necessary and sufficient conditions are the same. Let $\mathcal{G}_1$ be one of $\mathcal{G},\mathcal{H}$ and $\mathcal{G}_2$ be the other one. Let $Y,Z$ be the end points of $\mathbf{P}$ and $W \in \mathbf{P}$ be a vertex that blocks $\mathbf{P}$ in $\mathcal{G}_1$. Either $W\in \mathbf{S}$ and $W$ is a non-collider in $\mathbf{P}$, or $W$ is a collider in $\mathbf{P}$ and $\text{De}(W,\mathcal{G}_1)\cap(\mathbf{S}\cup \{Y, Z\})=\varnothing$. Now consider this path in $\mathcal{G}_2$. In the first case, $W$ is a non-collider included in $\mathbf{S}$ and therefore it blocks $\mathbf{P}$ in $\mathcal{G}_2$ too. In the second case, $W$ is a collider and due to Lemma \ref{lem: preserveDescendants}, $\text{De}(W,\mathcal{G}_2)\cap(\mathbf{S}\cup \{Y, Z\})=$ $\text{De}(W,\mathcal{G}_1)\cap(\mathbf{S}\cup \{Y, Z\})=\varnothing$. Therefore, $W$ blocks $\mathbf{P}$ in $\mathcal{G}_2$.
            \end{proof}
    	    To show the removability of $X$,  we need to verify Equation \ref{eq: d-sepEquivalence}, i.e., show that for any vertices $Y,Z\in\mathbf{V}\setminus\{X\}$ and $\mathbf{S}\subseteq\mathbf{V}\setminus\{X,Y,Z\}$,
            \[
    		    Y \perp_{\mathcal{G}} Z \vert \mathbf{S}
    		    \iff
    		    Y \perp_{\mathcal{H}} Z \vert \mathbf{S}.
            \]
            Proving the only if side in Equation \ref{eq: d-sepEquivalence} is straightforward: Suppose $Y$ and $Z$ are d-separated in $\mathcal{G}$ by $\mathbf{S}$. All paths between $Y$ and $Z$ in $\mathcal{H}$, which are also present in $\mathcal{G}$, are blocked in $\mathcal{G}$ by $\mathbf{S}$. Lemma \ref{lem: path block} implies that these paths are also blocked in $\mathcal{H}$ by $\mathbf{S}$. Hence, $Y$ and $Z$ are d-separated in $\mathcal{H}$ by $\mathbf{S}$.
    		
            For the reverse direction, suppose $Y$ and $Z$ are d-separated in $\mathcal{H}$ by $\mathbf{S}$. Take an arbitrary path $\mathbf{P}$ between $Y$ and $Z$ in $\mathcal{G}$. We will prove that $\mathbf{P}$ is blocked in $\mathcal{G}$ by $\mathbf{S}$. If $X \notin \mathbf{P}$, then $\mathbf{P}$ is a path in $\mathcal{H}$ too. In this case, since $\mathbf{S}$ is blocking $\mathbf{P}$ in $\mathcal{H}$, Lemma \ref{lem: path block} implies that $\mathbf{S}$ blocks $\mathbf{P}$ in $\mathcal{G}$. Otherwise, $X \in \mathbf{P}$. Note that Lemma \ref{lem: path block} cannot be used in this case as $\mathbf{P}$ is not a path in $\mathcal{H}$. Suppose $\mathbf{P}=(P_1=Y, P_2,...,P_\ell,X,P_r,...,P_m=Z)$. Two possibilities may occur:
    	    \begin{enumerate}
    		    \item $P_\ell,P_r \in \text{Pa}_X$ (Figure \ref{fig:delete1}):
    		        If there exists a vertex other than $X$ that blocks $\mathbf{P}$ in $\mathcal{H}$, it blocks it in $\mathcal{G}$ too. Otherwise, we need to prove that $X$ is blocking $\mathbf{P}$ in $\mathcal{G}$. $X$ is a collider in $\mathbf{P}$ since $P_\ell,P_r \in \text{Pa}_X$. Note that $X \notin \mathbf{S}$ as $X$ is not present in $\mathcal{H}$. It is left to prove that $\text{De}(X,\mathcal{G})\cap (\mathbf{S}\cup \{ Y,Z\})=\varnothing$. Let $T$ be an arbitrary descendant of $X$ and $W\in \text{Ch}_X$ be the first vertex on a directed path from $X$ to $T$ (it might happen that $T=W$). As $X$ satisfies Condition 1, $P_\ell,P_r$ are connected to $W$. Now consider $\mathbf{P}' = (P_1, ..., P_\ell,W,P_r,...,P_m)$ which is a path between $Y$ and $Z$ in $\mathcal{H}$. $\mathbf{S}$ blocks this path in $\mathcal{H}$, but none of the vertices of $\mathbf{P}'$ except for $W$ can block $\mathbf{P}'$ in $\mathcal{H}$. This is because otherwise according to Lemma \ref{lem: path block}, the same vertex would block $\mathbf{P}$ in $\mathcal{G}$ too, which is against the assumption. Hence, $W$ blocks $\mathbf{P}'$. $W$ is a collider in $\mathbf{P}'$ and therefore, $\text{De}(W,\mathcal{G})\cap(\mathbf{S}\cup \{ Y,Z\})=\varnothing$. This proves that $T \notin (\mathbf{S}\cup \{ Y,Z\})$. As a result, $\mathbf{P}$ is blocked in $\mathcal{G}$ by $\mathbf{S}$.
    		    \item $P_\ell\in \text{Ch}_X$ or $P_r\in \text{Ch}_X$ (Figure \ref{fig:delete2}): 
    		        We assume without loss of generality that $P_r$ appears later than $P_\ell$ in the causal order. Therefore, $P_r\in \text{Ch}_X$. Due to Condition 1, $P_\ell\to P_r$ is an edge in $\mathcal{H}$. Hence, $\mathbf{P}'=(P_1,...,P_\ell,P_r,...,P_m)$ is a path between $Y$ and $Z$ in $\mathcal{H}$. This path is blocked in $\mathcal{H}$ (and also in $\mathcal{G}$ due to Lemma \ref{lem: path block}) by a vertex $W$. If $W\neq P_\ell$, or $W=P_\ell$ and $P_\ell$ is a non-collider in $\mathbf{P}$, $W$ blocks $\mathbf{P}$ in $\mathcal{G}$. The only remaining case to consider is when $P_\ell$ blocks $\mathbf{P}'$ but it is a collider in $\mathbf{P}$. In this case, $P_\ell \in \mathbf{S}$ since it is a non-collider in $\mathbf{P}'$. Moreover, $P_{\ell-1}$ is a parent of $P_\ell$ and due to Condition 2, $P_{\ell-1}\to P_r$ is an edge in $\mathcal{H}$. Hence, $\mathbf{P}''=(P_1,...,P_{\ell-1},P_r,...,P_m)$ is a path between $Y$ and $Z$ in $\mathcal{H}$. Now the same vertex that blocks $\mathbf{P}''$ in $\mathcal{H}$, blocks $\mathbf{P}$ in $\mathcal{G}$. Note that in this case $P_{\ell-1}$ is not a collider in either $\mathbf{P}''$ or $\mathbf{P}$.
    	    \end{enumerate} 
    	    In both cases $\mathbf{P}$ is blocked in $\mathcal{G}$ by $\mathbf{S}$. Hence, equation \ref{eq: d-sepEquivalence} holds and $X$ is removable. 
    	    
    		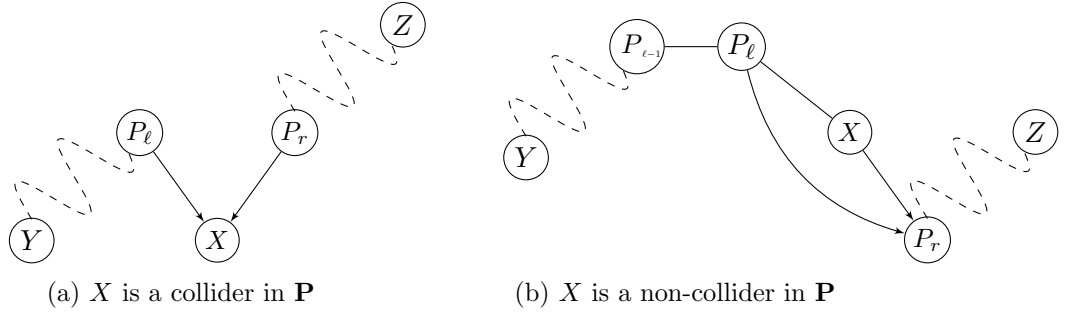
\begin{figure}[t]
    		    \centering
    		    \tikzstyle{block} = [draw, fill=white, circle, text width=1.1em, text centered, align=center]
        		\tikzstyle{input} = [coordinate]
        		\tikzstyle{output} = [coordinate]
    		    \begin{subfigure}[b]{0.3\textwidth}
        		    \centering
        		    \begin{tikzpicture}[->, node distance=1cm,>=latex', every node/.style={inner sep=1pt}]
        		        \node[block](X){\small $X$};
        		        \node[block](pl)[above left=1cm and 0.6cm of X]{\small $P_\ell$};
        		        \node[block](pr)[above right=1cm and 0.6cm of X]{\small $P_r$};
        		        \node[block](Y)[below left= 1 cm and 1cm of pl]{$Y$};
        		        \node[block](Z)[above right= 1 cm and 1cm of pr]{$Z$};
        		        \draw (pl) to (X);
        		        \draw (pr) to (X);
        		        \begin{scope}[dashed]
        		            \draw[-, rotate=45] ([xshift=0.2cm, yshift=0.2cm]Y) sin ([xshift=0.4cm, yshift=0.7cm]Y) cos  ([xshift=0.6cm, yshift=0.2cm]Y) sin
        		            ([xshift=0.8cm, yshift=-0.3cm]Y) cos
        		            ([xshift=1cm, yshift=0.2cm]Y) sin
        		            ([xshift=1.2cm, yshift=0.7cm]Y) cos
        		            ([xshift=1.4cm, yshift=0.2cm]Y) sin
        		            ([xshift=1.6cm, yshift=-0.3cm]Y) cos
        		            ([xshift=1.73cm, yshift=-0.1cm]Y);
        		            \draw[-, rotate=45] ([xshift=0.2cm, yshift=0.2cm]pr) sin ([xshift=0.4cm, yshift=0.7cm]pr) cos  ([xshift=0.6cm, yshift=0.2cm]pr) sin
        		            ([xshift=0.8cm, yshift=-0.3cm]pr) cos
        		            ([xshift=1cm, yshift=0.2cm]pr) sin
        		            ([xshift=1.2cm, yshift=0.7cm]pr) cos
        		            ([xshift=1.4cm, yshift=0.2cm]pr) sin
        		            ([xshift=1.6cm, yshift=-0.3cm]pr) cos
        		            ([xshift=1.73cm, yshift=-0.1cm]pr);
        		      \end{scope}
        		    \end{tikzpicture}
    		    \caption{$X$ is a collider in $\mathbf{P}$}
    		    \label{fig:delete1}
    		    \end{subfigure}\hspace{2cm}%
    		    \begin{subfigure}[b]{0.3\textwidth}
        		    \centering
        		    \begin{tikzpicture}[->, auto, node distance=1.3cm,>=latex', every node/.style={inner sep=1pt}]
        		        \node[block](X){\small $X$};
        		        \node[block](pr)[below right=1cm and 0.6cm of X]{\small $P_r$};
        		        \node[block](pl)[above left=0.7cm and 1cm of X]{$P_\ell$};
        		        \node[block, inner sep=2pt](plm)[left=0.7cm of pl]{\small $\scaleto{P}{0.7em}_{\scaleto{\ell-1}{0.3em}}$};
        		        \node[block](Y)[below left= 1 cm and 1cm of plm]{$Y$};
        		        \node[block](Z)[above right= 1 cm and 1cm of pr]{$Z$};
        		        \draw[-] (pl) to (X);
        		        \draw (X) to (pr);
        		        \draw (pl) to [bend right] (pr);
        		        \draw[-] (plm) to (pl);
        		        \begin{scope}[dashed]
        		            \draw[-, rotate=45] ([xshift=0.2cm, yshift=0.2cm]Y) sin ([xshift=0.4cm, yshift=0.7cm]Y) cos  ([xshift=0.6cm, yshift=0.2cm]Y) sin
        		            ([xshift=0.8cm, yshift=-0.3cm]Y) cos
        		            ([xshift=1cm, yshift=0.2cm]Y) sin
        		            ([xshift=1.2cm, yshift=0.7cm]Y) cos
        		            ([xshift=1.4cm, yshift=0.2cm]Y) sin
        		            ([xshift=1.6cm, yshift=-0.3cm]Y) cos
        		            ([xshift=1.73cm, yshift=-0.1cm]Y);
        		            \draw[-, rotate=45] ([xshift=0.2cm, yshift=0.2cm]pr) sin ([xshift=0.4cm, yshift=0.7cm]pr) cos  ([xshift=0.6cm, yshift=0.2cm]pr) sin
        		            ([xshift=0.8cm, yshift=-0.3cm]pr) cos
        		            ([xshift=1cm, yshift=0.2cm]pr) sin
        		            ([xshift=1.2cm, yshift=0.7cm]pr) cos
        		            ([xshift=1.4cm, yshift=0.2cm]pr) sin
        		            ([xshift=1.6cm, yshift=-0.3cm]pr) cos
        		            ([xshift=1.73cm, yshift=-0.1cm]pr);
        		        \end{scope}
        		    \end{tikzpicture}
    		    \caption{$X$ is a non-collider in $\mathbf{P}$}
    		    \label{fig:delete2}
    		    \end{subfigure}
    		    \caption{Omitting a removable vertex.}
    		    \label{fig:delete}
    		\end{figure}
    		
    	    \item To prove the only if side of Theorem \ref{thm: removablity}, it suffices to show that if $X$ is removable, it then satisfies conditions 1 and 2. 
    	    
    	    \emph{Condition 1}: 
    	        Suppose $Z\in \text{Ch}_{X}$ and $W\in N_{X}$. Either $W\gets X\to Z$ or $W\to X\to Z$ is a path in $\mathcal{G}$. A set $\mathbf{S}$ can block such paths only if $X\in\mathbf{S}$. Since $X\not\in \mathcal{H}$, no separating set for $W$ and $Z$ exists in $\mathcal{H}$. Equation \ref{eq: d-sepEquivalence} implies that no separating set for $W$ and $Z$ exists in $\mathcal{G}$. Therefore, $W,Z$ must be connected with an edge and $X$ satisfies Condition 1.
    	    
    	    \emph{Condition 2}: 
    	        Suppose $Z \in \text{Ch}_X$, $Y\in\text{Ch}_X \cap \text{Pa}_Z$, and $U \in \text{Pa}_Y$. We show that no $\mathbf{S}$ can d-separate $U$ and $Z$ in $\mathcal{G}$: $U\to Y\to Z$ and $U\to Y\gets X \to Z$ are both paths in $\mathcal{G}$. $\mathbf{S}$ can block the first path only if $Y\in\mathbf{S}$. But then the latter path can only be blocked if $X\in \mathbf{S}$. Since $X\not\in \mathcal{H}$, no such $\mathbf{S}$ exists in $\mathcal{H}$. Hence, Equation \ref{eq: d-sepEquivalence} implies that no separating set for $U$ and $Z$ exists in $\mathcal{G}$. Therefore, $U,Z$ must be connected with an edge. Since $U\to Y$ and $Y\to Z$, $U$ must be a parent of $Z$ and $X$ satisfies Condition 2.
    	\end{enumerate}
	\end{proof}
    {\bfseries Lemma \ref{lem: neighbor}}
        Suppose $X\in \mathbf{V}$ and $Y\in\Mb_X$. 
        $Y$ is a neighbor of $X$ if and only if 
        \begin{equation} \label{eq: neighbor proof}
            X\notindependent Y \vert \mathbf{S}, \hspace{ 0.5cm} \forall \mathbf{S}\subsetneq \Mb_X \setminus \{Y\}.
        \end{equation}
	\begin{proof}
	    If $Y \in N_X$, then $X,Y$ do not have any separating set. Otherwise, suppose $Y \notin N_X$. It suffices to find a $\mathbf{S} \subsetneq \Mb_X \setminus \{Y\}$ that d-separates $X,Y$. By local Markov property, if $Y$ is not a descendant of $X$, $\mathbf{S}=\textit{Pa}_X$ would do. Now suppose $Y$ is a descendant of $X$. Define $\mathbf{S}$ as the set of vertices in $\Mb_X$ that appear earlier than $Y$ in the causal order. We claim $X\independent Y\vert \mathbf{S}$, i.e., $\mathbf{S}$ blocks all the paths between $X$ and $Y$. Take an arbitrary path $\mathbf{P}$ between $X$ and $Y$ and let $Z\in \mathbf{P}$ be the latest vertex of $\mathbf{P}$ in the causal order. Two cases may occur:
	    \begin{enumerate}
	        \item $Z=Y$: In this case, all vertices on $\mathbf{P}$ appear earlier than $Y$ in the causal order and $\mathbf{P}\cap \Mb_X\subseteq \mathbf{S}$. Now let the two vertices following $X$ on $\mathbf{P}$ be $W_1,W_2$ ($\mathbf{P}$ has a length of at least 2 as $Y$ is not a neighbor of $X$). If $W_1$ is a parent of $X$, it is included in $\mathbf{S}$ and it blocks $\mathbf{P}$. Otherwise, $W_1$ is a child of $X$. Now either $W_2\in\text{Ch}_{W_1}$ and $W_1$ blocks $\mathbf{P}$ or $W_2\in\text{Pa}_{W_1}$ and $W_2$ blocks $\mathbf{P}$. Note that in the latter case $W_2$ is a parent of a child of $X$ and is included in $\Mb_X$. Therefore, $W_2\in\mathbf{S}$.
	        \item $Z\neq Y$: In this case $Z$ is a collider on $\mathbf{P}$ because both vertices before and after $Z$ appear earlier in the causal order and are therefore parents of $Z$. Due to the definition of $\mathbf{S}$, neither $Z$ nor any of its descendants are in $\mathbf{S}\cup \{X,Y\}$. Hence, $\mathbf{P}$ is blocked by $\mathbf{S}$.
	    \end{enumerate}
	    In all of the cases, the introduced $\mathbf{S}$ is not equal to $\Mb_X\setminus\{Y\}$ as it does not include the common child of $X$ and $Y$.
	\end{proof}
	{\bfseries Lemma \ref{lem: v-structures}}
	    Suppose $T \in \Lambda_X$ with a separating set $\mathbf{S}_{XT}$ for $X$ and $T$, and let $Y\in N_X$. 
        $Y$ is a common child of $X$ and $T$ (i.e., $X\to Y\gets T$ is in $\mathcal{V}_X^{\text{pa}}$)  if and only if $Y\notin \mathbf{S}_{XT}$ and
        \begin{equation}
            Y\notindependent T\vert \mathbf{S}, \hspace{0.5cm} \forall \mathbf{S}\subseteq \Mb_X\cup\{X\}\setminus\{Y,T\}.
        \end{equation}
	\begin{proof}
	    Suppose $Y\in N_X$ is a common child of $X$ and $T$. $\mathbf{S}_{XT}$ blocks the path $X \to Y \gets T$ between $X,T$. Hence, $Y$ cannot be in $\mathbf{S}_{XT}$. Additionally, $Y,T$ are neighbors and they do not have any separating set. 
	    
	    Now suppose $Y\in N_X$ is not a common child of $X$ and $T$, and $Y \notin \mathbf{S}_{XT}$. It suffices to find a $\mathbf{S}\subseteq \Mb_X\cup\{X\}\setminus\{Y,T\}$ that d-separates $Y$ and $T$. First, note that $Y$ and $T$ cannot be neighbors because otherwise, $(X,Y,T)$ is a path between $X,T$ that must be blocked by $\mathbf{S}_{XT}$, but $Y$ is not a collider in this path, and $Y \notin  \mathbf{S}_{XT}$ which is not possible. In order to introduce $\mathbf{S}$, consider three possible cases for $Y$:
	    \begin{itemize}
            \item{$Y\in \text{Pa}_X$:}
                We claim $\mathbf{S}=\mathbf{S}_{XT}$ d-separates $Y$ and $T$. Let $\mathbf{P}$ be a path between $Y$ and $T$. If $\mathbf{P}$ includes $X$, it is already blocked by $\mathbf{S}_{XT}$ as there exists a vertex that blocks the part of the path between $X$ and $T$, and the same vertex blocks $\mathbf{P}$. If $\mathbf{P}$ does not include $X$, let $\mathbf{P'}=(X,Y,\dots,T)$, i.e., the path from $X$ to $T$ through $Y$ and $\mathbf{P}$. $\mathbf{P'}$ is blocked by a vertex $Z$. As $Y\notin \mathbf{S}$ and $Y$ is not a collider in $\mathbf{P'}$, $Z\neq Y$. Therefore, $Z$ blocks $\mathbf{P}$ too.
            \item{$Y\in \text{Ch}_X$ and $T$ is a descendant of $Y$:}
                Similarly, $\mathbf{S}=\mathbf{S}_{XT}$ d-separates $Y$ and $T$. To prove this claim, let $\mathbf{P}$ be a path between $Y$ and $T$. If $\mathbf{P}$ includes $X$, with the same statements discussed above, $\mathbf{P}$ is blocked by $\mathbf{S}$. If $\mathbf{P}$ does not include $X$, define $\mathbf{P'}=(X,Y,\dots,T)$, and let $Z$ be the vertex that blocks $\mathbf{P'}$. $Z\neq Y$ because $Y\notin \mathbf{S}$ and $Y$ cannot block $\mathbf{P'}$ as a collider since $T$ is a descendant of $Y$. Hence, $Z$ blocks $\mathbf{P}$ too.
            \item{$Y\in \text{Ch}_X$ and $T$ is not a descendant of $Y$:}
                In this case, the set of parents of $Y$ d-separate $Y$ and $T$. Note that as $Y$ is a child of $X$, $\text{Pa}_Y\in\Mb_X\cup\{X\}$ and $\mathbf{S}=\text{Pa}_Y$ is our desired set.
        \end{itemize}
	    In all the cases we introduced a $\mathbf{S}\subseteq \Mb_X\cup\{X\}\setminus\{Y,T\}$ that d-separates $Y$ and $T$.
	\end{proof}
	{\bfseries Lemma \ref{lem: condition1}}
	    Variable $X$ satisfies Condition 1 of Theorem \ref{thm: removablity} if and only if 
		\begin{equation} \label{eq: condition1 proof}
		    Z \notindependent W\vert \mathbf{S} \cup \{X\}, \hspace{0.5cm} \forall W,Z\in N_X,\, \mathbf{S} \subseteq \Mb_X \setminus \{Z, W\}.
		\end{equation}
	\begin{proof}
	        Suppose $X$ satisfies condition 1, and let $Z,W$ be two of its neighbors. If at least one of $Z,W$ is a child of $X$, then condition 1 implies that $Z$ and $W$ are neighbors and cannot be d-separated with any set. If both of $Z,W$ are parents of $X$, no set including $X$ can d-separates $Z,W$ since $X$ is a collider in $Z \to X \gets W$. 
	        
	        For the if side, suppose that $Z \notindependent W\vert \mathbf{S} \cup \{X\}$ for any pair of vertices $W,Z\in N_X$ and any $\mathbf{S} \subseteq \Mb_X \setminus \{Z, W\}$. We need to show that $X$ satisfies condition 1. Let $Z,W$ be two arbitrary vertices in $N_X$ where at least one of them is a child of $X$. It suffices to show that $W$ and $Z$ are neighbors. Without loss of generality we can assume that $Z$ appears later than $W$ in the causal order. Therefore $Z\in \text{Ch}_X$. Assume by contradiction that they are not neighbors. Since $W$ is not a descendant of $Z$, local Markov property implies that the set of parents of $Z$ d-separates $W$ and $Z$. Note that $Z$ is a child of $X$ and therefore $\text{pa}_Z\subseteq \Mb_X\cup\{X\}$. Hence, $\mathbf{S}=\text{Pa}_Z\setminus\{X\} \subseteq \Mb_X \setminus \{Z, W\}$ would be a set that $Z \independent W\vert \mathbf{S} \cup \{X\}$ which is against our assumption. Therefore, $W$ and $Z$ are neighbors and $X$ satisfies condition 1.
	\end{proof}
	{\bfseries Lemma \ref{lem: condition2}}
	    Suppose the variable $X$ satisfies Condition 1 of Theorem \ref{thm: removablity}. Then $X$ satisfies Condition 2 of Theorem \ref{thm: removablity}, and therefore, $X$ is removable, if and only if 
	            \begin{equation} \label{eq: condition2 proof}
	                Z \notindependent T\vert \mathbf{S} \cup \{X, Y\}, \hspace{0.5cm} \forall (X\to Y\gets T) \in \mathcal{V}_X^{\text{pa}}, \, Z\in N_X \setminus \{Y\},\, \mathbf{S} \subseteq \Mb_X \setminus \{Z, Y,  T\}.
	            \end{equation}
	\begin{proof}
	    Suppose $X$ is removable. Let $X\to Y \gets T$ be a v-structure and $Z\in N_X$. If $Z$ appears later than $Y$ in the causal order, then condition 1 implies that $Y \in \text{Pa}_Z$. Therefore, condition 2 implies that $T\in \text{Pa}_Z$. Hence, $Z$ and $T$ are neighbors and cannot be d-separated. If $Z$ appears earlier than $Y$ in the causal order, condition 1 implies that $Z\in \text{Pa}_Y$. In this case $Z\to Y\gets T$ is a path between $Z,T$ and no set containing $Y$ can d-separate $Z,T$. 
	    
	    For the if side, suppose $X$ satisfies condition 1 and the assumption of the lemma holds. We need to show that $X$ satisfies condition 2. Suppose $Z\in \text{Ch}_X$, $Y\in\text{Ch}_X\cap \text{Pa}_Z$, and $T\in \text{Pa}_Y$. It suffices to show that $T,Z$ are neighbors. Assume by contradiction that $Z,T$ are not neighbors. Note that $X,Y \in \text{Pa}_Z$. Define $\mathbf{S}= \text{Pa}_Z \setminus \{X,Y\}$. Parents of $Z$ are in $\Mb_X$, and $Z,T$ are not neighbors. Therefore, $\mathbf{S} \subseteq \Mb_X \setminus \{Z,Y,T\}$. Since $T$ is not a descendant of $Z$ (in fact, $Z$ is a descendant of $T$), parents of $Z$ d-separates $Z,T$. Hence, $\mathbf{S}\cup \{X,Y\}$ d-separates $Z,T$ which is against the assumption of the lemma. Hence, $Z,T$ are neighbors and $X$ is removable.
	\end{proof}
	{\bfseries Lemma \ref{lem: MbboundForremovable}}
	    If $X\in\mathbf{V}$ is a removable vertex in $\mathcal{G}$, then $\left\vert \Mb_X\right\vert\leq \Delta_{\text{in}}$, where $\Delta_{\text{in}}$ is the maximum in-degree of $\mathcal{G}$.
	\begin{proof}
        Let $Z$ be the latest child of $X$ and therefore, the latest vertex of $\Mb_X$ in the causal order. From Theorem \ref{thm: removablity}, every vertex in $\Mb_X\setminus\{Z\}$ is connected to $Z$. By definition of $Z$, vertices in $\Mb_X\setminus\{Z\}$ along with $X$ itself must be the parents of $Z$, that is $\{X\} \cup \Mb_X\setminus\{Z\} = \text{Pa}_Z$. The cardinality of the left hand side is equal to $\left\vert\Mb_X\right\vert$, while the cardinality of the right hand side is bounded by $\Delta_{in}$. Hence, \[\left\vert\Mb_X\right\vert=\left\vert \text{Pa}_Z\right\vert\leq\Delta_\text{in}.\]
    \end{proof}
    {\bfseries Theorem \ref{thm: correctness} (Correctness of MARVEL)}
        Suppose $\mathcal{G}$ satisfies Markov property and faithfulness with respect to $P_{\mathbf{V}}$.
	    The learned graph $\hat{\mathcal{G}}$ in Algorithm \ref{MARVEL} has the same skeleton and v-structures as $\mathcal{G}$. 
	    Therefore, the output of Algorithm \ref{MARVEL} is the essential graph corresponding to $\mathcal{G}$.
	\begin{proof}
        First, note that as it is discussed in Section \ref{sec: algo}, Markov and faithfulness assumptions hold in all iterations of Algorithm \ref{MARVEL}. Hence, throughout the execution of the algorithm, the result of CI tests are equivalent to the d-separation relations in the remaining graph. Further note that Remark \ref{rem: removables exist} implies that there always exists at least one removable variable at each iteration. Therefore, Algorithm \ref{MARVEL} never gets stuck. We now show that $\hat{\mathcal{G}}$ has the same skeleton and the same set of v-structures as $\mathcal{G}$. 
	    \begin{itemize}
	        \item \emph{Skeleton}:
	            We show $\hat{\mathcal{G}}$ contains all the edges of $\mathcal{G}$ and it has no extra edges. 
	            
	            \textit{No false positives}:
    	            The algorithm starts with an empty graph. At each iteration, new edges are only added between $X_{(i)}$ and its neighbors (line 8). Hence, no extra edges appear in $\hat{\mathcal{G}}$.
    
        	    \textit{No false negatives}:
                    Suppose there exists an edge between $X$ and $Y$ in $\mathcal{G}$. Without loss of generality, suppose $X$ gets removed before $Y$. During the iteration at which the algorithm removes $X$, it identifies all the remaining neighbors of $X$, including $Y$, and adds an edge between $X$ and its neighbors. Therefore, the skeleton of $\hat{\mathcal{G}}$ contains the edge between $X$ and $Y$. 
	        \item \emph{V-structures}:
        	    It suffices to show that for every $X,Y$ and $Z$,
        	    \[X\to Y\leftarrow Z \text{ is a v-structure in } \mathcal{G} \text{ if and only if } X\to Y\leftarrow Z \text{ is a v-structure in } \hat{\mathcal{G}}.\]
        	    
	            If side: Suppose $X\to Y\gets Z$ is a v-structure in $\hat{\mathcal{G}}$. Consider the two corresponding edges from $X$ to $Y$ and $Z$ to $Y$. Each edge is either oriented in line 11 of the algorithm, where it is a part of a v-structure. In this case, it is oriented correctly due to Lemma \ref{lem: v-structures}. Otherwise, it is oriented in line 13 when vertex $Y$ gets removed. Since $Y$ is getting removed, it is removable while both $X$ and $Z$ are present in the remaining graph. Therefore, $Y$ satisfies condition 1 of Theorem \ref{thm: removablity}. We know from the first part of the proof that $\hat{\mathcal{G}}$ has the same skeleton as $\mathcal{G}$. Hence, $X$ and $Z$ are not neighbors in $\mathcal{G}$. Neither $X$ nor $Z$ can be a child of $Y$ in $\mathcal{G}$, since otherwise, condition 1 implies that $X$ and $Z$ are neighbors. Therefore, they must both be parents of $Y$ in $\mathcal{G}$. That is, $X\to Y\gets Z$ is a v-structure in $\mathcal{G}$.
	    
	            Only if side: Suppose $X\to Y\gets Z$ is a v-structure in $\mathcal{G}$. We show that the edge between $X$ and $Y$ is oriented from $X$ to $Y$ in $\hat{\mathcal{G}}$. We orient every edge either in line 11 where this edge appears in an identified v-structure or in line 13 when the edge is undirected and we want to remove one of its endpoints. Note that all the edges are oriented during the algorithm. In the first case, the orientation of the edge is correct due to Lemma \ref{lem: v-structures}. In the latter case, i.e., if the edge between $X$ and $Y$ is oriented in line 13, one of the endpoints is discovered to be removable. It suffices to show that the removable endpoint is $Y$, and hence, the edge is oriented from $X$ to $Y$. Suppose the opposite, that is $X$ is discovered to be removable and the algorithm reaches line 13. In this case, the v-structure $X\to Y\gets Z$ must be identified in line 10 as a member of $\mathcal{V}_X^{\text{pa}}$, and the edge between $X$ and $Y$ is oriented in line 11, which contradicts the assumption that this edge is not oriented until line 13. Therefore, the removable endpoint is $Y$, and while $Y$ is removed, the undirected edge is oriented correctly from $X$ to $Y$. 
	            
	            In both cases, the edge is from $X$ to $Y$. The same arguments hold for the edge between $Z$ and $Y$. Therefore, $X\to Y\leftarrow Z$ is a v-structure in $\hat{\mathcal{G}}$. 
	    \end{itemize}
	\end{proof}
\subsection{Proofs of Section \ref{sec: complexity}}
	{\bfseries Proposition \ref{prp: complexity}}
	    Given the initial Markov boundaries, the number of CI tests required by Algorithm \ref{MARVEL} on a graph of order $p$ and maximum in-degree $\Delta_{in}$ is upper bounded by
		\begin{equation}\label{eq: complexity proof}
		    p \binom{\Delta_{\text{in}}}{2} + \frac{p}{2}\Delta_{\text{in}}(1+ 0.45\Delta_{\text{in}} )2^{\Delta_{\text{in}}}  = \mathcal{O}(p\Delta_{in}^22^{\Delta_{in}}).
		\end{equation}
	\begin{proof}
	    MARVEL performs CI tests for the following purposes:
	    \begin{enumerate}
	        \item
	            \textit{Updating Markov boundaries at the end of each iteration}: As discussed in Section \ref{sec: update Mb}, when $X$ is removed, it is enough to perform a CI test for each pair $(Y,Z)\in N_X\times N_X$. There are $\binom{|N_X|}{2}$ such pairs and from Lemma \ref{lem: MbboundForremovable} we know $|N_X|\leq|\Mb_X|\leq \Delta_\text{in}$. Hence, at most $p \binom{\Delta_\text{in}}{2}$ CI tests are performed for updating the Markov boundaries throughout the algorithm. 
	        \item
	             \textit{Testing for removability}: As discussed in Section \ref{sec: testing removability}, given Markov boundary information, for each variable $X$ we can test its removability by first finding $N_X,\Lambda_X$, then finding $\mathcal{V}_X^{\text{pa}}$, and then checking Conditions 1 and 2 of Theorem \ref{thm: removablity}. We showed that we can do this by performing at most 
	             \[K = |\Mb_X| 2^{|\Mb_X|-1} + |\Lambda_X||N_X|2^{|\Mb_X|-1} + \binom{|N_X|}{2} 2^{|\Mb_X|-2} + \vert\Lambda_X\vert \vert N_X\vert 2^{|\Mb_X|-2} \]
	             \[ = 2^{|\Mb_X|-2}(2|\Mb_X| + 3|\Lambda_X||N_X| + \binom{|N_X|}{2} ) \]
	             CI tests. From Lemma \ref{lem: MbboundForremovable} we know $|\Mb_X|\leq \Delta_\text{in}$. Suppose $N = |N_X|$. Since $|\Mb_X| =|N_X|+|\Lambda_X|$ we have:
	             \[K < 2^{2\Delta_{\text{in}}-2} (2\Delta_\text{in} + 3 (\Delta_\text{in} -N)N + \frac{N^2}{2} ) \leq 2^{\Delta_{\text{in}}-2} (2\Delta_{\text{in}}+ 0.9\Delta_{\text{in}}^2 )= \frac{1}{2}\Delta_{\text{in}}(1+ 0.45\Delta_{\text{in}} )2^{\Delta_{\text{in}}} \]
	             As we discussed in Section \ref{sec: save}, we only need to perform CI tests for testing the removability of a variable for the first time. Because we can save some information and avoid performing new CI tests in the next iterations. Hence, at most $\frac{p}{2}\Delta_{\text{in}}(1+ 0.45\Delta_{\text{in}} )2^{\Delta_{\text{in}}}$ CI tests are performed for this task.
	    \end{enumerate}
	    Summing up over the above bounds, we get the desired upper bound.
	    
	\end{proof}
	{\bfseries Theorem \ref{thm: lwrBound}}
	    The number of conditional independence tests of the form $X \independent Y\vert \mathbf{S}$ required by any constraint-based algorithm on a graph of order $p$ and maximum in-degree $\Delta_{in}$ in the worst case is lower bounded by
		\begin{equation} \label{eq: lwrbound proof}
		    \Omega(p^2+p\Delta_{in}2^{\Delta_{in}}).
		\end{equation}
	\begin{proof}
	    We first present an example that requires at least
	    \begin{equation} \label{eq: lwrbound proof1}
	        \floor{\frac{p}{\Delta_{\text{in}}+1}} \binom{\Delta_{\text{in}}+1}{2}2^{\Delta_{\text{in}}-1} =  \Omega(p\Delta_{in}2^{\Delta_{in}})
	    \end{equation}
	    CI tests to uniquely find its skeleton.  
	    
	    {\bfseries Example 1.}
        Let $d=\Delta_{\text{in}}$. Suppose the variables are denoted by $X_1,X_2,...,X_p$ and the first $(d+1)\floor{\frac{p}{d+1}}$ variables are partitioned into $\floor{\frac{p}{d+1}}$ clusters $C_1,...,C_{\floor{\frac{p}{d+1}}}$ each of size $(d+1)$. Let $\mathcal{G}$, the causal graph, have the following structure: $X_i$ is the $i$-th vertex in the causal order, the induced sub-graph over the vertices of each cluster is a complete graph, and there is no edge between vertices of different clusters. Note that the maximum in-degree of $\mathcal{G}$ is $d$. Given any algorithm $\mathcal{A}$ that performs fewer CI tests than the claimed lower bound in Equation \ref{eq: lwrbound proof1}, we provide a graph $\mathcal{H}$ such that $\mathcal{A}$ fails to tell $\mathcal{G}$ and $\mathcal{H}$ apart. 
        
        Considering the structure of $\mathcal{G}$, for an arbitrary $\mathbf{S}$, any CI test of $X_i$ and $X_j$ conditioned on $\mathbf{S}$ that $\mathcal{A}$ queries, yields dependence if $X_i$ and $X_j$ are in the same cluster and yields independence otherwise. There are $M=\floor{\frac{p}{d+1}}\binom{d+1}{2}$ pairs $\{X_i,X_j\}$ such that $X_i$ and $X_j$ are in the same cluster. As algorithm $\mathcal{A}$ performs less than $M2^{d-1}$ CI tests, there exists a pair $\{X_{i^*},X_{j^*}\}$ in a particular cluster for which algorithm $\mathcal{A}$ queries the conditional independence of $\{X_{i^*}$ and $X_{j^*}\}$ conditioned on fewer than $2^{d-1}$ sets. Without loss of generality, suppose $i^*<j^*\leq d+1$ and the corresponding cluster is $C_1=\{X_1,...,X_{d+1}\}$. As $C_1 \setminus \{X_{i^*}, X_{j^*}\}$ has $2^{d-1}$ subsets, there exists at least one subset $\mathbf{S}^* \subseteq C_1 \setminus \{X_{i^*},X_{j^*}\}$ such that for no $\mathbf{S}' \subseteq \mathbf{V}\setminus C_1$, algorithm $\mathcal{A}$ queries the result of the CI test $X_{i^*} \independent X_{j^*}\vert \mathbf{S}$ where $\mathbf{S}= \mathbf{S}^*\cup \mathbf{S}'$.
        
        \begin{figure}[h]
        	\centering
        	\tikzstyle{block} = [circle, inner sep=1.2pt, fill=black]
        	\begin{tikzpicture}[->]
        		\node[block](1){};
        		\node[block](2)[below left=0.6cm and 0.6cm of 1]{};
        		\node[block](3)[below right=0.5cm and 1.5cm of 2]{};
        		\node[block](4)[below left=1cm and 0.6cm of 2]{};
        		\draw (1) to (2);
        		\draw (1) to (3);
        		\draw (2) to (3);
        		\draw (1) to [bend left=15] (4);
        		\draw (2) to (4);
        		\draw (3) to (4);
        		\node[block,red](xi)[below right= 1.1cm and 0.2 cm of 4]{};
        		\node[block,red](xj)[below right= 0.5cm and 1.4 cm of xi]{};
        		\draw[dashed, red, line width=0.4mm] (xi) to (xj);
        		\draw (1) to [bend left=15] (xi);
        		\draw (2) to [bend left=10] (xi);
        		\draw (2) to [bend left=10] (xj);
        		\draw (3) to [bend left = 20](xi);
        		\draw (4) to (xi);
        		\draw (4) to [bend left=10](xj);
        		\draw (3) to [bend left=5](xj);
        		\draw (1) to [bend left=15](xj);
        		\node[block](5)[below left= 1cm and 0.7cm of xj]{};
        		\node[block](6)[below right= 0.6cm and 0.7cm of 5]{};
        		\node[block](7)[below left= 1cm and 0.3cm of 5]{};
        		\draw (5) to (6);
    			\draw (5) to (7);
    			\draw (6) to (7);
        		\draw[rounded corners,dashed,blue] ([xshift=-0.3cm, yshift=-0.4cm]4) rectangle (1.2cm,0.3cm);
        		\node[blue] (s)[left=1.6cm of 1]{$\mathbf{S}^*$};
        		\node (xi name)[left=0.05cm of xi]{$X_{i^*}$};
        		\node (xi name)[below right=-2mm and 0.1mm of xj]{$X_{j^*}$};
        		\draw (xi) to [bend right=15](5);
        		\draw (xi) to [bend left=10](6);
        		\draw (xi) to [bend right=10](7);
        		\draw (xj) to [bend left=8](5);
        		\draw (xj) to [bend left=10](6);
        		\draw (xj) to [bend left=18](7);
        		\draw[-] (1) to [bend left=5]([xshift=1cm, yshift=-0.7cm]1);
        		\draw[-] (1) to [bend left=5]([xshift=1cm, yshift=-0.55cm]1);
        		\draw[-] (1) to [bend left=5]([xshift=1cm, yshift=-0.4cm]1);
    			\node[rotate=-45, below right= 0.5 cm and 1cm of 1]{\ldots};
    			\node[rotate=-45, below right= 0.35 cm and 1cm of 1]{\ldots};
    			\node[rotate=-45, below right= 0.2 cm and 1cm of 1]{\ldots};
        		\draw[-] (3) to [bend left=5]([xshift=0.7cm, yshift=-0.7cm]3);
    			\draw[-] (3) to [bend left=5]([xshift=0.7cm, yshift=-0.55cm]3);
    			\draw[-] (3) to [bend left=5]([xshift=0.7cm, yshift=-0.4cm]3);
    			\node[rotate=-45, below right= 0.5 cm and 0.7cm of 3]{\ldots};
    			\node[rotate=-45, below right= 0.35 cm and 0.7cm of 3]{\ldots};
    			\node[rotate=-45, below right= 0.2 cm and 0.7cm of 3]{\ldots};
        		\draw[-] (2) to [bend right=5]([xshift=-0.8cm, yshift=-0.55cm]2);
    			\draw[-] (2) to [bend right=5]([xshift=-0.8cm, yshift=-0.4cm]2);
    			\draw[-] (2) to [bend right=5]([xshift=-0.8cm, yshift=-0.25cm]2);
    			\node[rotate=45, below left= 0.3 cm and 0.8cm of 2]{\ldots};
    			\node[rotate=45, below left= 0.15 cm and 0.8cm of 2]{\ldots};
    			\node[rotate=45, below left= 0.0 cm and 0.8cm of 2]{\ldots};
        		\draw[-] (5) to [bend left=5]([xshift=-1.1cm, yshift=0.6cm]5);
    			\draw[-] (5) to [bend left=5]([xshift=-1.1cm, yshift=0.45cm]5);
    			\draw[-] (5) to [bend left=5]([xshift=-1.1cm, yshift=0.3cm]5);
    			\draw[-] (5) to [bend left=5]([xshift=-1.1cm, yshift=0.15cm]5);
    			\node[rotate=-40, above left= 0.37 cm and 1.05cm of 5]{\ldots};
    			\node[rotate=-35, above left= 0.2 cm and 1.02cm of 5]{\ldots};
    			\node[rotate=-26, above left= 0.05 cm and 1cm of 5]{\ldots};
    			\node[rotate=-20, above left= -0.07 cm and 1cm of 5]{\ldots};
        		\draw[-] (6) to [bend right=5]([xshift=0.7cm, yshift=0.6cm]6);
    			\draw[-] (6) to [bend right=5]([xshift=0.7cm, yshift=0.45cm]6);
    			\draw[-] (6) to [bend right=5]([xshift=0.7cm, yshift=0.3cm]6);
    			\draw[-] (6) to [bend right=5]([xshift=0.7cm, yshift=0.15cm]6);
    			\node[rotate=45, above right= 0.35 cm and 0.7cm of 6]{\ldots};
    			\node[rotate=45, above right= 0.18 cm and 0.7cm of 6]{\ldots};
    			\node[rotate=42, above right= 0.05 cm and 0.7cm of 6]{\ldots};
    			\node[rotate=38, above right= -0.1 cm and 0.7cm of 6]{\ldots};
    			\draw[-] (7) to [bend left=5]([xshift=-0.7cm, yshift=0.6cm]7);
    			\draw[-] (7) to [bend left=5]([xshift=-0.7cm, yshift=0.45cm]7);
    			\draw[-] (7) to [bend left=5]([xshift=-0.7cm, yshift=0.3cm]7);
    			\draw[-] (7) to [bend left=5]([xshift=-0.7cm, yshift=0.15cm]7);
    			\node[rotate=-45, above left= 0.38 cm and 0.66cm of 7]{\ldots};
    			\node[rotate=-45, above left= 0.18 cm and 0.65cm of 7]{\ldots};
    			\node[rotate=-40, above left= 0.03 cm and 0.63cm of 7]{\ldots};
    			\node[rotate=-30, above left= -0.1 cm and 0.6cm of 7]{\ldots};
        	\end{tikzpicture}
        	\caption{Cluster $C_1$ in the graph $\mathcal{H}$}
        	\label{fig: prp5}
        \end{figure}
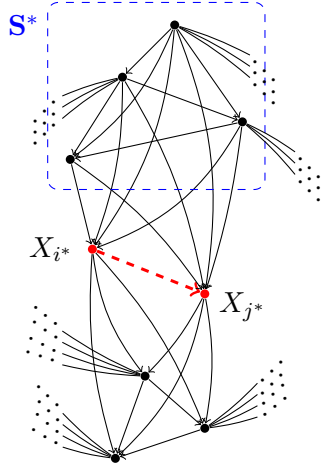
        Now we provide the graph $\mathcal{H}$ as follows. It has the same structure and causal order as $\mathcal{G}$, except over the vertices of $C_1$. As depicted in Figure \ref{fig: prp5}, the vertices in $\mathbf{S}^*$ in an arbitrary order, form the first vertices of $C_1$ in the causal order. These vertices are followed by ${X_i^*}$ and ${X_j^*}$ and then the rest of the vertices of $C_1$, again in an arbitrary order. As for the skeleton, all pairs in $C_1$ are connected to each other except for $({X_i^*},{X_j^*})$. 
        
        It suffices to show that $\mathcal{H}$ is consistent with all the CI tests that $\mathcal{A}$ performs. For an arbitrary CI test between $X_i$ and $X_j$ conditioned on $\mathbf{S}$, if $\{X_i,X_j \} \neq \{X_{i^*},X_{j^*}\}$ the test yields the same result for $\mathcal{G}$ and $\mathcal{H}$. If $\{X_i,X_j \} = \{X_{i^*},X_{j^*}\}$, the result always yields dependence as there is an edge between ${X_i^*}$ and ${X_j^*}$ in $\mathcal{G}$. To prove the consistency, we have to show that none of the sets $\mathbf{S}$ among the conditioning sets of the CI tests that $\mathcal{A}$ performs, d-separates ${X_i^*}$ and ${X_j^*}$. The structure of $\mathcal{H}$ implies that $\mathbf{S}^*$ is the unique subset of $C_1$ which d-separates ${X_i^*}$ and ${X_j^*}$. Since there is no edge between vertices of different clusters, if $\mathbf{S}$ d-separates ${X_i^*}$ and ${X_j^*}$ in $\mathcal{H}$ then $\mathbf{S} \cap C_1$ must be equal to $\mathbf{S}^*$. As mentioned above, there is no such CI test performed in $\mathcal{A}$. Therefore, $\mathcal{H}$ is consistent with the results of the CI tests, and $\mathcal{A}$ cannot uniquely determine the skeleton of $\mathcal{G}$.
        
        We now provide another example that requires $\Omega(p^2)$ CI tests to uniquely find its skeleton.
        
        {\bfseries Example 2.}
        Suppose the causal graph $\mathcal{G}$ is an empty graph with $p$ vertices.
        Given any algorithm $\mathcal{A}$ that performs fewer CI tests than $\binom{p}{2}$, we provide a graph $\mathcal{H}$ such that $\mathcal{A}$ fails to tell $\mathcal{G}$ and $\mathcal{H}$ apart. 
        
        As algorithm $\mathcal{A}$ performs less than $\binom{p}{2}$ CI tests, there exists a pair $\{X_{i^*},X_{j^*}\}$ for which algorithm $\mathcal{A}$ does not query the conditional independence of $X_{i^*}$ and $X_{j^*}$ conditioned any set.
        Let $\mathcal{H}$ be the graph with $p$ vertices and only one edge $(X_{i^*},X_{j^*})$. 
        Note that, all of the performed CI tests yield independence since $\mathcal{G}$ is an empty graph. 
        Therefore, $\mathcal{H}$ is consistent with the results of the CI tests, and $\mathcal{A}$ cannot uniquely determine the skeleton of $\mathcal{G}$.
        
        Combining the lower bounds of the above examples, we get the desired lower bound of Equation \ref{eq: lwrbound proof}.

	\end{proof}

\clearpage
\bibliography{bibliography}

\end{document}